\documentclass{WileyMSP-template}
\setcounter{secnumdepth}{4} 

\usepackage{moreverb,url}

\usepackage[colorlinks,bookmarksopen,bookmarksnumbered,citecolor=black,urlcolor=black]{hyperref}

\newcommand\BibTeX{{\rmfamily B\kern-.05em \textsc{i\kern-.025em b}\kern-.08em
T\kern-.1667em\lower.7ex\hbox{E}\kern-.125emX}}
\usepackage[numbers]{natbib}
\usepackage{titlesec}
\usepackage{lipsum}
\usepackage{amsbsy}
\usepackage{amssymb}
\usepackage{amsmath}
\usepackage{color}
\usepackage[colorinlistoftodos,prependcaption,textsize=tiny]{todonotes}
\usepackage{subfig}
\usepackage{graphicx}
\usepackage{svg}
\usepackage{hyperref}
\usepackage{array}
\usepackage{makecell}
\usepackage{pifont}

\usepackage{enumitem}
\usepackage[export]{adjustbox}
\usepackage[flushleft]{threeparttable}
\usepackage{booktabs}
\usepackage{mathtools}
\usepackage{float}
\usepackage{mathrsfs}
\usepackage{microtype}
\usepackage{ragged2e}
\justifying

\PassOptionsToPackage{hyphens}{url}\usepackage{hyperref}
\hypersetup{
     colorlinks=true,
     linkcolor=black,
     filecolor=black,      
     }
\usepackage{mathtools}
\mathtoolsset{showonlyrefs} 
\usepackage{amsthm}
\newtheorem{theorem}{Theorem}
\newtheorem{proposition}{Proposition}
\newtheorem{lemma}{Lemma}
\newtheorem{corollary}{Corollary}
\theoremstyle{definition}
\newtheorem{definition}{Definition}
\theoremstyle{definition}

\let\citet\cite
\let\citep\cite


\DeclareMathOperator*{\argmin}{arg\,min}

\usepackage{titlesec}
\renewcommand{\theparagraph}{\alph{paragraph})}
\titleformat{\paragraph}[hang]{\normalfont\normalsize\itshape}{\hspace{1em}\theparagraph}{0.5em}{}
\titlespacing*{\paragraph}{0pt}{\baselineskip}{0em}

\let\oldsubsubsection\subsubsection

\renewcommand{\subsubsection}[1]{\oldsubsubsection{#1}\mbox{}\par}

\newcommand{\cmark}{\ding{51}}%
\newcommand{\xmark}{\ding{55}}%

\let\oldmaketitle\maketitle 
\renewcommand{\maketitle}{\oldmaketitle\setcounter{footnote}{1}}

\newcommand\lword[1]{\leavevmode\nobreak\hskip0pt plus\linewidth\penalty50\hskip0pt plus-\linewidth\nobreak{#1}}
\begin{document}

\pagestyle{fancy}
\rhead{\vspace{0.3cm}}

\twocolumn[
  \begin{@twocolumnfalse}
    \title{PUMA: Deep Metric Imitation Learning for Stable \\Motion Primitives}
    \maketitle

    \author{Rodrigo P\'{e}rez-Dattari$^1$,}
    \author{Cosimo Della Santina$^1$,}
    \author{and Jens Kober$^1$}
    
    \begin{affiliations}
        $^1$Department of Cognitive Robotics, Delft University of Technology, The Netherlands\\
    \end{affiliations}
    
  \end{@twocolumnfalse}
]





\begin{abstract}
\textbf{Abstract---} Imitation Learning (IL) is a powerful technique for intuitive robotic programming. However, ensuring the reliability of learned behaviors remains a challenge. In the context of reaching motions, a robot should consistently reach its goal, regardless of its initial conditions. To meet this requirement, IL methods often employ specialized function approximators that guarantee this property by construction. Although effective, these approaches come with some limitations: 1) they are typically restricted in the range of motions they can model, resulting in suboptimal IL capabilities, and 2) they require explicit extensions to account for the geometry of motions that consider orientations.
To address these challenges, we introduce a novel stability loss function, drawing inspiration from the triplet loss used in the deep metric learning literature. This loss does not constrain the DNN's architecture and enables learning policies that yield accurate results. Furthermore, it is not restricted to a specific state space geometry; therefore, it can easily incorporate the geometry of the robot's state space. We provide a proof of the stability properties induced by this loss and empirically validate our method in various settings. These settings include Euclidean and non-Euclidean state spaces, as well as first-order and second-order motions, both in simulation and with real robots. More details about the experimental results can be found in: \url{https://youtu.be/ZWKLGntCI6w}.
\end{abstract}

\keywords{Imitation Learning, Deep Metric Learning, Dynamical Systems, Motion Primitives, Deep Neural Networks, non-Euclidean geometry}

\section{Introduction}
\begin{figure}[t]
    \centering
    \includegraphics[width=0.9\columnwidth]{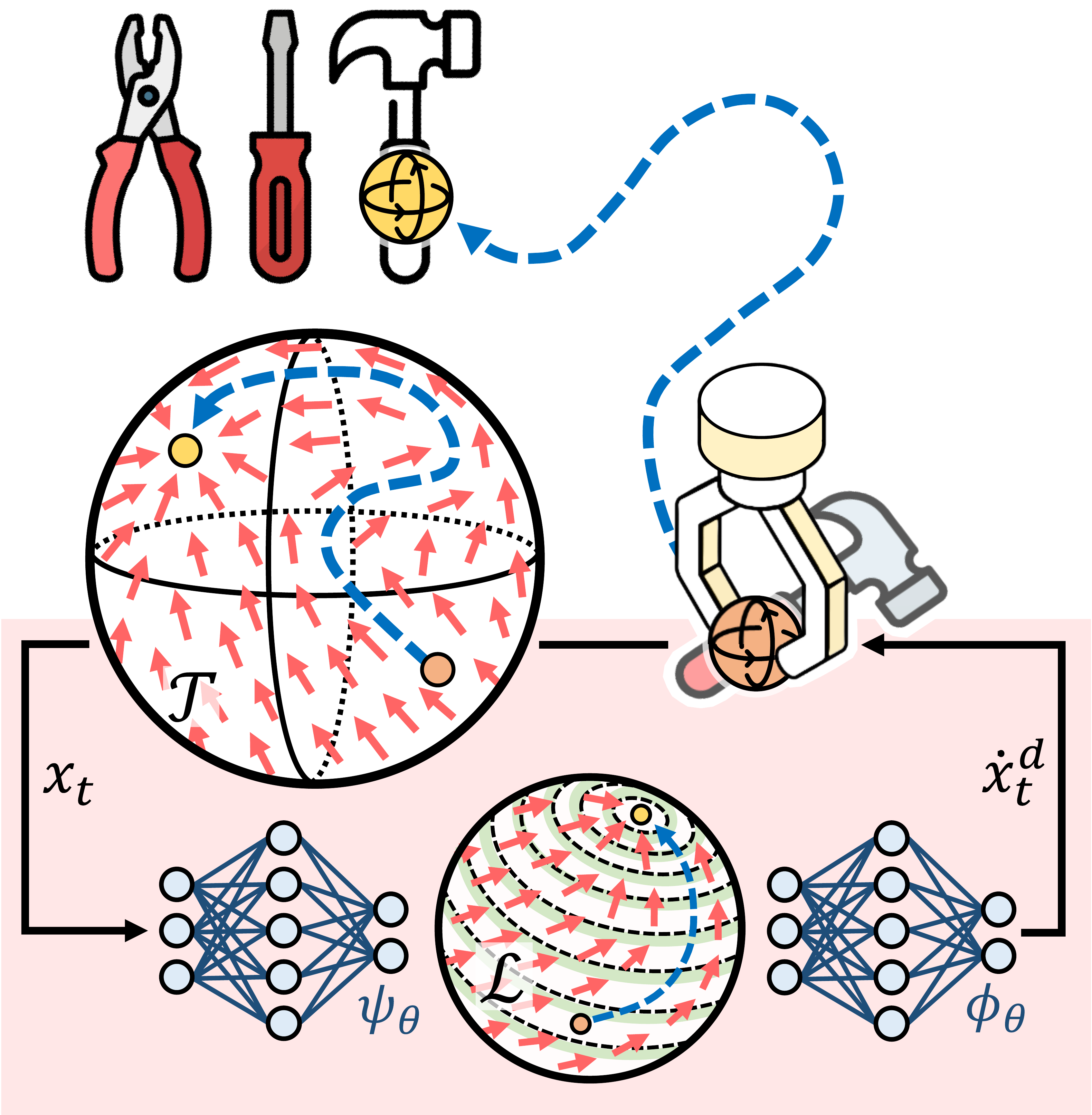}
    \caption{Motion learned using the proposed framework. The blue trajectory in the task space $\mathcal{T}$ demonstrates the evolution of the robot's end effector state $x_{t}$ when represented in a spherical manifold. The evolution of this trajectory is governed by the dynamical system $\dot{x}_{t}=\phi_{\theta}(\psi_{\theta}(x_{t}))$, depicted as a vector field of red arrows in the remaining of the space. Through Deep Metric Learning, this system is stabilized by deriving a simpler representation in the latent space $\mathcal{L}$.}
    \label{fig:cover}
\end{figure}

Imitation Learning (IL) provides a powerful framework for the intuitive programming of robotic systems. Its strength lies in its ability to leverage human-like learning methodologies, such as demonstrations and corrections, making it accessible to non-robotics experts. This attribute significantly reduces the resources needed to build robotic systems. However, the data-driven nature of these methodologies presents a challenge: providing guarantees about the learned behaviors.

In the context of reaching motions, it is crucial for the robot's motions to consistently reach the intended target, irrespective of the robot's initial conditions. Thus, modeling motions as dynamical systems proves beneficial. This approach turns the problem into a question of ensuring \emph{global asymptotic stability} (or \emph{stability}, for short) at the goal, and tools from dynamical system theory can then be applied to guarantee this property.

Numerous methods have been proposed to ensure stability in motions represented by dynamical systems. However, they often exhibit at least one of the following limitations: 1) constraining the structure of their function approximators, and/or 2) being designed with the assumption that the robot's state space is Euclidean. To elaborate:

\begin{enumerate}[leftmargin=*]
    \item \textbf{Constrained function approximators.} To ensure stability guarantees, methods often constrain the structure of their function approximators. For instance, some approaches necessitate invertibility \citep{perrin2016fast,rana2020euclideanizing,urain2020imitationflow}, while others require positive or negative definiteness \citep{khansari2011learning,khansari2014learning,lemme2014neural}. However, these methods do not enable the full exploitation of modern Deep Neural Network (DNN) architectures, as these constraints are not typically present in DNNs. This limitation hinders their broader application in more complex models, where integrating these constraints is challenging. Furthermore, inherently constraining function approximators can overly restrict the range of solutions to which they can converge, resulting in less flexible models than necessary. This leads to suboptimal IL capabilities. In our context, this problem is known as the \emph{stability versus accuracy dilemma} \cite{figueroa2022locally}.
    
    \item \textbf{Euclidean assumption.} Learned motions must integrate with the geometry of the space used to represent a robot's state. For example, the end-effector of a manipulator should always reach the intended target in both position and orientation space. However, orientations are often represented in non-Euclidean spaces, such as $SO(3)$ or $\mathcal{S}^{3}$. This is because Euclidean representations like Euler angles may not always provide a continuous description of motions that are inherently continuous\footnote{This issue stems from singularities and non-unique representations.} \citep{ude2014orientation}. Such continuity is often a requirement for motions modeled as dynamical systems. Furthermore, the generalization capabilities of function approximations are compromised by non-continuous representations. To enable proper generalization, states that are close in the real world should be represented as closely related in the function approximator's input, i.e., the state space representation.

    Nevertheless, previous methods for stable motion generation have been initially designed under the assumption that the state space is Euclidean \citep{khansari2011learning,rana2020euclideanizing,urain2020imitationflow,perez2023stable}. Consequently, some have later been explicitly adapted to account for the geometry of motions that consider orientations \citep{zhang2022learning,urain2022learning,saveriano2023learning}, resulting in rather convoluted learning frameworks. 
\end{enumerate}

In this work, we present a DNN framework capable of learning accurate, stable motions in state spaces with arbitrary geometries without constraining the DNN's architecture. To accomplish this, we introduce a novel loss function that repurposes the \emph{triplet loss}, commonly used in deep metric learning literature \citep{schroff2015facenet,kaya2019deep}. We prove that this loss imposes conditions on the DNN's latent space that enforce the learned dynamical system to have a globally asymptotically stable equilibrium at the motion's goal state (see Fig.~\ref{fig:cover}). To account for the geometry of the state space, it is sufficient to consider its corresponding metric during the computation of the loss, the choice of which depends on the specific task at hand. We validate our method in various settings, including Euclidean, non-Euclidean, first-order, and second-order motions. Additionally, real-world experiments controlling the 6-dimensional pose (x-y-z position and unit quaternion orientation) of a robot manipulator's end effector demonstrate the method's practical applicability and potential.

The rest of this paper offers a thorough discussion of our developments. Following a review of the relevant literature, we delve into foundational concepts and problem formulation. We then present the details of our methodology, provide a proof regarding the stability of the learned motion, and discuss the integration of the triplet loss function within the context of non-Euclidean state representations. We validate our approach through several experiments and conclude by considering potential directions for future research based on our findings.

\section{Related Works}\label{sec:related_works}
Three categories of works are relevant to this paper. First, there are papers that focus on learning stable motions, assuming these motions occur in Euclidean state spaces. Second, others explore learning motions in non-Euclidean state spaces, but do not consider the stability of the learned motions. Finally, a third set of papers addresses both learning motions in non-Euclidean state spaces and their stability. Importantly, 

In every case, works use either \emph{time-varying (non-autono\-mous)} or \emph{time-invariant (autonomous)} dynamical systems for motion modeling. In time-varying systems, evolution explicitly depends on time (or a phase). Conversely, time-invariant systems do not directly depend on time; instead, they rely on their time-varying input (i.e., the state of the system). The property of a system being time-invariant or not dictates the strategies we can use to ensure its stability. Thus, making a distinction between these systems is important. Notably, both types of formulations have been shown to be complementary in the context of IL \citep{calinon2011encoding,ravichandar2020recent}.

This work focuses on time-invariant dynamical systems. Consequently, although we consider both methodologies for motion learning, we delve deeper into the literature on time-invariant systems. Furthermore, while we concentrate on methods that ensure asymptotic stability, we acknowledge that there are studies imposing other conditions, such as via-point conditioning \cite{paraschos2013probabilistic, seker2019conditional}, which can be significantly relevant in some robotic contexts.

\subsection{Stability in Euclidean State Spaces}
Regarding time-varying dynamical systems, a seminal work in IL that addresses the problem of learning stable motions introduces Dynamical Movement Primitives (DMPs) \citep{ijspeert2013dynamical}. DMPs take advantage of the time-dependency (via the phase of the \emph{canonical system}) of the dynamical system to make it evolve into a simple and well-understood system as time goes to infinity. The simple system is designed to be stable by construction. As a consequence, the stability of the learned motions can be guaranteed. This concept has been extended in multiple ways, for instance through probabilistic formulations \citep{li2023prodmp,amor2014interaction}, or adapted to the context of DNNs \citep{pervez2017learning,ridge2020training,bahl2020neural}.

Conversely, IL approaches based on time-invariant dynamical systems often constrain the function approximator used to model the dynamical systems. Such constraints ensure stability by construction. One seminal work introduces the Stable Estimator of Dynamical Systems (SEDS) \citep{khansari2011learning}. This approach imposes constraints on the structure of a Gaussian Mixture Regression (GMR) such that the conditions for Lyapunov stability are always met. Multiple extensions of SEDS have been proposed, for instance by using physically-consistent priors \citep{figueroa2018physically}, contraction theory \citep{ravichandar2017learning} or diffeomorphisms \citep{neumann2015learning}. 

Other works propose explicitly learning Lyapunov functions that are consistent with the demonstrations. These functions are then used to correct the transitions learned by the dynamical systems, ensuring that they are always stable according to the learned Lyapunov functions \citep{khansari2014learning,lemme2014neural,duan2017fast}. Moreover, some papers have employed concepts such as contraction metrics \citep{ravichandar2015learning,blocher2017learning} and diffeomorphisms \citep{perrin2016fast,rana2020euclideanizing,urain2020imitationflow, zhi2022diffeomorphic} to impose stability, in the sense of Lyapunov, in time-invariant dynamical systems.

Understandably, all of these methods constrain some part of their learning framework to ensure stability. From one perspective, this is advantageous as it guarantees stability. However, in many cases, this comes at the expense of reduced accuracy in the learned motions. Notably, some recent methods have managed to mitigate this loss in accuracy \citep{rana2020euclideanizing, urain2020imitationflow}. Nevertheless, they are still limited in terms of the family of models that can be used with these frameworks, which harms their scalability. 

In previous work \citep{perez2023stable}, we addressed this issue by employing tools from the deep metric learning literature \citep{kaya2019deep}. There, we introduced CONDOR, which uses a \emph{contrastive loss} to enforce stability in learned motions through the optimization process of a DNN. This approach proved effective in learning stable, accurate, and scalable motions from human demonstrations. However, this method presents two important limitations. First, it requires the design of a stable and well-understood system for computing the contrastive loss (referred to as $f^{\mathcal{L}}$ in Sec. \ref{sec:stability_conditions}), which can significantly impact learning performance. Second, similar to the other methods described in this subsection, CONDOR is limited to operating within Euclidean state spaces.

In this work, we introduce a novel deep metric learning loss that ensures stability while addressing the limitations of \citet{perez2023stable}, specifically, the need for the function $f^{\mathcal{L}}$ and the inability to learn motions in non-Euclidean manifolds. Moreover, this loss requires weaker conditions for imposing stability, leading to, for instance, more robust performance when learning second-order dynamical systems.

\subsection{Stability in Non-Euclidean State Spaces}
As noted previously, it is crucial to consider the geometry of our state spaces when learning motions requiring pose control. Consequently, many studies have adapted methods that originally assumed data from Euclidean spaces to work with data from non-Euclidean manifolds, such as $SO(3)$ and $\mathcal{S}^{3}$. In the context of IL, pioneering approaches utilized time-varying dynamical systems (extensions of DMPs) to incorporate the geometry of orientation representations into their models \citep{pastor2011online,ude2014orientation,koutras2020correct}. Because DMPs inherently ensure stability, these methods are stable as well. 

In contrast, although several geometry-aware IL approaches based on time-invariant dynamical systems have been introduced, such as Gaussian process regressions \citep{lang2017computationally,arduengo2023gaussian}, GMRs \citep{silverio2015learning,havoutis2019learning,zeestraten2017approach,kim2017gaussian}, and kernelized movement primitives \citep{huang2020toward,abu2021probabilistic}, such methods are not inherently stable. As a result, in these cases, models need to be explicitly endowed with stability properties. This limitation has been addressed in recent works, where \citet{ravichandar2019learning} uses GMRs and employs contraction theory to ensure stability considering the robot's orientation geometry. Furthermore, recent methods have extended the use of diffeomorphism-based techniques for stability to generate motions in non-Euclidean manifolds as well \citep{zhang2022learning,urain2022learning,saveriano2023learning}.

These diffeomorphism-based methods are of particular interest to our work, as our method is grounded in similar concepts for achieving stability. In both approaches, we transfer the stability properties of a simple system in the latent space of a DNN to the dynamical system that models motion in task space. However, unlike these methods, our approach learns this property, whereas the other works constrain the DNN structure to ensure its satisfaction. Moreover, our method is not constrained to diffeomorphic solutions for transferring the stability properties of the simple system to task space. Lastly, our method allows us to seamlessly incorporate the geometrical aspects of the robot's state space into the learned model, as this integration is factored into the DNN's optimization process.
\section{Preliminaries}
\label{sec:preliminaries}
\subsection{Dynamical Systems for Reaching Tasks}
In this work, we model motions as nonlinear time-invariant dynamical systems represented by 
\begin{equation}
\label{eq:ds}
    \dot{x}_{t} = f(x_{t}),  
\end{equation}
where $x_{t} \in \mathcal{T}$, with $\mathcal{T} \subseteq \mathbb{R}^{n}$ being the task space, is the robot's state, and ${f: \mathcal{T} \to \mathbb{R}^{n}}$ is a differentiable function. Additionally, the subscript $t$ indicates the time instance to which the state corresponds.
Since we are interested in solving reaching tasks, we aim to construct systems with a globally asymptotically stable equilibrium at a goal state $x_{\mathrm{g}}$. This implies that
\begin{equation}\label{eq:stability}
    \lim_{t \rightarrow \infty} ||x_{\mathrm{g}} -  x_{t}||=0,\hspace{0.5cm} \forall x_{t} \in \mathcal{T}.
\end{equation}

\subsection{Problem Formulation}
We consider a robot learning a reaching motion in the space $\mathcal{T}$ towards the goal state $x_{\mathrm{g}} \in \mathcal{T}$. Based on a set of demonstrations $\mathcal{D}$, the robot is expected to imitate the behavior shown in the demonstrations while always reaching $x_{\mathrm{g}}$, regardless of its initial state. The dataset $\mathcal{D}$ contains $N$ trajectories $\tau$. These trajectories show the evolution of a dynamical system's state $x_{t}$ when starting from some initial condition $x_{0}$.

We assume that the demonstrations are drawn from an \emph{optimal} distribution over trajectories, denoted as $p^*(\tau)$, that adheres to the \emph{optimal} dynamical system $f^*$. Here, ``optimal" refers to the behavior that the demonstrator deems best. 

The robot's motion follows the parametrized system $f^{\mathcal{T}}_{\theta}$, inducing the distribution $p_{\theta}(\tau)$, where $\theta$ is a parameter vector. Then, the objective is to find the vector $\theta^{*}$ that minimizes the difference, expressed as the (forward) Kullback-Leibler divergence, between $p_{\theta}(\tau)$ and $p^*(\tau)$, while ensuring that $x_{\mathrm{g}}$ is a globally asymptotically stable equilibrium:
\begin{subequations}\label{eqn:problem_formulation}
\begin{align}
\theta^* = \argmin_{\theta \in \Theta} \quad & D_{\text{KL}}\left(p^{*}(\tau) ||  p_{\theta}(\tau)\right) \label{eq:divergence}\\ 
\text { s.t. } &\lim_{t \rightarrow \infty} d(x_{\mathrm{g}}, x_{t})=0, \forall x_{t} \in \mathcal{T}. 
\end{align}
\end{subequations}
Here, $\Theta$ is the parameter space of a DNN, and $d(\cdot,\cdot)$ is a distance function.

\subsection{Stability Conditions}
\label{sec:stability_conditions}

In \citet{perez2023stable}, the \emph{stability conditions} are formulated to enforce stability in $f^{\mathcal{T}}_{\theta}$.
This is achieved by ensuring that $f^{\mathcal{T}}_{\theta}$ inherits the stability properties of a simple and stable system, referred to as $f^{\mathcal{L}}$. Thus, if $f^{\mathcal{L}}$ is asymptotically stable, $f^{\mathcal{T}}_{\theta}$ will also be asymptotically stable. 
To accomplish this, the system $f^{\mathcal{L}}$ is designed to define the evolution of a state $y_{t}$ with an initial condition derived from mapping $x_{0}$ to the output of a hidden layer in the DNN that parameterizes $f^{\mathcal{T}}_{\theta}$. As a result, the dynamical system $f^{\mathcal{T}}_{\theta}$ is expressed as a composition of two functions, $\psi_{\theta}$ and $\phi_{\theta}$,
\begin{equation}
    \dot{x}_{t}=f^{\mathcal{T}}_{\theta}(x_{t}) = \phi_{\theta}(\psi_{\theta}(x_{t})).
\end{equation}
Here, $f^{\mathcal{T}}_{\theta}$ is a standard DNN with $L$ layers. $\psi_{\theta}$ denotes layers $1,...,l$, and $\phi_{\theta}$ layers $l+1,...,L$. We define the output of layer $l$ as the latent space $\mathcal{L} \subset \mathbb{R}^{m}$, which is where $y_{t}$ resides, i.e., $y_{t} \in \mathcal{L}$. Note that $\mathcal{L} \subset \mathbb{R}^{m}$ implies the dimensionality of the vectors used to represent $\mathcal{T}$ and $\mathcal{L}$ \textbf{does not need to be the same}. Moreover, for simplicity, although we use the same $\theta$ notation for both $\psi_{\theta}$ and $\phi_{\theta}$, each symbol actually refers to a different subset of parameters within $\theta$. These subsets together form the full parameter set in $f^{\mathcal{T}}_{\theta}$.

Additionally, we introduce a third dynamical system. This system denotes the evolution in $\mathcal{L}$ of the states visited by $f^{\mathcal{T}}_{\theta}$ when mapped using $\psi_{\theta}$, which yields the relationship
\begin{equation}
    \dot{y}_{t} = f_{\theta}^{\mathcal{T}\to\mathcal{L}}(x_{t}) = \frac{\partial \psi_{\theta}(x_{t})}{\partial t}.
\end{equation}
Fig.~\ref{fig:condor_structure} provides an example of the introduced dynamical systems.

Then, the stability conditions of \citet{perez2023stable} can be written as:
\begin{theorem}[Stability conditions: v1]
\label{theo:condor_stability}
Let $f_{\theta}^{\mathcal{T}}$, $f_{\theta}^{\mathcal{T} \to \mathcal{L}}$ and $f^{\mathcal{L}}$ be the introduced dynamical systems. Then, in the region $\mathcal{T}$, $x_{\mathrm{g}}$ is a globally asymptotically stable equilibrium of $f_{\theta}^{\mathcal{T}}$ if, $\forall x_{t} \in \mathcal{T}$,:
\begin{enumerate}[font=\itshape]
    \item $f_{\theta}^{\mathcal{T}\to\mathcal{L}}(x_{t})=f^{\mathcal{L}}(y_{t})$,
    \item $\psi_{\theta}(x_{t})=y_{\mathrm{g}} \Rightarrow x_{t}=x_{\mathrm{g}}$.
\end{enumerate}
\end{theorem}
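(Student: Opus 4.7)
My plan is to prove the theorem by a Lyapunov argument in which the Lyapunov function for $f^{\mathcal{T}}_\theta$ is obtained by pulling back, through the encoder $\psi_\theta$, a Lyapunov function that already exists for the simple latent-space system $f^{\mathcal{L}}$. Since $f^{\mathcal{L}}$ is globally asymptotically stable at $y_{\mathrm{g}}$ (which is forced to equal $\psi_\theta(x_{\mathrm{g}})$, because condition 1 evaluated at $x_{\mathrm{g}}$ makes $y_{\mathrm{g}}$ an equilibrium of $f^{\mathcal{L}}$), a converse Lyapunov theorem supplies a smooth $V^{\mathcal{L}}:\mathcal{L}\to\mathbb{R}_{\geq 0}$ with $V^{\mathcal{L}}(y_{\mathrm{g}})=0$, $V^{\mathcal{L}}(y)>0$ otherwise, and $\nabla V^{\mathcal{L}}(y)\cdot f^{\mathcal{L}}(y)<0$ for $y\neq y_{\mathrm{g}}$. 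I would then set $V^{\mathcal{T}}(x):=V^{\mathcal{L}}(\psi_\theta(x))$ and verify the three Lyapunov properties.

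Positive definiteness is essentially condition 2: $V^{\mathcal{T}}(x_{\mathrm{g}})=0$ is immediate, and for any $x\neq x_{\mathrm{g}}$, condition 2 precludes $\psi_\theta(x)=y_{\mathrm{g}}$, so $V^{\mathcal{T}}(x)>0$. The strict-decay property uses condition 1 together with the chain rule:
\begin{equation*}
\dot{V}^{\mathcal{T}}(x_t) \;=\; \nabla V^{\mathcal{L}}(y_t)\cdot f_\theta^{\mathcal{T}\to\mathcal{L}}(x_t) \;=\; \nabla V^{\mathcal{L}}(y_t)\cdot f^{\mathcal{L}}(y_t) \;<\; 0
\end{equation*}
whenever $y_t\neq y_{\mathrm{g}}$, and by condition 2 this is equivalent to $x_t\neq x_{\mathrm{g}}$. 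Standard Lyapunov theory then delivers asymptotic stability of $x_{\mathrm{g}}$ for $f^{\mathcal{T}}_\theta$.

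The step I expect to be the main obstacle is upgrading this to \emph{global} asymptotic stability on all of $\mathcal{T}$. For that I need radial unboundedness (properness) of $V^{\mathcal{T}}$, which is not entailed by conditions 1 and 2 alone: if $\psi_\theta$ compresses an unbounded sequence in $\mathcal{T}$ into a compact subset of $\mathcal{L}$, then $V^{\mathcal{T}}$ fails to be radially unbounded. I would address this either by (i) adding the hypothesis that $\psi_\theta$ is proper, after which Barbashin–Krasovskii closes the argument, or (ii) arguing directly: condition 1 implies $y_t=\psi_\theta(x_t)$ evolves under $f^{\mathcal{L}}$ and therefore $y_t\to y_{\mathrm{g}}$, so any $\omega$-limit point $x_\infty$ of $x_t$ must satisfy $\psi_\theta(x_\infty)=y_{\mathrm{g}}$ by continuity of $\psi_\theta$, whence condition 2 forces $x_\infty=x_{\mathrm{g}}$. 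Route (ii) still needs trajectories of $f^{\mathcal{T}}_\theta$ to be bounded, which again pushes some properness assumption on $\psi_\theta$ — so I would flag this regularity condition explicitly as part of a rigorous statement of the theorem.
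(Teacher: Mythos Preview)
The paper does not actually prove Theorem~1: it is quoted from the prior work on CONDOR and then immediately relaxed to Theorem~2, which is likewise stated without proof. The only detailed stability argument in the paper is for Theorem~3 (the surrogate conditions), and there the technique is not a Lyapunov function but the explicit construction of a class-$\mathcal{KL}$ comparison function $\beta$ bounding $\|y_{\mathrm g}-y_t\|$ (Proposition~1, Appendix~A). So there is no proof of this particular statement in the paper to compare against.

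Your Lyapunov pull-back is nonetheless the natural argument and is correct as far as it goes. Route~(ii) in particular matches the paper's informal reading of the conditions (and its reduction of Theorem~1 to Theorem~2): condition~1 makes the latent trajectory $y_t=\psi_\theta(x_t)$ a genuine trajectory of the stable system $f^{\mathcal{L}}$, hence $y_t\to y_{\mathrm g}$, and condition~2 then forces every $\omega$-limit point of $x_t$ to be $x_{\mathrm g}$. The properness gap you flag is real in general, but in this paper it is closed not by asking $\psi_\theta$ to be proper but by taking $\mathcal{T}$ to be a bounded region that is made positively invariant for $f^{\mathcal{T}}_\theta$ (see the discussion of boundary conditions in Section~\ref{sec:boundary_conditions}). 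Trajectories are then automatically precompact, the $\omega$-limit set is nonempty, and your argument concludes without needing radial unboundedness of $V^{\mathcal{T}}$.
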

These conditions imply that if the system $f_{\theta}^{\mathcal{T}\to\mathcal{L}}$ behaves like $f^{\mathcal{L}}$, and any point other than $x_{\mathrm{g}}$ is excluded from mapping to the latent goal $y_{\mathrm{g}}$, then $f_{\theta}^{\mathcal{T}}$ is stable. Note that the latent goal is defined by the mapping $\psi_{\theta}(x_{\mathrm{g}}) = y_{\mathrm{g}}$.

In this work, we reformulate these conditions and propose a novel way to optimize them. This adaptation endows the learning framework with increased flexibility, enabling it to tackle a wider array of problems (e.g., non-Euclidean state spaces) and achieve improved performance.

\begin{figure}[t]
    \centering
    \includegraphics[width=0.9\columnwidth]{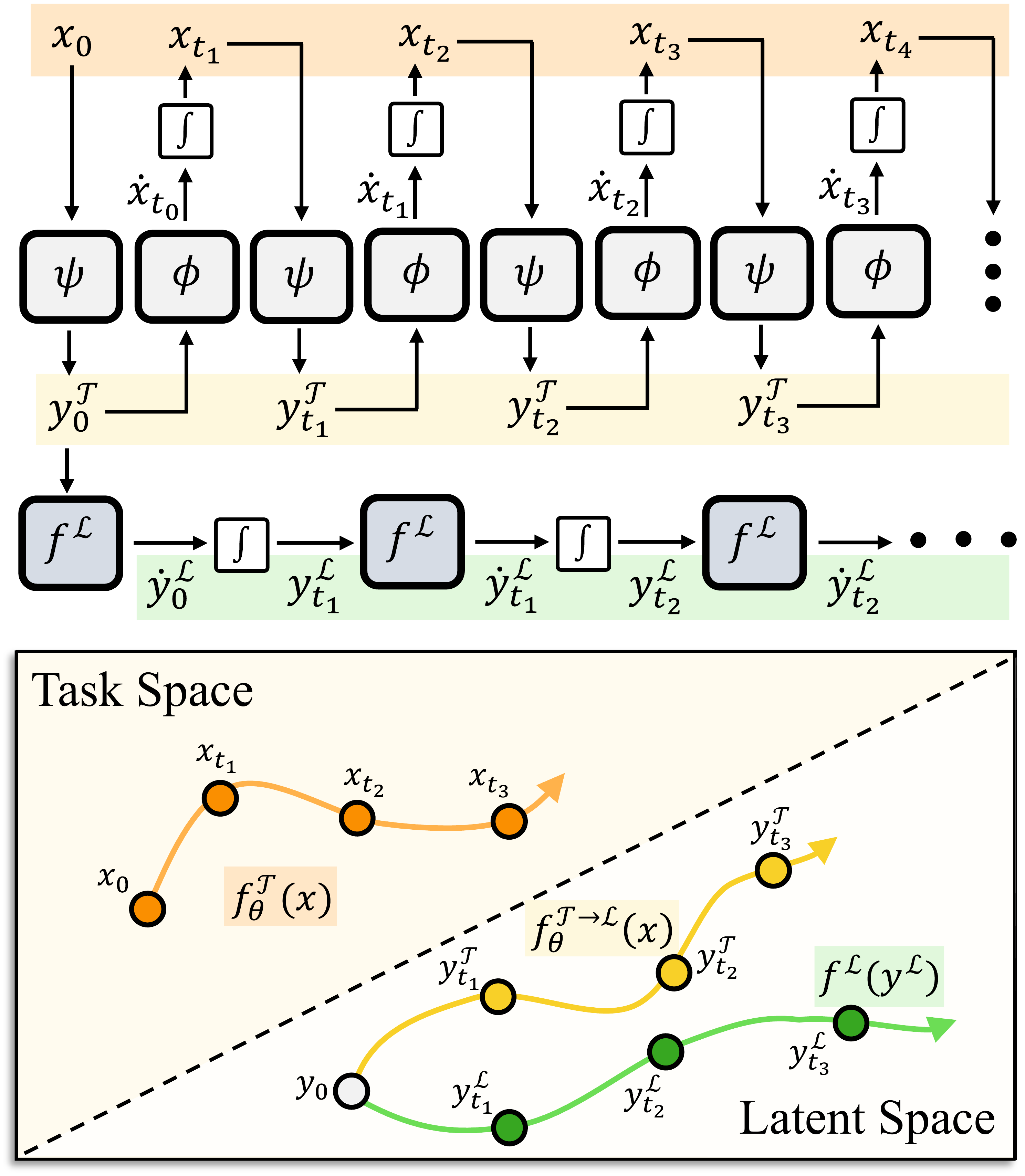}
    \caption{Example of trajectories generated by simulating the systems $f^{\mathcal{T}}_{\theta}$, $f^{\mathcal{T}\to\mathcal{L}}_{\theta}$ and $f^{\mathcal{L}}$ for different time instants. The stability conditions are not met in this case, as $f^{\mathcal{T}\to\mathcal{L}}_{\theta}$ differs from $f^{\mathcal{L}}$.}
    \label{fig:condor_structure}
\end{figure}
\subsection{Deep Metric Learning: the Triplet Loss}
\label{sec:triplet_loss}
Commonly employed in the Deep Metric Learning literature, the \emph{triplet loss} \citep{schroff2015facenet} has been utilized for learning and structuring latent state representations \citep{kaya2019deep}. Its function is to cluster similar observations together and differentiate dissimilar ones within the latent space of a DNN. In this work, however, the triplet loss is used differently. Although we continue to use this loss to impose a certain structure on the DNN's latent space, its purpose is to enforce the stability conditions, thereby ensuring that $f^{\mathcal{T}}_{\theta}$ is stable.

Let us recall the triplet loss:
\begin{equation}
    \ell_{\text{triplet}} = \max(0, m + d(a, p) - d(a, n)),
\label{eq:triplet_loss}
\end{equation}
where $m \in \mathbb{R}_{>0}$ is the margin, $a$ is the anchor sample, $p$ the positive sample, and $n$ the negative sample This loss enforces positive samples to be at least $m$ distance closer to the anchor than the negative samples. Notably, this is enough for the loss to become zero, i.e., it does not require the anchor and positive samples to have the same value.

\subsection{Stability Analysis through Comparison Functions}
In this work, we employ comparison functions, namely class-$\mathcal{KL}$ functions, to prove global asymptotic stability at the equilibrium $x_{\mathrm{g}}$. These functions are used to formulate a general approach for stability analysis in the sense of Lyapunov. As described in  \citet{khalil2002nonlinear, kellett2014compendium}, these functions are defined as follows:
\begin{definition}[class-$\mathcal{K}$ function]
A continuous function $\alpha:[0, a) \to \mathbb{R}_{\geq 0}$, for $a \in \mathbb{R}_{>0}$, is said to belong to class $\mathcal{K}$ if it is strictly increasing and $\alpha(0)=0$.
\end{definition}
\begin{definition}[class-$\mathcal{L}$ function\footnote{Note that the symbol $\mathcal{L}$ is used in two distinct contexts. While it represents the class-$\mathcal{L}$ functions, it is also used to denote the set $\mathcal{L}$ introduced in Sec. \ref{sec:stability_conditions}. In subsequent sections of the paper, the $\mathcal{L}$ referring to class-$\mathcal{L}$ functions is exclusively used in the form $\mathcal{KL}$.}]
A continuous function ${\sigma:\mathbb{R}_{\geq 0} \to \mathbb{R}_{>0}}$, is said to belong to class $\mathcal{L}$ if it is \emph{weakly decreasing}\footnote{We use this term to denote functions that either remain constant or strictly decrease within any interval of their domain.} and $\lim_{s \to \infty}\sigma(s)=0$.
\end{definition}
\begin{definition}[class-$\mathcal{KL}$ function]
\label{def:classKL}
A function $\beta: [0, a) \times \mathbb{R}_{\geq 0} \to \mathbb{R}_{\geq 0}$, for $a \in \mathbb{R}_{>0}$, is said to belong to class-$\mathcal{KL}$ if:
\begin{itemize}
    \item for each fixed $s$, the mapping $\beta(r, s)$ belongs to class-$\mathcal{K}$ with respect to r,
    \item for each fixed $r$, the mapping $\beta(r, s)$ belongs to class-$\mathcal{L}$ with respect to s.
\end{itemize}
\end{definition}

Then, we can describe global asymptotic stability in terms of class-$\mathcal{KL}$ functions as:
\begin{theorem}[Global asymptotic stability with class-$\mathcal{KL}$ functions]
\label{theo:kl_stability}
    The state $x_{\mathrm{g}}$ is a globally asymptotically stable equilibrium of \eqref{eq:ds} in $\mathcal{T}$ if there exists a class-$\mathcal{KL}$ function $\beta$ such that, $\forall t \in \mathbb{R}_{\geq 0}$ and $\forall x_{0} \in \mathcal{T}$,
    \begin{equation}
    \label{eq:beta_condition}
        ||x_{\mathrm{g}} - x_{t}|| \leq \beta(||x_{\mathrm{g}} - x_{0}||, t).
    \end{equation}
\end{theorem}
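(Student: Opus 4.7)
The plan is to unpack the definition of global asymptotic stability into its three standard ingredients (equilibrium, Lyapunov stability, global attractivity) and verify each by simply exploiting the structural properties of the class-$\mathcal{KL}$ bound. No nontrivial construction is required; the proof is essentially an exercise in invoking the monotonicity/limit behaviour of $\beta$ in each of its two arguments.

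First I would verify that $x_{\mathrm{g}}$ is an equilibrium. Setting $x_{0}=x_{\mathrm{g}}$ in \eqref{eq:beta_condition} gives $\|x_{\mathrm{g}}-x_{t}\|\leq \beta(0,t)$, and since $\beta(\cdot,t)$ is class-$\mathcal{K}$ for each fixed $t$, we have $\beta(0,t)=0$. Hence $x_{t}=x_{\mathrm{g}}$ for all $t\geq 0$, as required. Next, for Lyapunov stability at $x_{\mathrm{g}}$, fix $\varepsilon>0$. Because $\beta(\cdot,0)$ is class-$\mathcal{K}$, it is continuous and strictly increasing from $\beta(0,0)=0$, so I can pick $\delta>0$ with $\beta(\delta,0)<\varepsilon$. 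Then for any $x_{0}$ with $\|x_{\mathrm{g}}-x_{0}\|<\delta$, using that $\beta$ is strictly increasing in the first argument and weakly decreasing in the second,
\begin{equation}
\|x_{\mathrm{g}}-x_{t}\|\;\leq\;\beta(\|x_{\mathrm{g}}-x_{0}\|,t)\;\leq\;\beta(\delta,t)\;\leq\;\beta(\delta,0)\;<\;\varepsilon,
\end{equation}
for all $t\geq 0$, which is precisely Lyapunov stability.

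For global attractivity, fix an arbitrary $x_{0}\in\mathcal{T}$ and let $r:=\|x_{\mathrm{g}}-x_{0}\|$. Since $\beta(r,\cdot)$ is class-$\mathcal{L}$ by Definition~\ref{def:classKL}, it satisfies $\lim_{t\to\infty}\beta(r,t)=0$. Applying \eqref{eq:beta_condition} and the squeeze argument $0\leq\|x_{\mathrm{g}}-x_{t}\|\leq\beta(r,t)$ yields $\lim_{t\to\infty}\|x_{\mathrm{g}}-x_{t}\|=0$. Combining the equilibrium property, Lyapunov stability, and attractivity that holds for every $x_{0}\in\mathcal{T}$ gives global asymptotic stability of $x_{\mathrm{g}}$ in $\mathcal{T}$.

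The only place that requires any care — and what I would call the ``hard part'' of an otherwise routine argument — is choosing $\delta$ in the stability step without circular reasoning: one must note that the class-$\mathcal{K}$ property of $\beta(\cdot,0)$ supplies both the continuity at $0$ and the strict monotonicity needed to get $\beta(\delta,0)<\varepsilon$ for sufficiently small $\delta$, while the class-$\mathcal{L}$ property in the second argument is what lets us bound $\beta(\delta,t)$ uniformly in $t$ by $\beta(\delta,0)$. Once these two monotonicity facts are identified, the three conclusions follow directly from \eqref{eq:beta_condition}.
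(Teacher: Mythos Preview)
Your argument is correct and is the standard textbook proof of this classical result. Note, however, that the paper does not supply its own proof of Theorem~\ref{theo:kl_stability}; it is stated in the preliminaries as a known characterisation of global asymptotic stability, with attribution to \cite{khalil2002nonlinear,kellett2014compendium}, and is then used as a tool in the proof of Theorem~\ref{theo:puma_surrogate}. So there is nothing to compare against: your decomposition into equilibrium, Lyapunov stability, and global attractivity, each read off from the class-$\mathcal{K}$ and class-$\mathcal{L}$ monotonicity properties of $\beta$, is exactly the argument one finds in those references.
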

Note that this theorem seamlessly integrates the concepts of \emph{stability} and \emph{attractivity} within a single function $\beta$, which acts as an upper bound of $||x_{\mathrm{g}} - x_{t}||$. As the initial condition of the system moves further away from $x_{\mathrm{g}}$, $\beta$ correspondingly increases. Moreover, as the system evolves over time and $\beta$ decreases, it follows that the system's distance to $x_{\mathrm{g}}$ will eventually decrease.

\section{Methodology}
\label{sec:method}
We introduce the \emph{Policy via neUral Metric leArning} (PUMA) framework, which learns motion primitives from human demonstrations parametrized as the dynamical system $f^{\mathcal{T}}_{\theta}$. Furthermore, it enforces the goal of the motion $x_{\mathrm{g}}$ to be a globally asymptotically stable equilibrium of $f^{\mathcal{T}}_{\theta}$ while maintaining accuracy with respect to the demonstrations. To achieve this, we augment the Imitation Learning (IL) problem with the stability-enforcing loss $\ell_{\text{stable}}$. Hence, our framework minimizes the loss:
\begin{equation}
    \ell_{\scriptscriptstyle\mathrm{PUMA}} = \ell_{\text{IL}} + \lambda \ell_{\text{stable}},
\label{eq:puma_loss}
\end{equation}
where $\ell_{\text{IL}}$ is an imitation learning loss and $\lambda \in \mathbb{R}_{>0}$ is a weight factor.

\subsection{Behavioral Cloning}
\label{sec:behavioral_cloning}
\begin{figure}[t]
    \centering
    \includegraphics[width=0.9\columnwidth]{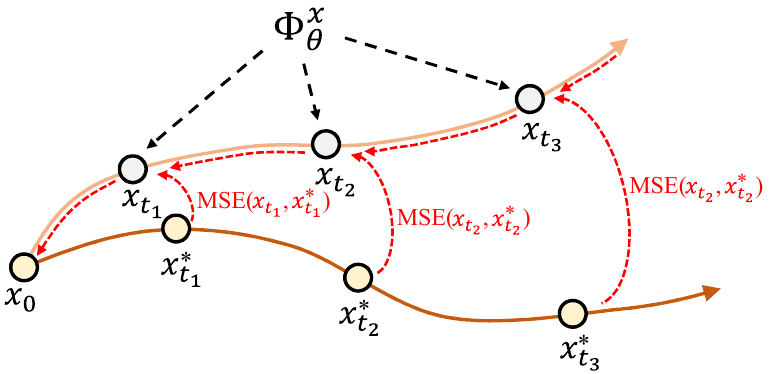}
    \caption{Illustration of the behavioral cloning loss computation. Starting from an initial condition $x_{0}$, the system $f^{\mathcal{T}}_{\theta}$ evolves to various time instants via $\Phi^{x}_{\theta}$. At each instant, the estimated state is compared with a demonstrated state. The red arrows show the gradient path used to update the DNN's weights using BPTT.}
    \label{fig:bc}
\end{figure}
To minimize $\ell_{\text{IL}}$, and thereby address~\eqref{eq:divergence}, we adopt the behavioral cloning loss used in \citet{perez2023stable}. This loss tackles~\eqref{eq:divergence} by modeling the output of the deterministic dynamical system $f^{\mathcal{T}}_{\theta}$ as the mean of a Gaussian distribution with fixed covariance. Furthermore, it mitigates \emph{covariate shift} by minimizing the multi-step error over trajectory segments using \emph{backpropagation through time} (BPTT), as depicted in Fig.~\ref{fig:bc}. 

In every training iteration, we sample a batch $\mathcal{B}^{i}$ of trajectory segments $\mathcal{H}^{i}$ from the dataset $\mathcal{D}$. These segments can start at any point within a given demonstration, with the start time defined as $t=0$. Then, by introducing the \emph{evolution function} $\Phi^{x}_{\theta}(t, x_{0}):\mathbb{R}_{\geq 0} \times \mathcal{T} \to \mathcal{T}$, which defines the value of $x_{t}$ by integrating $f^{\mathcal{T}}_{\theta}$ between $0$ and $t$, with initial condition $x_{0}$, we can construct the loss
\begin{equation}
\ell_{\text{IL}} = \sum_{\mathcal{H}^{i}\in \mathcal{B}^{i}}\sum_{(t, x^{*}_{t}) \in \mathcal{H}^{i}} ||x^{*}_{t} - \Phi^{x}_{\theta}(t, x_{0})||_{2}^{2}.
\label{eq:bc}
\end{equation}
Here, the states $x^{*}_{t}$ along the trajectory segment $\mathcal{H}^{i}$ serve as labels for the states predicted by the DNN from the initial condition $x_{0}$ using $\Phi^{x}_{\theta}(t, x_{0})$. Note that the initial condition is obtained from $\mathcal{H}^{i}$, i.e.,  $x_{0}=x^{*}_{0}$.

Since we do not have an analytical solution of $\Phi^{x}_{\theta}$, we approximate it using the \emph{forward Euler method}, i.e.,
\begin{equation}
\label{eq:forward_euler}
   \Phi^{x}_{\theta}(t, x_{0}) = \Phi^{x}_{\theta}(t', x_{0}) + f_{\theta}^{\mathcal{T}}\left(\Phi^{x}_{\theta}(t', x_{0})\right)\Delta t, 
\end{equation}
where $t' = t - \Delta t$ and $\Delta t \in \mathbb{R}_{>0}$ is the time step size. This integration starts with the initial state $x_{0}$, i.e., $\Phi^{x}_{\theta}(0, x_{0})=x_{0}$. It is important to note that the recursive nature of $\Phi^{x}_{\theta}(t, x_{0})$ necessitates the use of BPTT for the optimization of the DNN. 

\subsection{Triplet Stability Loss}
\subsubsection{Reformulating the Stability Conditions}
The stability conditions of Theorem~\ref{theo:condor_stability} involve three dynamical systems: $f^{\mathcal{T}}_{\theta}$, $f^{\mathcal{T} \to \mathcal{L}}_{\theta}$, and $f^{\mathcal{L}}$. Note, however, that its first condition, namely, $f_{\theta}^{\mathcal{T}\to\mathcal{L}}=f^{\mathcal{L}}$ $\forall x_{t} \in \mathcal{T}$, essentially states that $f_{\theta}^{\mathcal{T}\to\mathcal{L}}$ must exhibit global asymptotic stability. In other words, if the behavior of $f_{\theta}^{\mathcal{T}\to\mathcal{L}}$ is identical to that of another system, $f^{\mathcal{L}}$, for which stability is verified, then the stability of $f_{\theta}^{\mathcal{T}\to\mathcal{L}}$ is also verified. Nonetheless, if we can enforce its stability through a different method, these conditions can be more generally written as follows:

\begin{theorem}[Stability conditions: v2]
\label{theo:puma_stability}
Let $f_{\theta}^{\mathcal{T}}$ and $f_{\theta}^{\mathcal{T} \to \mathcal{L}}$ be the introduced dynamical systems. Then, in the region $\mathcal{T}$, $x_{\mathrm{g}}$ is a globally asymptotically stable equilibrium of $f_{\theta}^{\mathcal{T}}$ if, $\forall x_{t} \in \mathcal{T}$,:
\begin{enumerate}[font=\itshape]
    \item $y_{\mathrm{g}}$ is a globally asymptotically stable equilibrium of \newline$f_{\theta}^{\mathcal{T}\to\mathcal{L}}(x_{t})$,
    \item $\psi_{\theta}(x_{t})=y_{\mathrm{g}} \Rightarrow x_{t}=x_{\mathrm{g}}$.
\end{enumerate}
\end{theorem}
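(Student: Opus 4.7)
The plan is to leverage Theorem~\ref{theo:kl_stability} as a bridge: condition~1 furnishes a class-$\mathcal{KL}$ bound in the latent space $\mathcal{L}$, and I would pull it back through $\psi_{\theta}$ to produce a class-$\mathcal{KL}$ bound in the task space $\mathcal{T}$, at which point Theorem~\ref{theo:kl_stability} yields the desired global asymptotic stability of $x_{\mathrm{g}}$ for $f_{\theta}^{\mathcal{T}}$.

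Concretely, I would proceed in three steps. First, invoke the standard class-$\mathcal{KL}$ characterization of global asymptotic stability (the converse direction of Theorem~\ref{theo:kl_stability}, a classical fact from Lyapunov theory cited e.g.\ in \citet{kellett2014compendium}) applied to condition~1 to extract $\beta^{\mathcal{L}} \in \mathcal{KL}$ such that, along every trajectory of $f_{\theta}^{\mathcal{T}}$ and every $t \geq 0$,
\begin{equation*}
||y_{\mathrm{g}} - y_{t}|| \leq \beta^{\mathcal{L}}(||y_{\mathrm{g}} - y_{0}||, t), \qquad y_{t} = \psi_{\theta}(x_{t}).
\end{equation*}
Second, build two class-$\mathcal{K}$ functions relating distances in $\mathcal{T}$ and $\mathcal{L}$: a \emph{forward} bound $||\psi_{\theta}(x) - y_{\mathrm{g}}|| \leq \alpha(||x - x_{\mathrm{g}}||)$, arising from continuity of $\psi_{\theta}$ at $x_{\mathrm{g}}$, and an \emph{inverse} bound $||x - x_{\mathrm{g}}|| \leq \gamma(||\psi_{\theta}(x) - y_{\mathrm{g}}||)$, arising from condition~2 combined with continuity of $\psi_{\theta}$. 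Third, compose the three bounds to obtain
\begin{equation*}
||x_{\mathrm{g}} - x_{t}|| \leq \gamma\bigl(\beta^{\mathcal{L}}(\alpha(||x_{\mathrm{g}} - x_{0}||), t)\bigr) =: \beta^{\mathcal{T}}(||x_{\mathrm{g}} - x_{0}||, t),
\end{equation*}
verify that $\beta^{\mathcal{T}} \in \mathcal{KL}$ (composing class-$\mathcal{K}$ functions with a class-$\mathcal{KL}$ function from both sides preserves the class structure, a routine check against Definition~\ref{def:classKL}), and apply Theorem~\ref{theo:kl_stability} to close the argument.

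The hard part will be the inverse bound, i.e., constructing the class-$\mathcal{K}$ function $\gamma$. Condition~2 is only a pointwise uniqueness statement, $\psi_{\theta}^{-1}(y_{\mathrm{g}}) = \{x_{\mathrm{g}}\}$, so to promote it to a metric estimate I plan to define
\begin{equation*}
\gamma(s) = \sup\{\, ||x - x_{\mathrm{g}}|| : x \in \mathcal{T},\; ||\psi_{\theta}(x) - y_{\mathrm{g}}|| \leq s \,\},
\end{equation*}
possibly dominated by a smooth strictly increasing envelope to secure continuity and strict monotonicity. Showing $\gamma(s) \to 0$ as $s \to 0^{+}$ amounts to a local continuity property of $\psi_{\theta}^{-1}$ at $y_{\mathrm{g}}$, which follows on compact subsets of $\mathcal{T}$ from continuity of $\psi_{\theta}$ and condition~2 via a standard compactness argument (any limit point of a sequence $\{x_{k}\}$ with $\psi_{\theta}(x_{k}) \to y_{\mathrm{g}}$ lies in $\psi_{\theta}^{-1}(y_{\mathrm{g}}) = \{x_{\mathrm{g}}\}$). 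Globally, finiteness of the defining supremum likely requires a properness-type assumption on $\psi_{\theta}$, namely that the sublevel sets $\{x : ||\psi_{\theta}(x) - y_{\mathrm{g}}|| \leq s\}$ remain bounded in $\mathcal{T}$, or equivalently an a priori guarantee that the trajectories of $f_{\theta}^{\mathcal{T}}$ remain in a compact positively invariant region; once $\gamma$ is in hand, the remainder of the proof is bookkeeping.
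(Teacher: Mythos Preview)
The paper does not supply a standalone proof of Theorem~\ref{theo:puma_stability}. It is introduced immediately after Theorem~\ref{theo:condor_stability} (the stability conditions from \citet{perez2023stable}) with a one-paragraph remark that condition~1 of Theorem~\ref{theo:condor_stability}, namely $f_{\theta}^{\mathcal{T}\to\mathcal{L}}=f^{\mathcal{L}}$, ``essentially states that $f_{\theta}^{\mathcal{T}\to\mathcal{L}}$ must exhibit global asymptotic stability,'' so that replacing this condition by the abstract requirement that $y_{\mathrm{g}}$ be GAS yields Theorem~\ref{theo:puma_stability}. The actual argument is thus deferred to the cited prior work; the paper's own detailed $\mathcal{KL}$-style analysis appears only later, for Theorem~\ref{theo:puma_surrogate}.

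Your self-contained class-$\mathcal{KL}$ argument is therefore a genuinely different route, and it is essentially sound. The three-step sandwich $\gamma \circ \beta^{\mathcal{L}}(\alpha(\cdot),t)$ is the correct shape, and the verification that this composition stays in $\mathcal{KL}$ is routine, as you note. You are also right that the inverse bound $\gamma$ is the crux: condition~2 is purely pointwise, and upgrading it to $\|x-x_{\mathrm{g}}\|\leq\gamma(\|\psi_{\theta}(x)-y_{\mathrm{g}}\|)$ genuinely requires a compactness or properness hypothesis. That hypothesis is not part of the theorem statement, but the paper supplies it elsewhere: Sec.~\ref{sec:boundary_conditions} enforces $\mathcal{T}$ to be a positively invariant bounded region (a hypercube or a sphere in the experiments), so your sup-based construction of $\gamma$ is well defined and your sequential-compactness argument for $\gamma(s)\to 0$ goes through. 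The same compactness also gives uniform continuity of $\psi_{\theta}$, which you need for the forward bound $\alpha$ to be globally class-$\mathcal{K}$ rather than merely a local modulus. With those ambient assumptions made explicit, your proof is complete and arguably more informative than the paper's reduction, since it makes transparent exactly where continuity of $\psi_{\theta}$ and boundedness of $\mathcal{T}$ enter.
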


\subsubsection{Surrogate Stability Conditions}
\label{sec:surrogate_stability}
Theorem~\ref{theo:puma_stability} introduces stability conditions for $f^{\mathcal{T}}_{\theta}$; nevertheless, it does not specify how to enforce these conditions in the system. Therefore, we introduce the \emph{surrogate stability conditions} of Theorem~\ref{theo:puma_stability}. These conditions, when met, imply that the stability conditions of Theorem~\ref{theo:puma_stability} are also satisfied. Unlike the stability conditions, the surrogate conditions can be directly transformed into a specific loss function, $\ell_{\text{stable}}$, for optimizing the DNN to enforce their satisfaction.

To formulate the surrogate stability conditions, we note that Theorem~\ref{theo:puma_stability} can be expressed in terms of relative distances. We define the distance between any given latent state $y_t$ and the goal state $y_{\mathrm{g}}$ as $d_t = d(y_{\mathrm{g}}, y_t) = \|y_{\mathrm{g}} - y_t\|$. Then, according to Condition 1 of Theorem~\ref{theo:puma_stability}, which addresses global asymptotic stability, the value of $d_t$ should, generally speaking, decrease over time. Moreover, the second condition specifies that the value of $d_t$ should remain constant only for $y_{\mathrm{g}} = \psi_{\theta}(x_{\mathrm{g}})$.
Formally, we introduce the conditions as:

\begin{theorem}[Surrogate stability conditions]
\label{theo:puma_surrogate}
Let two dynamical systems be governed by the equations $\dot{x}_{t} = f_{\theta}^{\mathcal{T}}(x_{t})$ and $\dot{y}_{t} = f_{\theta}^{\mathcal{T}\to\mathcal{L}}(y_{t})$, such that $y_{t} = \psi_{\theta}(x_{t})$. Assume both $f_{\theta}^{\mathcal{T}}$ and $f_{\theta}^{\mathcal{T}\to\mathcal{L}}$ are continuously differentiable. Then, in the region $\mathcal{T}$, $x_{\mathrm{g}}$ is a globally asymptotically stable equilibrium of $f_{\theta}^{\mathcal{T}}$ if, $\forall t \in \mathbb{R}_{\geq0}$:
\begin{enumerate}
    \item $d_{t} = d_{t + \Delta t}$, \hspace{0.6cm} for $y_{0} =y_{\mathrm{g}}$,
    \item $d_{t} > d_{t + \Delta t}$, \hspace{0.5cm} $\forall y_{0}$ with $x_{0} \in \mathcal{T} \setminus \{x_{\mathrm{g}}\}$,
\end{enumerate}
where $\Delta t \in \mathbb{R}_{>0}$.
\end{theorem}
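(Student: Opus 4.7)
The plan is to reduce Theorem~\ref{theo:puma_surrogate} to Theorem~\ref{theo:puma_stability} (v2) by verifying both of its two conditions directly from the surrogate hypotheses. The second condition, $\psi_\theta(x_t) = y_{\mathrm{g}} \Rightarrow x_t = x_{\mathrm{g}}$, falls out immediately: if $\psi_\theta(x_0) = y_{\mathrm{g}}$ for some $x_0 \in \mathcal{T} \setminus \{x_{\mathrm{g}}\}$, then $d_0 = 0$, and surrogate condition 2 at $t = 0$ would require $d_{\Delta t} < 0$, which is impossible since $d_{\Delta t} \geq 0$. Surrogate condition 1, applied with $y_0 = y_{\mathrm{g}}$ (so $d_0 = 0$), likewise forces $d_t \equiv 0$, certifying that $y_{\mathrm{g}}$ is an equilibrium of $f_\theta^{\mathcal{T}\to\mathcal{L}}$.

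To promote this equilibrium to a globally asymptotically stable one (condition 1 of Theorem~\ref{theo:puma_stability}), I would use the class-$\mathcal{KL}$ characterization of Theorem~\ref{theo:kl_stability}. Lyapunov stability in the latent space comes for free from surrogate condition 2: since $d_t$ is strictly decreasing along any trajectory with $y_0 \neq y_{\mathrm{g}}$, we have $\|y_t - y_{\mathrm{g}}\| \leq \|y_0 - y_{\mathrm{g}}\|$ for all $t \geq 0$, so the choice $\delta = \epsilon$ suffices in the $\epsilon$-$\delta$ definition. For global attractivity, the strict monotone decrease and the lower bound $d_t \geq 0$ give $d_t \to d_\infty \geq 0$. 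Supposing for contradiction that $d_\infty > 0$, the trajectory $y_t$ is confined to the compact spherical shell $\{y \in \mathcal{L} : d_\infty \leq \|y - y_{\mathrm{g}}\| \leq d_0\}$; extracting a convergent subsequence $y_{t_n} \to y^{*}$ with $\|y^{*} - y_{\mathrm{g}}\| = d_\infty$ and writing $y^{*} = \psi_\theta(x^{*})$ for some $x^{*} \neq x_{\mathrm{g}}$, surrogate condition 2 applied to the trajectory launched from $x^{*}$ gives $d_{\Delta t}(x^{*}) < d_\infty$. Continuous dependence of the flow on its initial condition then propagates this strict deficit to an open neighborhood of $y^{*}$, which is entered infinitely often by the original trajectory, contradicting $d_t \geq d_\infty$.

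The main obstacle is precisely the step of producing a subsequential limit $y^{*}$ that lies in the image of $\psi_\theta$: because $\mathcal{T}$ need not itself be compact, one has to argue boundedness of $\{x_t\}$ from the shell-confinement of $\{y_t\}$, leveraging continuity (or a properness hypothesis) of $\psi_\theta$ on the relevant region, and this is where any hidden regularity assumption enters the argument. Once attractivity is secured, the class-$\mathcal{KL}$ upper bound can be assembled as $\beta(r, t) = \sup\{\|y_t(y_0) - y_{\mathrm{g}}\| : \|y_0 - y_{\mathrm{g}}\| \leq r\}$, whose class-$\mathcal{K}$ behavior in $r$ follows from the Lyapunov-stability estimate and whose class-$\mathcal{L}$ behavior in $t$ follows from monotone decrease and uniform attractivity on bounded sublevel sets; Theorem~\ref{theo:kl_stability} then yields global asymptotic stability of $y_{\mathrm{g}}$ for $f_\theta^{\mathcal{T}\to\mathcal{L}}$. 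With both conditions of Theorem~\ref{theo:puma_stability} now satisfied, that theorem delivers global asymptotic stability of $x_{\mathrm{g}}$ for $f_\theta^{\mathcal{T}}$, completing the proof of Theorem~\ref{theo:puma_surrogate}.
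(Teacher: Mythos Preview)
Your reduction to Theorem~\ref{theo:puma_stability} and the handling of its second condition match the paper. The gap is in your treatment of the first condition: you assert that surrogate condition~2 makes ``$d_t$ strictly decreasing along any trajectory,'' and from this you extract both Lyapunov stability ($\|y_t - y_{\mathrm{g}}\| \leq \|y_0 - y_{\mathrm{g}}\|$) and the limit $d_t \to d_\infty$. But the hypothesis $d_t > d_{t+\Delta t}$ holds for one \emph{fixed} $\Delta t > 0$, not for all positive increments. A trajectory may oscillate and locally increase while still satisfying $d_{t+\Delta t} < d_t$ for every $t$; the paper gives the explicit example $d_t = e^{-at}\bigl(\sin^2(\omega t) + d_0\cos^2(\omega t)\bigr)$ with $\Delta t = 2$ (Fig.~\ref{fig:surrogate_upper}) and flags precisely this obstruction in the proof text. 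Hence neither the bound $d_t \leq d_0$ nor the existence of $\lim_{t\to\infty} d_t$ is available, and both your compactness/limit-point argument and your proposed $\beta(r,t) = \sup_{\|y_0 - y_{\mathrm{g}}\| \leq r}\|y_t - y_{\mathrm{g}}\|$ collapse, since the latter need not be non-increasing in $t$.

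The paper confronts this by abandoning a direct Lyapunov-candidate argument and instead building the class-$\mathcal{KL}$ bound from the running-window maximum $\delta^{\max}_{t+\Delta t}(y_0,t) = \max_{s\in[t,t+\Delta t]}\delta(y_0,s)$, which \emph{is} weakly decreasing in $t$ (because $d_{s+\Delta t} < d_s$ for every $s$ in the window). After maximizing over all initial conditions at distance $d_0$ and filtering the result through a stable first-order linear system to enforce strict monotonicity in $t$ and strict increase in $d_0$, one obtains $\beta$; the verification that this $\beta$ is a valid class-$\mathcal{KL}$ upper bound is the content of Proposition~\ref{prop:existence_KL} in the appendix.
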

\begin{proof}
To prove this theorem, we demonstrate that if the surrogate stability conditions are satisfied, then the stability conditions from Theorem~\ref{theo:puma_stability} must also hold. This leads to $x_{\mathrm{g}}$ being a globally asymptotically stable equilibrium of $f^{\mathcal{T}}_{\theta}$.

\paragraph{Second stability condition of Theorem~\ref{theo:puma_stability}}
Let us begin by analyzing the fulfillment of the second stability condition from Theorem~\ref{theo:puma_stability}. This condition states that only $x_{\mathrm{g}}$ can map to $y_{\mathrm{g}}$ via $\psi_{\theta}$. Then, since $y_{\mathrm{g}}$ is defined as $\psi_{\theta}(x_{\mathrm{g}})$, we need to establish that no other $x_{t}$ maps to $y_{\mathrm{g}}$. To achieve this, we note that both $x_{t}$ and $x_{0}$ belong to the same state space $\mathcal{T}$. Therefore, showing that this statement holds $\forall x_{0} \in \mathcal{T}$ implies that it also holds $\forall x_{t} \in \mathcal{T}$.

Now, suppose for a contradiction that there exists some $x_0$ such that $x_0 \neq x_{\mathrm{g}}$ and $y_0 = \psi_{\theta}(x_0) = y_{\mathrm{g}}$. Then, according to the second surrogate condition, $d_{t} > d_{t + \Delta t}$, which contradicts the first surrogate condition. Consequently, the second stability condition must be satisfied.

\paragraph{First stability condition Theorem~\ref{theo:puma_stability}}
Let us now study the first stability condition. Our goal is to prove that $y_{\mathrm{g}}$ is a globally asymptotically stable equilibrium of $f_{\theta}^{\mathcal{T}\to\mathcal{L}}$ within the region $\mathcal{L}$, where the system is defined. Surrogate condition 2 hints that the system's stability could be verified through a Lyapunov candidate defined using $d_{t}$. This is because the condition enforces the distance $d_{t}$ to strictly decrease within the interval defined by $\Delta t$. However, this does not necessarily imply that the Lyapunov candidate strictly decreases with time, as it is possible for it to strictly increase locally while adhering to this condition, as exemplified in Fig. \ref{fig:surrogate_upper}. As a result, we proceed to demonstrate the global asymptotic stability of $y_{\mathrm{g}}$ using Theorem~\ref{theo:kl_stability}. Specifically, we aim to do so by employing a class-$\mathcal{KL}$ upper bound $\beta$ that fulfills~\eqref{eq:beta_condition}.
\begin{figure}[t]
    \centering
    \includegraphics[width=\columnwidth]{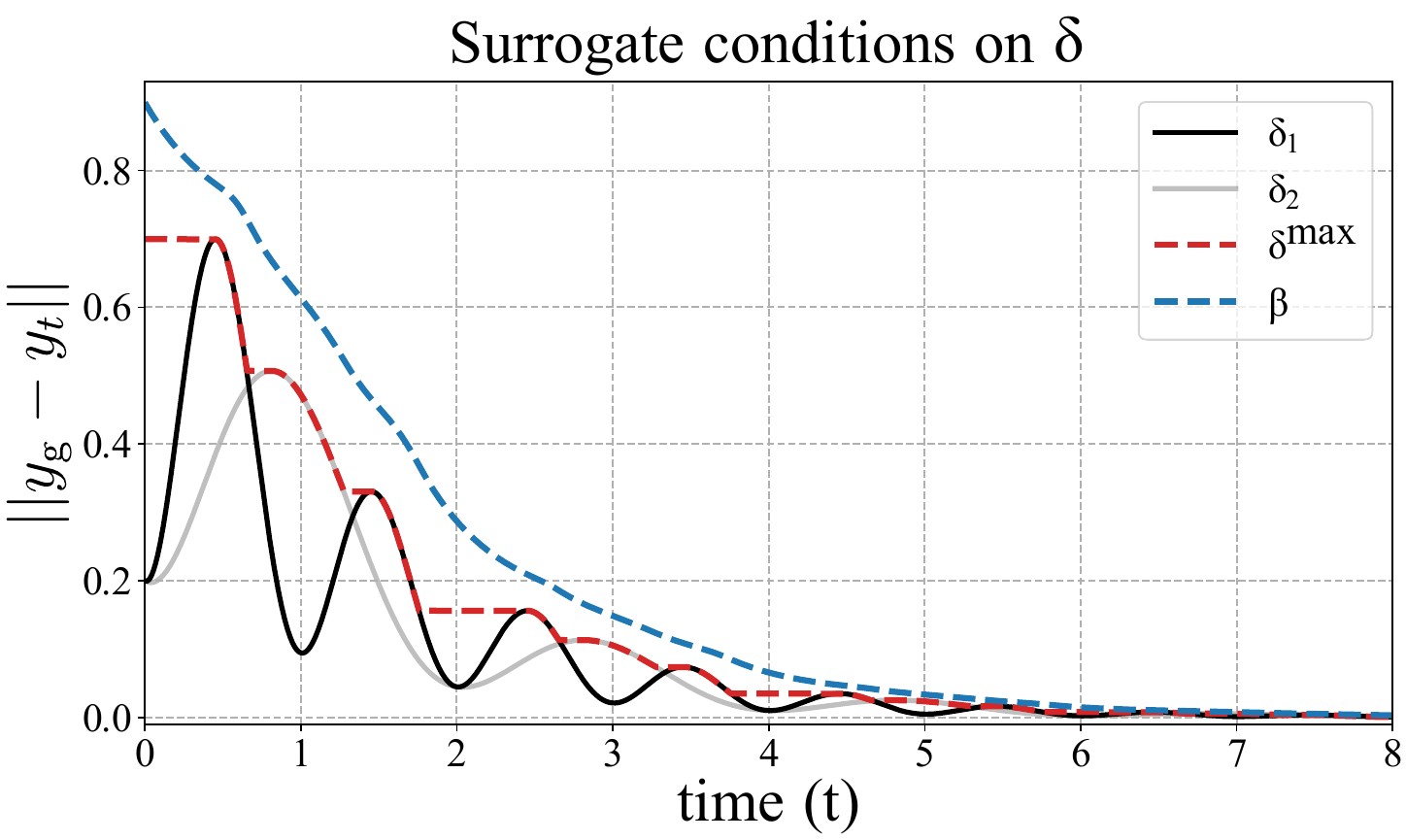}
    \caption{Time evolution of two functions $\delta$, $\delta_{1}$ and $\delta_{2}$, starting with different initial conditions $y_{0}$, but same $d_{0}$. Both satisfy the surrogate stability conditions with ${\Delta t = 2}$. Additionally, the values of $\delta^{\text{max}}$ and $\beta$, computed using these functions, are shown. In this representation, ${\delta = e^{-a\cdot t}\left(\sin^{2}\left(\omega t\right) + d_{0} \cdot \cos^{2}\left(\omega t\right)\right)}$ with $a = 0.75$, $d_{0} = 0.2$, and $\omega = [\pi, \pi/2]$.}
    \label{fig:surrogate_upper}
\end{figure}

To achieve this, we first define an evolution function for the distance $d_{t}$, for a given $y_{0}$ and $t$, as $\delta:\mathcal{L} \times \mathbb{R}_{\geq 0} \to \mathbb{R}_{\geq 0}$, with $\delta(y_{0}, t) = ||y_{\mathrm{g}} - \Phi^{y}_{\theta}(t,y_{0})||$. Here, $\Phi^{y}_{\theta}$ represents the evolution function of $y_{t}$ under the dynamical system $f^{\mathcal{T} \to \mathcal{L}}_{\theta}$. Then, we can express the upper bound $\beta$ as
\begin{equation}
    \delta(y_{0}, t) \leq \beta(d_{0}, t), 
\end{equation}
where $d_{0} = \delta(y_{0}, 0)$. Recall that for ensuring asymptotic stability, $\beta$ must also satisfy the following properties:
i) $\beta(0, t) = 0$,
ii) $\beta$ weakly decreases with $t$,
iii) $\beta$ is continuous with respect to $d_{0}$ and $t$,
iv) $\beta \to 0$ as $t \to \infty$, and
v) $\beta$ strictly increases with $d_{0}$.

It is important to note that verifying all these properties can lead to a lengthy proof. Thus, while we introduce $\beta$ and discuss the key concepts behind its design here, the complete proof and details are provided in Proposition~\ref{prop:existence_KL}, Appendix \ref{app:upperbound}. 

We now proceed to describe the three main aspects considered in the design of $\beta$.
\begin{enumerate}[leftmargin=*]
    \item \emph{Upper bound in time:} First, we need to identify a $\beta$ that weakly decreases with time. For this purpose, we introduce a function that computes the maximum of $\delta$ over the window $[t, t + \Delta t]$ for any given $y_{0}$ and $t$, i.e.,
    \begin{equation}
        \delta^{\text{max}}_{t + \Delta t}(y_{0}, t) = \max_{s \in [t, t + \Delta t]} \delta(y_{0}, s).
    \end{equation}
    Clearly, this function serves as an upper bound for $\delta$. Moreover, this function must weakly decrease with time. Considering surrogate condition 2, which indicates that $\forall s \in [t, t + \Delta t]$, we have $\delta(y_{0}, s + \Delta t) < \delta(y_{0}, s)$, it follows that there is no $\delta$ greater than $\delta^{\text{max}}_{t + \Delta t}$ in the interval $[t + \Delta t, t + 2\Delta t]$. By extending this observation for every interval $[t + n \cdot \Delta t, t + (n + 1) \cdot \Delta t]$, with $n \in \mathbb{N}$, we can conclude that $\delta^{\text{max}}_{t + \Delta t}$ weakly decreases with time.

    \item \emph{Upper bound in space:} The function $\delta^{\text{max}}_{t + \Delta t}(y_{0}, t)$ provides an upper bound of $\delta$ for a given $y_{0}$. However, $\beta(d_{0}, t)$ depends on $d_{0}$ rather than directly on $y_{0}$. To address this, for any specified $d_{0}$, we must ensure that $\beta \geq \delta$ for every $y_{0}$ located at this particular distance from the equilibrium. The set of initial conditions $y_{0}$ fulfilling this condition can be defined as $\mathcal{Y}_{0}(d_{0}) = \{y_{0} \in \mathcal{L}: ||y_{\mathrm{g}} - y_{0}|| = d_{0}\}$. Considering this, we can introduce an upper bound dependent on $d_{0}$ by computing the maximum of each $\delta^{\text{max}}_{t + \Delta t}(y_{0}, t)$, where $y_{0} \in \mathcal{Y}_{0}$:
    \begin{equation}
         \delta^{\text{max}}(d_{0}, t) = \max_{y_{0} \in \mathcal{Y}_{0}(d_{0})}\left(\delta_{t + \Delta t}^{\text{max}}(y_{0}, t)\right).
    \end{equation}

    \item \emph{Strictly increasing/decreasing function:} Similarly to $\delta^{\text{max}}_{t + \Delta t}$, the function $\delta^{\text{max}}$ also weakly decreases as a function of time (see Appendix \ref{app:upperbound} for details). This is not inherently problematic, as it verifies the properties of class-$\mathcal{KL}$ functions. However, $\beta$ must strictly increase as a function of $d_{0}$, and, for this to be the case, in our formulation, $\beta$ must also strictly decrease as a function of time, $\forall d_{0} \in \mathcal{L} \setminus \{y_{\mathrm{g}}\}$.

    To achieve this, we consider $\beta$ as the evolution function of a first-order linear dynamical system with state $z_{t}$, using $\delta^{\text{max}}$ as the reference. This leads to the equation
    \begin{equation}
        \dot{z}_{t} = \alpha(z_{t} - \delta^{\text{max}}),
    \end{equation}
    where $\alpha < 0$. With an initial condition $z_{0}$ greater than $ \delta^{\text{max}}_{0} = \delta^{\text{max}}(d_{0}, 0)$, we ensure that $\beta$ strictly decreases with time. Moreover, $\beta$ remains above $\delta^{\text{max}}$, and consequently above $\delta$, for all $d_{0} \in \mathcal{L} \setminus \{y_{\mathrm{g}}\}$. Hence, by choosing ${z_{0} = \delta^{\text{max}}_{0} + d_{0}} $, we ensure that $\beta$ exhibits the desired increasing/decreasing properties. We can express $\beta$ as
    \begin{equation}
        \beta(d_{0}, t) = z_{0} + \int_{0}^{t}\dot{z}_{s} ds.
    \end{equation}
\end{enumerate}
Based on Proposition~\ref{prop:existence_KL}, Appendix \ref{app:upperbound}, we can confirm that our formulation for $\beta$ is a valid class-$\mathcal{KL}$ upper bound of $\delta$. This indicates that the first stability condition of Theorem~\ref{theo:puma_stability} is satisfied, concluding our proof. 
\end{proof}

\subsubsection{Loss Function}
Crucially, the surrogate stability conditions from Theorem~\ref{theo:puma_surrogate} can be enforced in a DNN by minimizing the following expression, $\forall y_{0} \in \mathcal{L}$ and $\forall t \in \mathbb{R}_{\geq0}$,
\begin{equation}
\label{eq:preliminary_stable_loss}
    \max\bigl(0, m + d(y_{\mathrm{g}}, y_{t + \Delta t}) - d(y_{\mathrm{g}}, y_{t})\bigr),
\end{equation}
where $m \in \mathbb{R}_{>0}$. This function resembles the form of the triplet loss introduced in Sec. \ref{sec:triplet_loss}. However, in our context, $y_{\mathrm{g}}$ serves as the anchor sample, $y_{t + \Delta t}$ as the positive sample, and $y_{t}$ as the negative sample. In the following discussion, we explore how this expression induces the fulfillment of the surrogate stability conditions within a DNN.

\paragraph{Second surrogate condition}
\begin{figure}[t]
    \centering
    \includegraphics[width=\columnwidth]{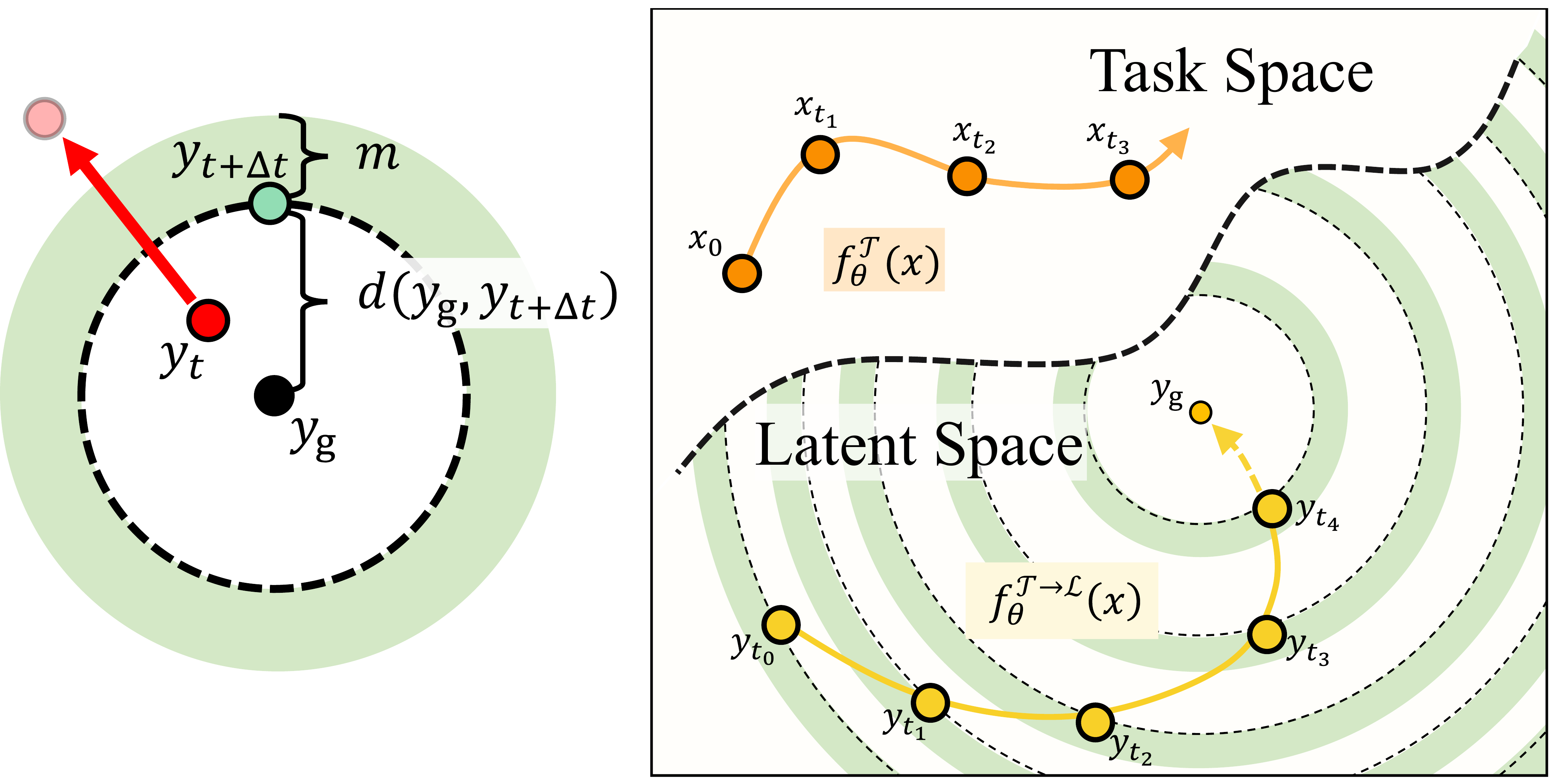}
    \caption{Left: Illustration of the effect of optimizing $\ell_{\text{stable}}$. The red arrow depicts $y_{t}$ and $y_{t+\Delta t}$ being modified to fulfill \eqref{eq:triplet_inequality}. Right: Example of trajectories generated with $f^{\mathcal{T}}_{\theta}$ and $f^{\mathcal{T}\to\mathcal{L}}_{\theta}$ \textbf{post-training}. Each $y_{t+\Delta t}$ is closer to $y_{\mathrm{g}}$ than its predecessor $y_{t}$.}
    \label{fig:task_latent}
\end{figure}

For any $y_{0}$ where $x_{0} \neq x_{\mathrm{g}}$, minimizing \eqref{eq:preliminary_stable_loss} enforces transitions to progressively approach $y_{\mathrm{g}}$, as its minimization implies
\begin{equation}
    d(y_{\mathrm{g}}, y_{t + \Delta t}) + m \leq d(y_{\mathrm{g}}, y_{t}).
    \label{eq:triplet_inequality}
\end{equation}
Hence, $d(y_{\mathrm{g}}, y_{t + \Delta t})<d(y_{\mathrm{g}}, y_{t})$, i.e., the second surrogate condition (see Fig. \ref{fig:task_latent}). 

\paragraph{First surrogate condition}
In the case where $y_0 = y_{\mathrm{g}} = \psi_{\theta}(x_{\mathrm{g}})$, that is, when the system is initialized at the equilibrium, enforcing a value of $y_{t + \Delta t}$ different from $y_{\mathrm{g}}$ would only increase the function in~\eqref{eq:preliminary_stable_loss}. Consequently, for $y_0 = y_{\mathrm{g}} = \psi_{\theta}(x_{\mathrm{g}})$, \eqref{eq:preliminary_stable_loss} achieves its minimum when $y_{t + \Delta t} = y_{\mathrm{g}} = y_0$, equal to $m$. Therefore, this equation also enforces the first surrogate condition.

Finally, since $\delta(y_{0}, t) = d(y_{\mathrm{g}}, y_{t})$, we present the stability loss function as
\begin{equation}
\label{eq:stable_loss}
    \ell_{\text{stable}} = \sum_{y_{0}\in \mathcal{B}^{s}}\sum_{t \in \mathcal{H}^{s}} \max(0, m + \delta(y_{0}, t + \Delta t) - \delta(y_{0}, t)).
\end{equation}
In this equation, $\mathcal{B}^{s}$ denotes a batch of initial latent states $y_{0}$. These states are derived by mapping initial states $x_{0}$ (\textbf{sampled randomly} from $\mathcal{T}$) via the function $\psi_{\theta}$. Meanwhile, $\mathcal{H}^{s}$ represents a set of time instants $t$ at which the loss is minimized.

To compute this loss, we must recall that the evolution function of $y_{t}$ is represented as $\Phi^{y}_{\theta}$. Consequently, 
\begin{equation}
    \delta(y_{0}, t) = ||y_{\mathrm{g}} - \Phi^{y}_{\theta}(t, y_{0})||.
\end{equation}
Given the absence of an analytical representation for $\Phi^{y}_{\theta}$, we approximate it using the forward Euler method, and optimize it using BPTT, in a similar manner to Sec. \ref{sec:behavioral_cloning}. 

Importantly, optimizing this loss for all $t \in \mathbb{R}_{\geq 0}$ using BPTT is not feasible, as it would necessitate computing the loss over an infinite number of samples. Nevertheless, this limitation is not a significant concern when dealing with time-invariant dynamical systems. In such systems, any state $y_{t}$ can be equivalently represented by an initial condition $y_{0}$, since both reside in the same state space $\mathcal{L}$. Consequently, when states are randomly sampled from $\mathcal{T}$, and thereby from $\mathcal{L}$, we are essentially sampling from the space of all possible $y_{t}$.

\subsection{On the Stability Loss Metric}
Intentionally, we have not specified which distance function or metric should be used for computing the stability loss, since it should be selected depending on the geometry employed to describe the robot's state space. 
\begin{figure}[t]
    \centering
    \includegraphics[width=0.9\columnwidth]{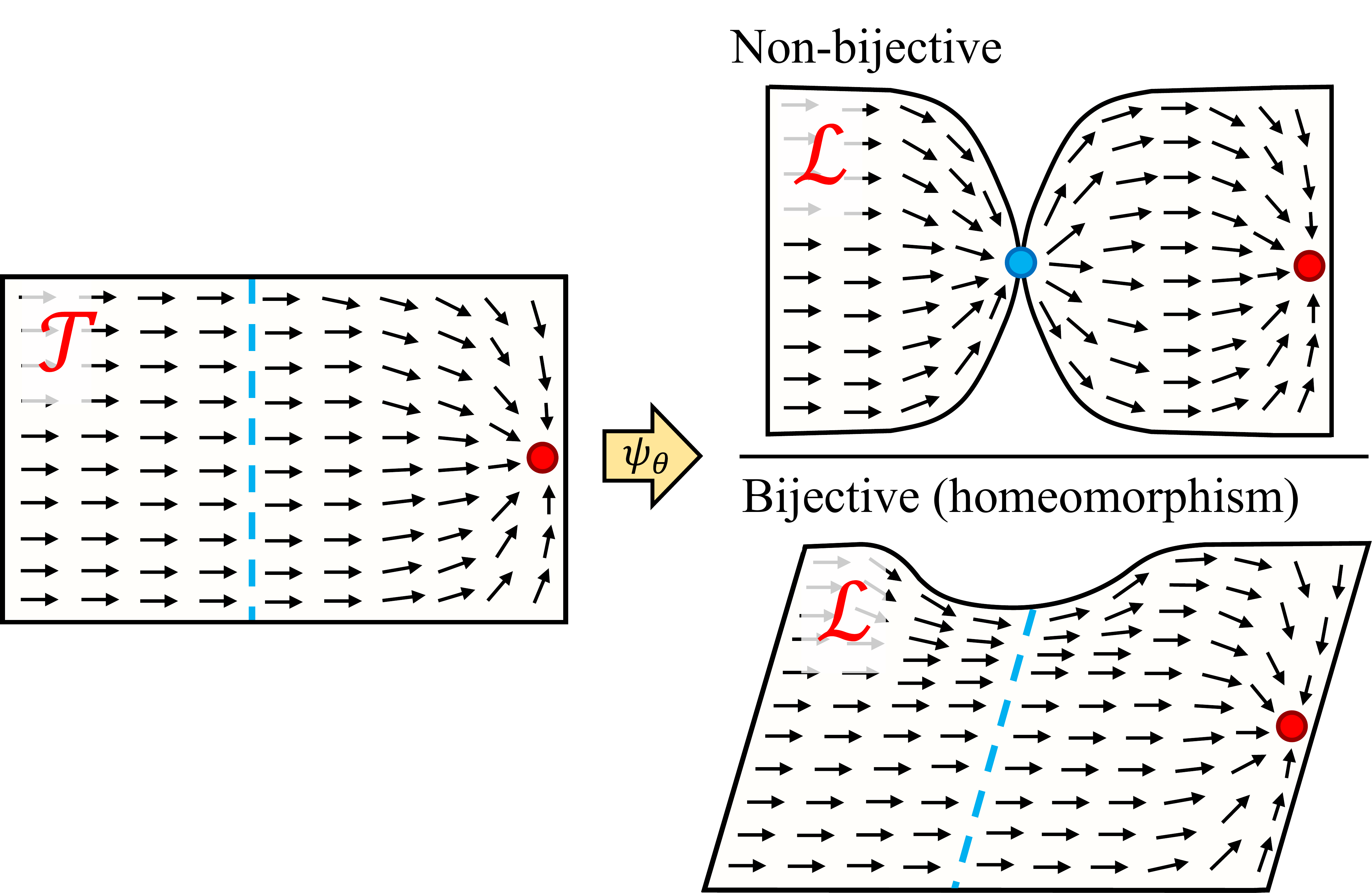}
    \caption{Non-bijective vs bijective solution. In the non-bijective case, every point on the blue line in $\mathcal{T}$ maps to the blue point in $\mathcal{L}$. In the bijective case, the original line undergoes a transformation that both compresses and skews it, yet a one-to-one mapping is maintained.
}
    \label{fig:non_bijective}
\end{figure}

To elaborate, recall that the learned dynamical system can be expressed as $\dot{x}_{t} = \phi_{\theta}(y_{t})$. It then follows that the output of our model is entirely determined by the latent state $y_{t}$. This implies that if two different states $x^{\text{a}}_{t}$ and $x^{\text{b}}_{t}$ map to the same latent state, that is, $y_{t} = \psi_{\theta}(x^{\text{a}}_{t}) = \psi_{\theta}(x^{\text{b}}_{t})$, the time derivative computed by the learned dynamical system for both states will be identical. Such behavior would manifest in certain states if $\psi_{\theta}$ were a non-bijective\footnote{More rigorously, the property being described is that of non-injective functions. However, for practical purposes, we can define the codomain of $\psi_{\theta}$ to be equal to its image, which is the property that an injective function must have to be bijective. Hence, in this specific context, we use these terms interchangeably.} function. It would, therefore, be potentially harmful for the learning process to enforce $\psi_{\theta}$ to be non-bijective, as this would constrain the family of solutions to which the system can converge, hindering the DNN's optimization process. Ideally, we would like $\psi_{\theta}$ to have the capacity to converge to a bijective function if required, where each state $x_{t}$ maps to a unique latent state $y_{t}$. Consequently, for any state $x_{t}$, it would be possible to compute a value of $\dot{x}_{t}$ that is independent of those calculated for any other state. Fig.~\ref{fig:non_bijective} presents an example of bijective and non-bijective mappings between dynamical systems where the surrogate stability conditions are enforced.

It turns out that the capacity of $\psi_{\theta}$ to converge to a bijective function is closely linked to the metric used in calculating $\ell_{\text{stable}}$, which should be chosen based on the geometry of the robot's state space. In the following subsection, we explore this relationship in more depth.

\subsubsection{Homeomorphisms and State Space Geometry}
To study under which conditions $\psi_{\theta}$ can converge to a bijective function, let us introduce the concept of \emph{topologically equivalent} manifolds. Two manifolds, e.g., $\mathcal{T}$ and $\mathcal{L}$, are topologically equivalent if a continuous and bijective mapping, known as an \emph{homeomorphism}, exists between them \citep{needham2021visual}. In other words, two topologically equivalent manifolds can be \emph{stretched}, \emph{compressed}, or \emph{twisted} into each other without \emph{tearing} or \emph{gluing} space. Since DNNs are continuous functions\footnote{We can assume this since every broadly used model is continuous \citep{goodfellow2016deep,geron2022hands}.}, a bijective function $\psi_{\theta}$ would also serve as a homeomorphism\footnote{DNNs are commonly continuously differentiable, so, sometimes in the literature the term \emph{diffeomorphism} is employed instead, as diffeomorphisms are continuously differentiable homeomorphisms.} between $\mathcal{T}$ and $\mathcal{L}$.

Moreover, since we have a notion of distance in both $\mathcal{T}$ and $\mathcal{L}$, it follows that these manifolds are metric spaces. A key property of metric spaces is that their topology is generated by their distance functions. Therefore, if two metric spaces are topologically equivalent, their metrics are termed as being \emph{equivalent} \citep{deza2009encyclopedia}. It is important to clarify that this does not necessarily mean that the two distance functions are identical, but rather that they induce metric spaces that are homeomorphic to each other.

In our context, this implies that the distance function employed in the stability loss \eqref{eq:stable_loss} must induce a topology in $\mathcal{L}$ that is equivalent to that of $\mathcal{T}$.  For example, if orientations are described using unit quaternions, the topology of $\mathcal{T}$ would be spherical. Then, the stability loss metric should generate a topology that is homeomorphic to the sphere. Otherwise, it would be infeasible for the DNN to establish a homeomorphism between $\mathcal{T}$ and $\mathcal{L}$. 

In this work, Sec. \ref{sec:experiments}, we use unit quaternions to represent orientations in our robot experiments. For a detailed discussion on metrics and pose control in this context, the reader is referred to Appendix \ref{learning_spheres}.

\subsection{Boundary Conditions}\label{sec:boundary_conditions}
Lastly, it is crucial to ensure that a dynamical system evolving in $\mathcal{T}$ always remains within this manifold. Two scenarios are relevant to this work: 1) ensuring $\mathcal{T}$ is positively invariant with respect to $f^{\mathcal{T}}_{\theta}$; and 2) considering the state's geometry when computing the evolution of the dynamical system.

\subsubsection{Positively Invariant Sets}
In PUMA, the stability of a motion is enforced by randomly sampling points from $\mathcal{T}$ and minimizing $\ell_{\text{stable}}$. Thus, stability cannot be ensured in regions where this loss is not minimized, i.e., outside of $\mathcal{T}$. When boundaries are imposed on the robot's workspace, the learned dynamical system $f^{\mathcal{T}}_{\theta}$ can potentially evolve towards these boundaries, leaving $\mathcal{T}$. Hence, to ensure stability, a state evolving within $\mathcal{T}$ must not leave $\mathcal{T}$. In other words, $\mathcal{T}$ has to be a \emph{positively invariant set} with respect to $f^{\mathcal{T}}_{\theta}$ \citep{lemme2014neural,khalil2002nonlinear}.

To address this, we design the dynamical system so that it cannot leave $\mathcal{T}$ by construction. This can be achieved by projecting any transitions that would leave $\mathcal{T}$ back onto its boundary. For example, in Euclidean state spaces, $\mathcal{T}$ can be represented as a hypercube; therefore, in this case, this projection is achieved by saturating/clipping the points that leave $\mathcal{T}$. Furthermore, we found that introducing an additional loss, denoted as $\ell_{\partial}$, can be beneficial in enforcing this condition through the optimization process of the DNN. As proposed in \citet{lemme2014neural}, this can be accomplished by using the scalar product between the dynamical system's velocity $v(x_{t})$ (which equates to $f^{\mathcal{T}}_{\theta}$ for first-order systems) and the outward-pointing normal vector $n(x_{t})$ at states within the boundary of $\mathcal{T}$. This product should be ensured to be equal to or less than zero. Then, to achieve this for DNNs, we introduce the following loss function
\begin{equation}
\ell_{\partial} = \max(0, n(x_{t}) \cdot v(x_{t})).
\end{equation}

\subsubsection{Evolving in Non-Euclidean State Spaces}
\label{sec:boundary_non_euc}
When modeling dynamical systems in non-Euclidean state spaces, it is crucial to ensure that states do not evolve outside the manifold representing them. For instance, unit quaternions must remain within the unit sphere. However, if these states are evolved using Euclidean geometry tools, such as the forward Euler integration method, deviations from the manifold are likely to occur. 

Non-Euclidean state spaces are commonly defined within a higher-dimensional Euclidean space and can sometimes be constructed by incorporating constraints into this space. Consider the unit quaternion as an example: its state space consists of every vector in $\mathbb{R}^{4}$ with a unit norm, forming the 3-sphere $\mathcal{S}^{3} \subseteq \mathbb{R}^{4}$. Therefore, in such scenarios, by integrating an operation that enforces these constraints, such as normalization for $\mathcal{S}^{3}$, into the structure of the DNN representing $f^{\mathcal{T}}_{\theta}$, we ensure that transitions remain within the manifold. Moreover, in this way, the distortions that this operation introduces into the dynamical system's output are factored into the DNN's optimization process, ensuring they are accounted for.

Alternatively, when we have access to the \emph{Riemannian metric} of a state space manifold, it is possible to use the \emph{exponential} and \emph{logarithmic maps} to do Euclidean calculus in the \emph{tangent bundle} of the manifold, and then map the solution back on the manifold (the reader is referred to \cite{sommer2020introduction} for more details).



\section{Experiments}
\label{sec:experiments}
We validate our method with three datasets, each allowing us to study different aspects of it. For evaluation purposes, we use these datasets under the assumption of perfect tracking of the desired state derivatives, $\dot{x}_{t}^{d}$, provided by $f^{\mathcal{T}}_{\theta}$, without any involvement of robots in this process. Subsequently, we test our method in two real-world settings using two different robots. We have made our code implementation of PUMA publicly available at: \url{https://github.com/rperezdattari/Deep-Metric-IL-for-Stable-Motion-Primitives}. Details on the DNN's hyperparameters optimization process are presented in Appendix \ref{appendix:hyperparam}.

\subsection{Euclidean Datasets}
First, we evaluate our method using datasets of Euclidean motions. This approach allows us to examine the performance of different variations of PUMA for Euclidean dynamical systems and compare their effectiveness against state-of-the-art methods.
\subsubsection{LASA}\label{sec:exps_lasa}
The LASA dataset \citep{khansari2011learning} is composed of 30 human handwriting motions, each consisting of 7 demonstrations of desired trajectories under different initial conditions. These demonstrations are two-dimensional and designed to be modeled using first-order systems, i.e., output desired velocities as a function of their positions. To compare accuracy performance between different models, we employ the same metrics in every experiment: 1) Root Mean Squared Error (RMSE), Dynamic Time Warping Distance (DTWD) \citep{muller2007dynamic} and Fr{\'e}chet Distance (FD) \citep{eiter1994computing}.
\begin{figure}[t]
\centering
    \subfloat[][Comparison of different variations of PUMA.]{\includegraphics[width=0.49\linewidth]{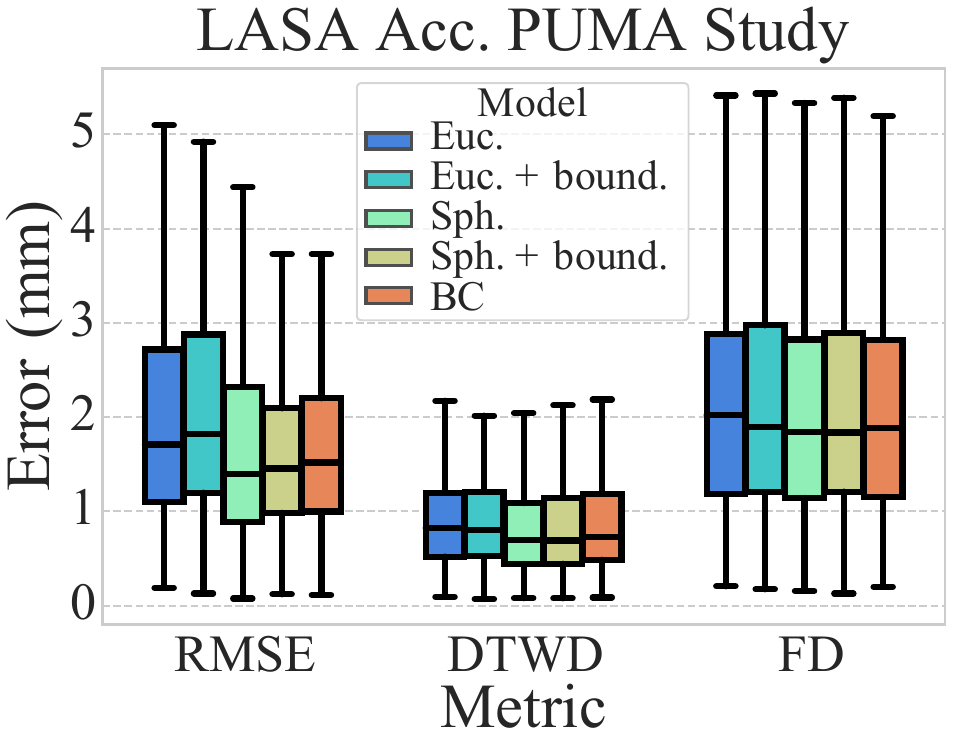}\label{fig:LASA_plots_a}}\hspace{0.03cm}
    \subfloat[][Comparison of PUMA with other state-of-the-art (SoA) methods.]{\includegraphics[width=0.49\linewidth]{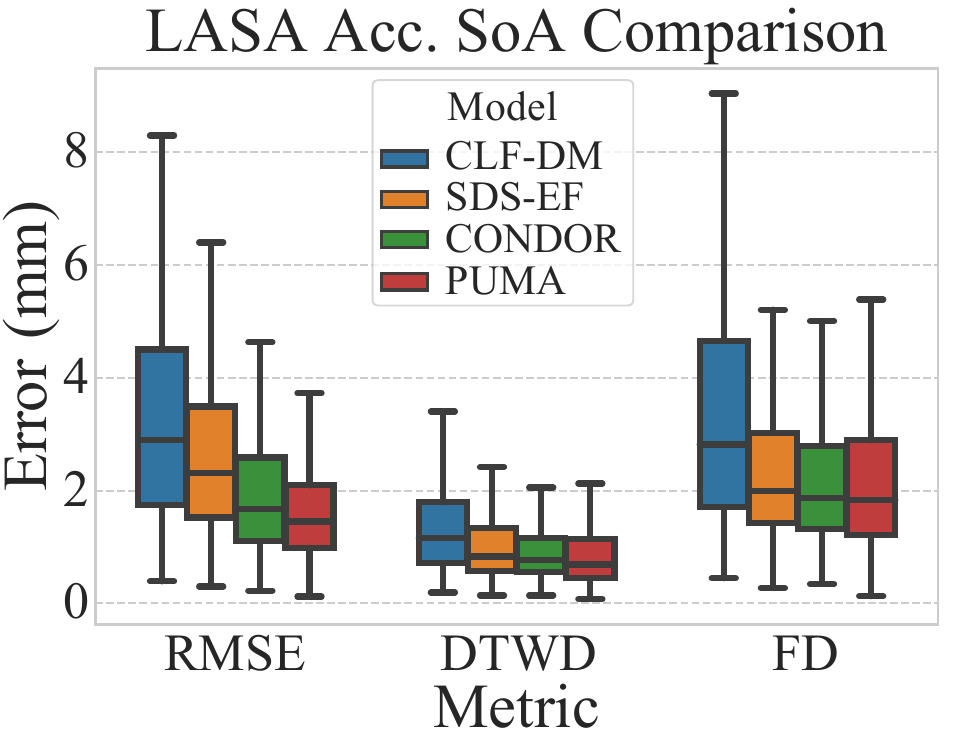}\label{fig:LASA_plots_b}}
\caption{Accuracy study in the LASA dataset.}
\label{fig:LASA_plots}
\end{figure}

\paragraph{Accuracy}
Fig. \ref{fig:LASA_plots_a} showcases the accuracy of four variations of PUMA. Here, we compare the performance of different distance metrics in $\ell_{\text{stable}}$ for motions in Euclidean spaces, focusing on the Euclidean distance and the great-circle (spherical) distance. Furthermore, we also examine the influence of $\ell_{\partial}$ on the accuracy of the learned motions. We use Behavioral Cloning (BC) without a stability loss as an upper performance bound for comparison. Interestingly, each variation of PUMA achieves a similar performance; however, PUMA with a spherical metric shows slightly better performance than PUMA with an Euclidean metric. This suggests that the DNN has no difficulties in mapping the Euclidean space $\mathcal{T}$ into a spherical space $\mathcal{L}$. Lastly, we can also observe that the use of $\ell_{\partial}$ does not harm the accuracy performance of PUMA.

\paragraph{Stability}
The stability of the motions learned by PUMA hinges on the successful minimization of \eqref{eq:puma_loss}, which we need to test after the learning process concludes empirically. To do this, we integrate the dynamical system over $L$ time steps, starting from $P$ initial states, and observe whether the system converges to the goal. The larger the $P$, the more accurate our results. If $L$ is sufficiently large, the system should reach the goal after $L$ steps. By measuring the distance between the last state visited and the goal, and confirming that it falls below a pre-set threshold $\epsilon$, we can evaluate if a trajectory is successful (i.e., it converges to the goal).

Table \ref{tab:table_lasa} provides the stability results of BC and different PUMA variations. The data shows that every variation of PUMA successfully enforces stability across the dataset, generating no unsuccessful trajectories. Compared to BC, which has a 36.4653\% rate of unsuccessful trajectories, the benefit of the proposed loss in enforcing stability becomes clear.

\begin{table}[t]
\centering
\caption{Percentage of unsuccessful trajectories over the LASA dataset ($L$ = 2500, $P$ = 2500, $\epsilon$ = 1mm).}
\label{tab:table_lasa}
\resizebox{\columnwidth}{!}{%
\begin{tabular}{ccccc}
\hline
\begin{tabular}[c]{@{}c@{}}Behavioral\\ Cloning\end{tabular} & \begin{tabular}[c]{@{}c@{}}PUMA\\ (Euc.)\end{tabular} & \begin{tabular}[c]{@{}c@{}}PUMA\\ (Euc. + $\ell_{\partial}$)\end{tabular} & \begin{tabular}[c]{@{}c@{}}PUMA\\ (Sph.)\end{tabular} & \begin{tabular}[c]{@{}c@{}}PUMA\\ (Sph. + $\ell_{\partial}$)\end{tabular} \\ \hline
36.4653\%                                                   & \textbf{0.0000\%}                                    & \textbf{0.0000\%}                                                        & \textbf{0.0000\%}                                    & \textbf{0.0000\%}                                                        \\ \hline
\end{tabular}
}
\end{table}

\paragraph{State-of-the-art comparison}
Fig. \ref{fig:LASA_plots_b} presents an accuracy comparison of PUMA with other state-of-the-art methods, namely: 1) Control Lyapunov Function-based Dynamic Movements (CLF-DM) using Gaussian Mixture Regression (GMR) \citep{khansari2014learning}, 2) Stable Dynamical System learning using Euclideanizing Flows (SDS-EF) \citep{rana2020euclideanizing}, and 3) CONDOR. The results for PUMA correspond to the best-performing variation in this dataset, i.e., spherical distance with $\ell_{\partial}$. We observe that PUMA achieves competitive results, demonstrating similar performance in DTWD and FD to CONDOR and SDS-EF, and slightly superior performance under RMSE.
\begin{figure}[t]
\centering
    \subfloat[][\emph{S} shape PUMA.]{\includegraphics[width=0.49\linewidth]{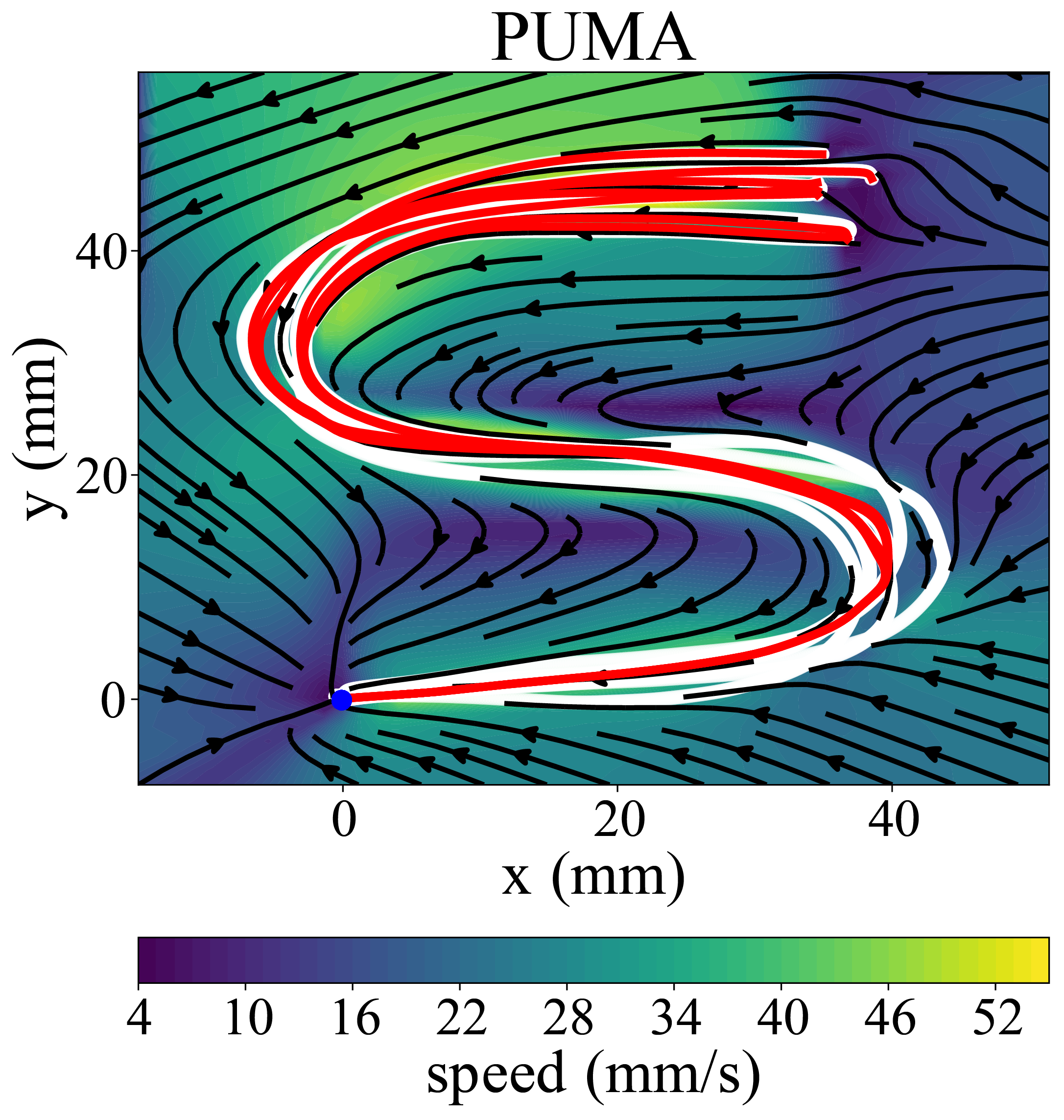}\label{fig:boundary_loss_a}}
    \subfloat[][\emph{S} shape CONDOR.]{\includegraphics[width=0.49\linewidth]{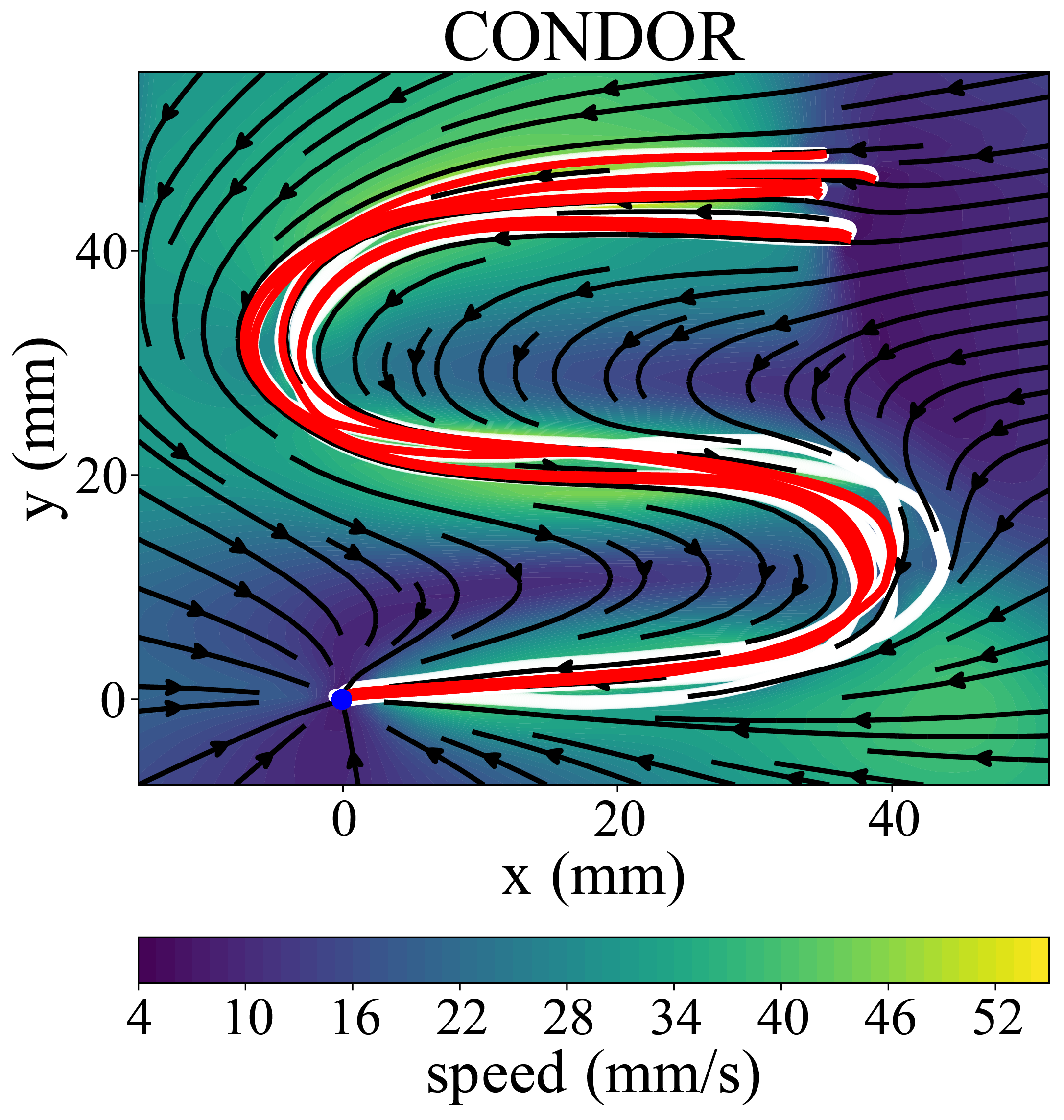}\label{fig:boundary_loss_b}}\newline
    \subfloat[][\emph{S} shape PUMA with $\ell_{\partial}$.]{\includegraphics[width=0.49\linewidth]{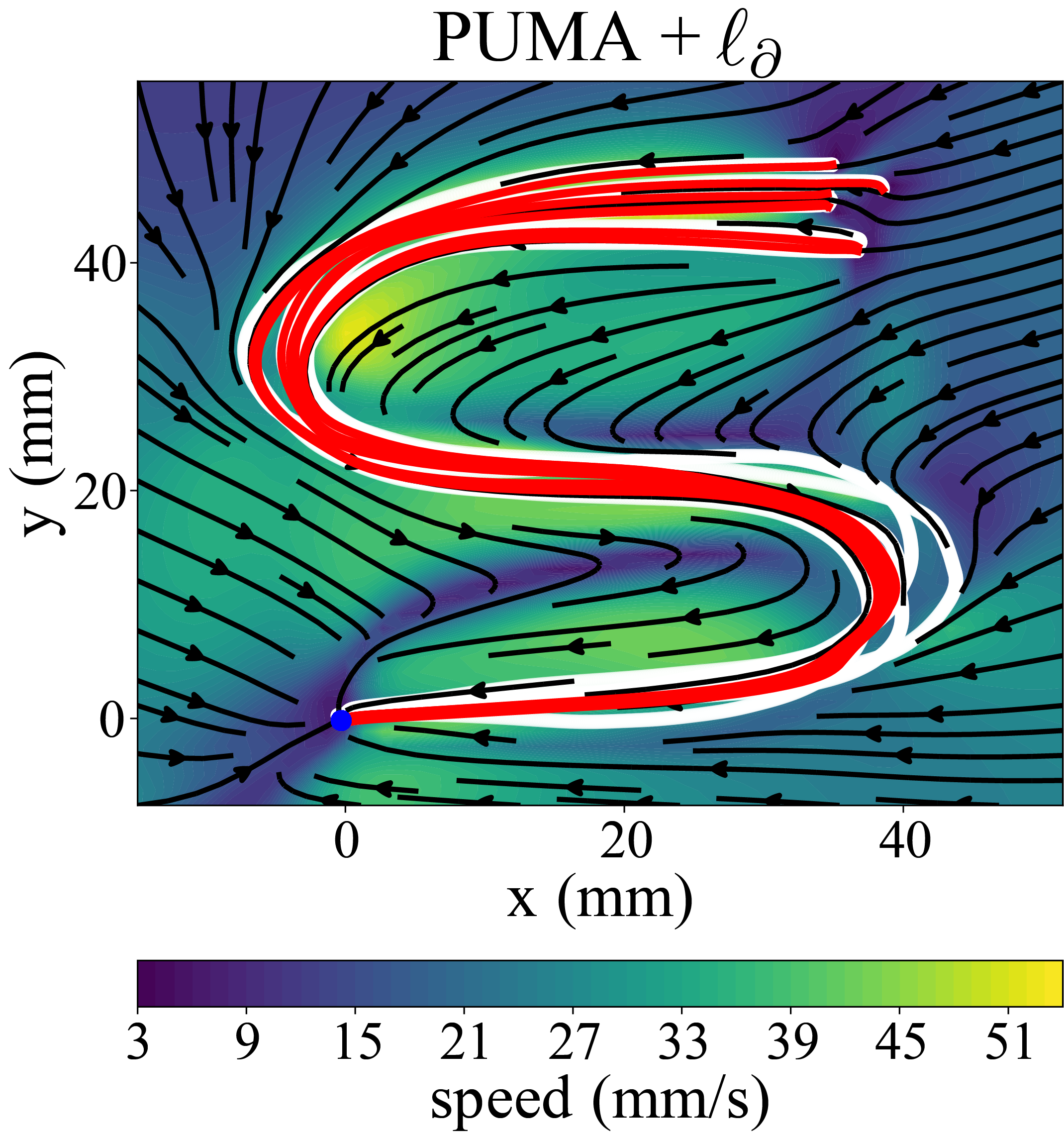}\label{fig:boundary_loss_c}}
    \subfloat[][Positive invariance comparison.]{\includegraphics[width=0.5\linewidth]{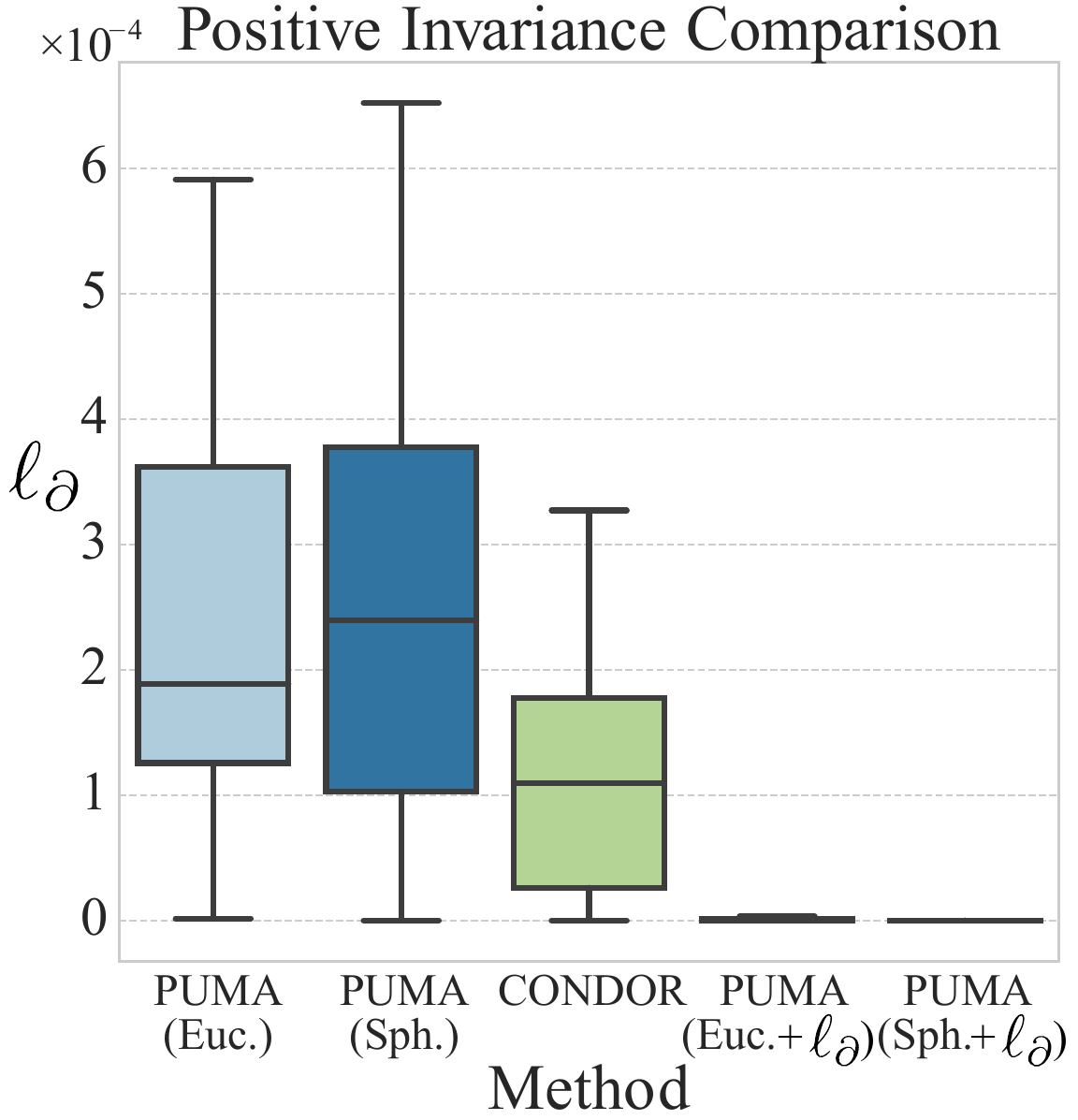}\label{fig:boundary_loss_d}}
\caption{Positive invariance evaluation. In figures (a)-(c), white curves represent demonstrations. Red curves represent learned motions when starting from the same initial conditions as the demonstrations. The arrows indicate the vector field of the learned dynamical system. Figure (d) provides a positive invariance comparison between CONDOR and the variations of PUMA.}
\label{fig:boundary_loss}
\end{figure}

\paragraph{Boundary loss}
Fig. \ref{fig:boundary_loss} demonstrates the effect of the boundary loss $\ell_{\partial}$ in PUMA. Figures \ref{fig:boundary_loss_a} and \ref{fig:boundary_loss_b} present an example of the qualitative performance of PUMA and CONDOR when no boundary loss is applied. In the top-left region of these images, PUMA exhibits non-smooth trajectories at the boundary, which abruptly change direction due to the applied saturation (see Sec. \ref{sec:boundary_conditions}). Conversely, CONDOR learns a smoother trajectory in this region. This feature in PUMA vanishes when $\ell_{\partial}$ is applied, as depicted in Fig. \ref{fig:boundary_loss_c}. Finally, Fig.~\ref{fig:boundary_loss_d} provides a quantitative evaluation of the boundary loss, confirming our qualitative observations.

In this work, this loss is \textbf{only relevant for Euclidean state spaces} because we do not introduce boundaries in the state space when controlling orientation in $\mathcal{S}^{3}$.

\subsubsection{LAIR}
Contrary to the LASA dataset, the LAIR dataset \citep{perez2023stable} is specifically designed to evaluate second-order motions, i.e., those that map current position and velocity to acceleration. This dataset comprises 10 human handwriting motions, with the state being 4-dimensional, encompassing a 2-dimensional position and velocity. The dataset's shapes contain multiple position intersections, intentionally designed to necessitate the use of at least second-order systems for their successful modeling.

\paragraph{CONDOR / PUMA comparison}
The state-of-the-art methods outlined in Sec.~\ref{sec:exps_lasa}, excluding CONDOR, do not address the challenge of learning stable second-order systems. This can be attributed to the greater difficulty inherent in learning such systems compared to first-order systems. As a result, we compare the performance of PUMA with that of CONDOR. Fig.~\ref{fig:LAIR_plots_a} illustrates the comparative accuracy of both methods, with PUMA surpassing CONDOR on all metrics. PUMA's greater learning flexibility allows it to converge to solutions beyond CONDOR's capabilities. Moreover, this enhanced flexibility, as depicted in Fig.~\ref{fig:LAIR_plots_b}, endows PUMA with a more stable learning process. In practice, this makes a significant difference, as motions with unsuccessful trajectories must be discarded when considering the system's stability. Therefore, greater learning stability results in a larger set of motions without unsuccessful trajectories, providing more alternatives of successfully learned systems to choose from.
\begin{figure}[t]
\centering
    \subfloat[][Accuracy comparsion.]{\includegraphics[width=0.5\linewidth]{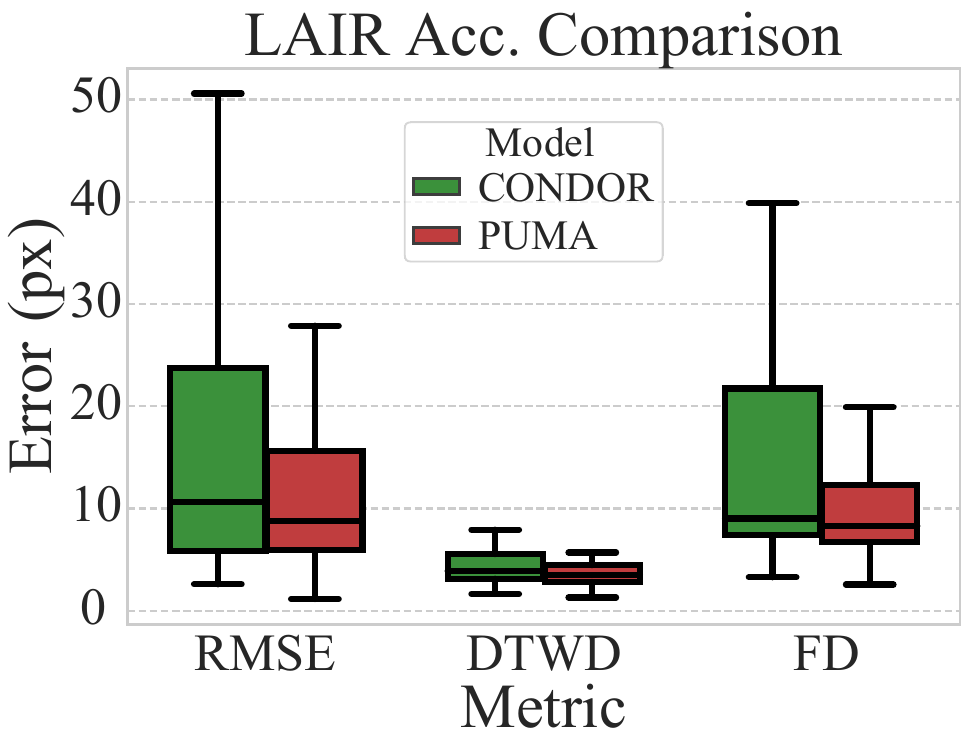}\label{fig:LAIR_plots_a}}\hspace{0.05cm}
    \subfloat[][Training stability comparison measured as the average distance to the goal.]{\includegraphics[width=0.48\linewidth]{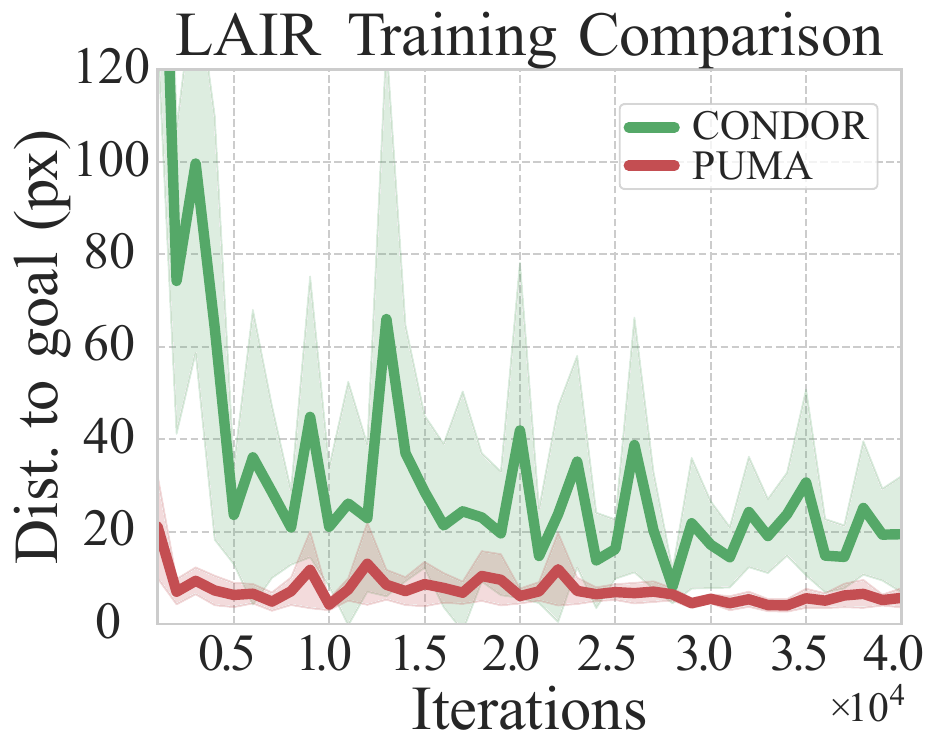}\label{fig:LAIR_plots_b}}
\caption{LAIR dataset PUMA / CONDOR comparison.}
\label{fig:LAIR_plots}
\end{figure}

\paragraph{Stability}
Table \ref{tab:table_lair} presents the results of stability analysis for PUMA on the LAIR dataset. Similar to the findings with the LASA dataset, every variation of PUMA achieves a 0\% rate of unsuccessful trajectories. This validates PUMA's ability to learn stable second-order motions.
\begin{table}[t]
\centering
\caption{Percentage of unsuccessful trajectories over the LAIR dataset ($L$ = 2500, $P$ = 2500, $\epsilon$ = 10px).}
\label{tab:table_lair}
\resizebox{\columnwidth}{!}{%
\begin{tabular}{ccccc}
\hline
\begin{tabular}[c]{@{}c@{}}Behavioral\\ Cloning\end{tabular} & \begin{tabular}[c]{@{}c@{}}PUMA\\ (Euc.)\end{tabular} & \begin{tabular}[c]{@{}c@{}}PUMA\\ (Euc. + $\ell_{\partial}$)\end{tabular} & \begin{tabular}[c]{@{}c@{}}PUMA\\ (Sph.)\end{tabular} & \begin{tabular}[c]{@{}c@{}}PUMA\\ (Euc.+ $\ell_{\partial}$)\end{tabular} \\ \hline
14.1160\%                                                   & \textbf{0.0000\%}                                    & \textbf{0.0000\%}                                                        & \textbf{0.0000\%}                                    & \textbf{0.0000\%}                                                       \\ \hline
\end{tabular}
}
\end{table}

\subsection{Non-Euclidean dataset: LASA $\mathcal{S}^{2}$}
The LASA $\mathcal{S}^{2}$ dataset \citep{urain2022learning} comprises 24 motions, each with 3 demonstrations. Unlike the LASA dataset, the LASA $\mathcal{S}^{2}$ dataset represents these motions in spherical geometry, as indicated by its name, in $\mathcal{S}^{2}$. The sphere, where the motions evolve, is structured similarly to unit quaternions, though with one less dimension. Essentially, we have a 3-dimensional Euclidean space where vectors are constrained to have a unit norm. As a result, the state space is a 2-dimensional spherical manifold embedded in this 3-dimensional space. This setup allows us to examine the performance of PUMA in a manifold with similar attributes to those of unit quaternions. However, the more straightforward visualization of this setup facilitates a more intuitive analysis of PUMA's performance.

\subsubsection{Accuracy}
Fig. \ref{fig:LASA_S2_plots_a} presents the performance of two variations of PUMA, namely when using Euclidean and spherical distance functions. Moreover, BC is included, serving as a lower-bound reference. As explained in Sec.~\ref{sec:exps_lasa}, the boundary loss $\ell_{\partial}$ does not apply in this case and is, therefore, not evaluated. We can observe that the accuracy performance of both variations of PUMA is very similar, and BC performs slightly better than both of them.
\begin{figure}[t]
\centering
    \subfloat[][Comparison of different variations of PUMA.]{\includegraphics[width=0.49\linewidth]{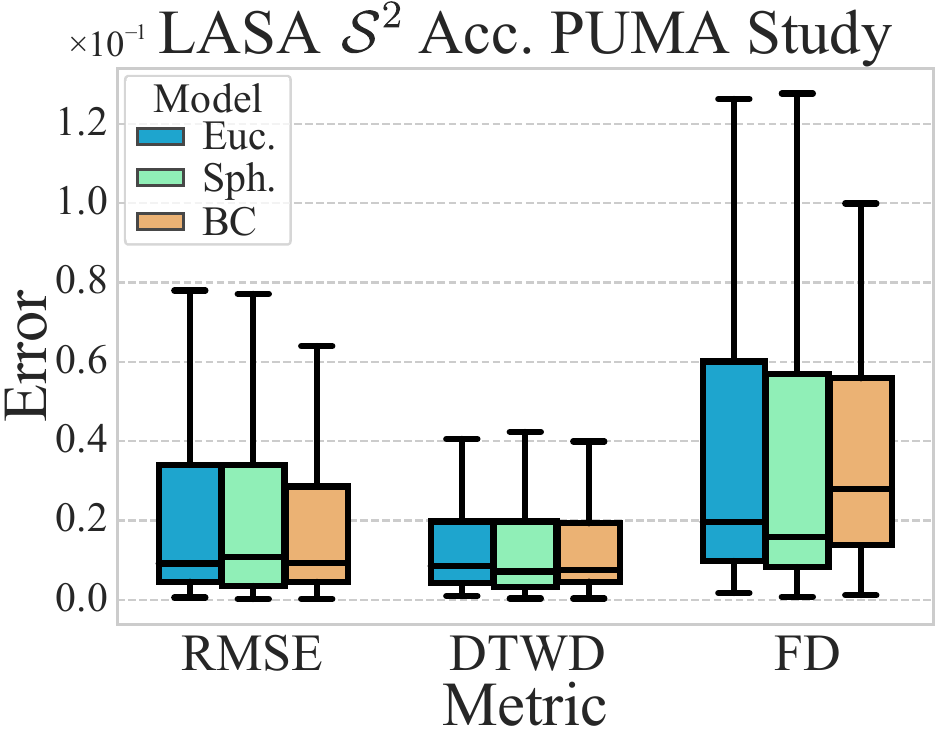}\label{fig:LASA_S2_plots_a}}\hspace{0.03cm}
    \subfloat[][Comparison of PUMA with other state-of-the-art (SoA) methods.]{\includegraphics[width=0.49\linewidth]{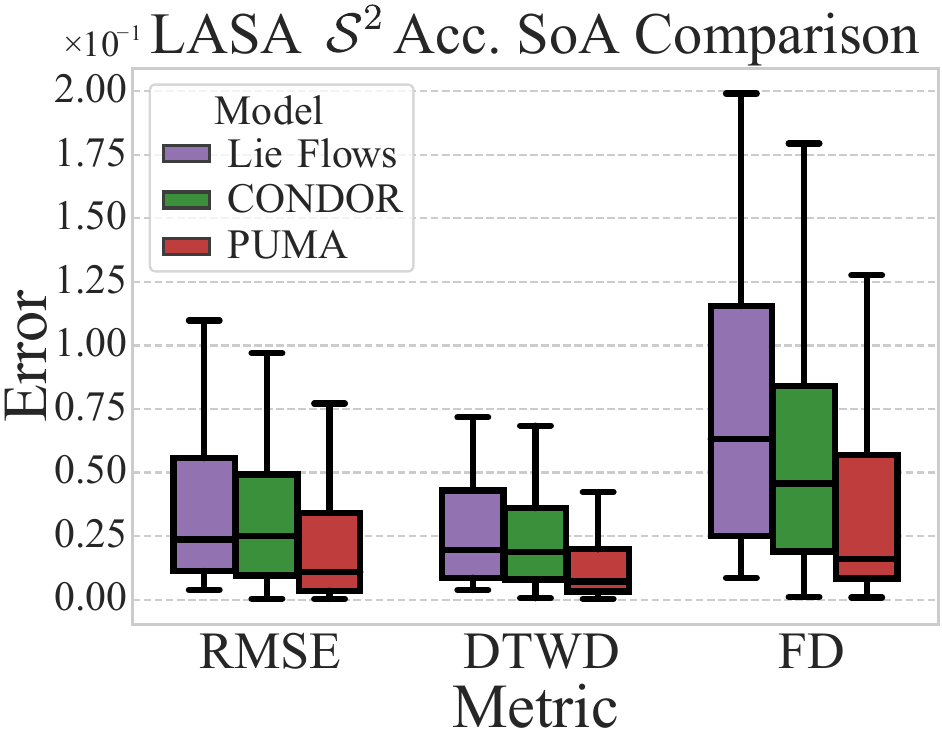}\label{fig:LASA_S2_plots_b}}
\caption{Accuracy study in LASA $\mathcal{S}^{2}$ dataset.}
\label{fig:LASA_S2_plots}
\end{figure}

\subsubsection{Stability}
Table~\ref{tab:table_lasa_s2} depicts the stability results for the LASA $\mathcal{S}^{2}$ dataset. In this case, we also conducted a stability analysis of CONDOR, which, as discussed in Sec.~\ref{sec:related_works}, should not be capable of ensuring stability in non-Euclidean state spaces. This assertion is confirmed by the data in Table~\ref{tab:table_lasa_s2}, which shows that CONDOR yields 7.3133\% of unsuccessful trajectories. When compared to BC, which has a 15.9217\% rate of unsuccessful trajectories, it can be inferred that CONDOR reduces the number of unsuccessful trajectories in non-Euclidean state spaces. However, its performance is still far from satisfactory. In contrast, both versions of PUMA achieve 0\% of unsuccessful trajectories, marking a significant improvement. Moreover, this validates that the Euclidean distance is effective for enforcing stability in spheres, as it can induce spherical metrics, such as the chordal distance, in lower-dimensional manifolds (see Appendix~\ref{learning_spheres}).
\begin{table}[t]
\centering
\caption{Percentage of unsuccessful trajectories over the LASA $\mathcal{S}^{2}$ dataset ($L$ = 2500, $P$ = 2500, $\epsilon$ = 0.06).}
\label{tab:table_lasa_s2}
\begin{tabular}{cccl}
\hline
\begin{tabular}[c]{@{}c@{}}Behavioral\\ Cloning\end{tabular} & CONDOR    & \begin{tabular}[c]{@{}c@{}}PUMA\\ (Euc.)\end{tabular} & \begin{tabular}[c]{@{}l@{}}PUMA\\ (Sph.)\end{tabular} \\ \hline
15.9217\%                                                   & 7.3133\% & \textbf{0.0000\%}                                    & \textbf{0.0000\%}                                    \\ \hline
\end{tabular}
\end{table}

\subsubsection{Qualitative Analysis}
Fig. \ref{fig:LASA_S2_quali} illustrates motions learned with PUMA in $\mathcal{S}^{2}$ using both spherical and Euclidean distances. Within the observed region of the sphere, the vector field exhibits no spurious attractors. Furthermore, when initiated from the same conditions as the demonstrations, the trajectories in both cases show an accurate reproduction of the motions.
\begin{figure}[t]
\centering
    \subfloat[][Results using the great-circle distance.]{\includegraphics[width=0.95\linewidth]{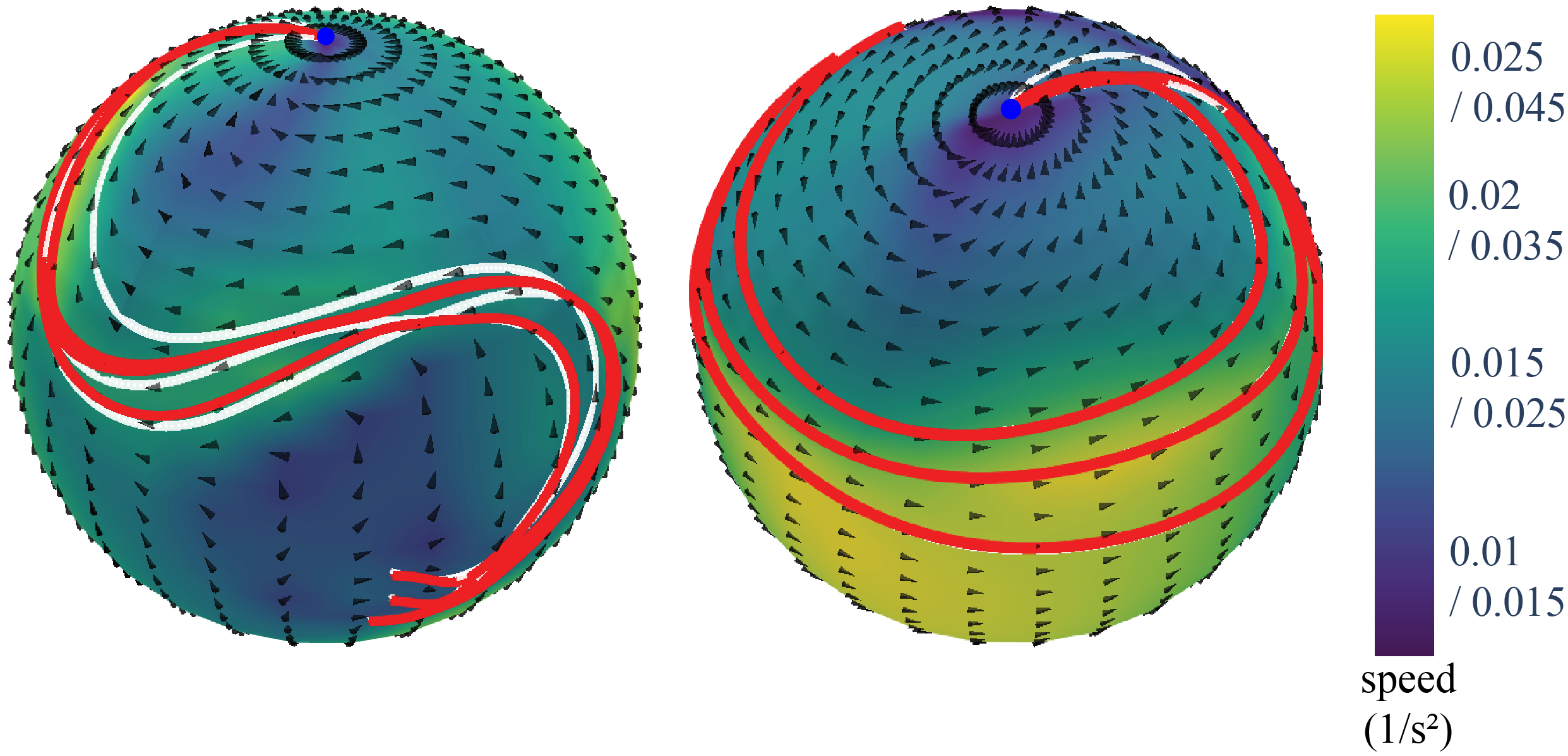}\label{fig:LASA_S2_quali_a}}\\
    \subfloat[][Results using the Euclidean (chordal) distance.]{\includegraphics[width=0.95\linewidth]{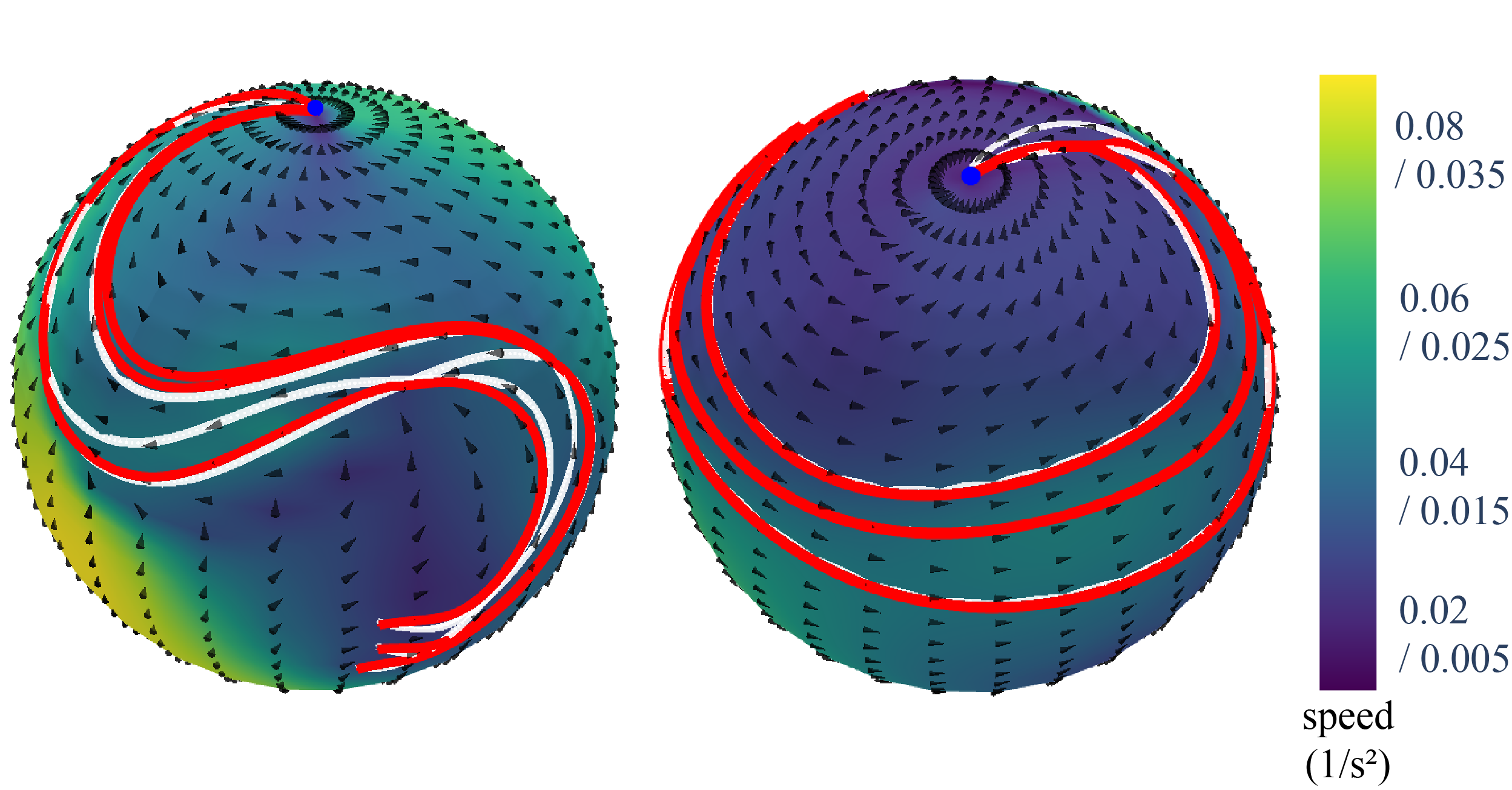}\label{fig:LASA_S2_quali_b}}
\caption{Motions learned with PUMA in $\mathcal{S}^{2}$. White curves represent demonstrations. Red curves represent learned motions when starting from the same initial conditions as the demonstrations. The arrows indicate the vector field of the learned dynamical system.}
\label{fig:LASA_S2_quali}
\end{figure}

\subsubsection{State-of-the-art Comparison}
Fig. \ref{fig:LASA_S2_plots_b} depicts a comparison between three methods: 1) CONDOR, 2) PUMA, and 3) \emph{Lie Flows} \citep{urain2022learning}, a method developed for learning stable motions in non-Euclidean manifolds using tools from Lie Theory. PUMA outperforms both CONDOR and Lie Flows in terms of accuracy. Moreover, in contrast to CONDOR, PUMA also successfully ensured stability in this dataset. 

\subsection{Real-world Experiments}
\label{sec:real_world}
We validate our method in two real-world setups, using two different robots. Both robots have six degrees of freedom, with their end-effector pose represented in $\mathbb{R}^{3} \times \mathcal{S}^{3}$ and are guided using PUMA. In these experiments, $f_{\theta}^{\mathcal{T}}$ is employed to generate velocity references in real-time, which are then sent to the low-level controllers responsible for tracking these references in the robots. Note that throughout the experiments, we operated under the assumption that the target states of the robots were already known.

\begin{figure*}[t]
    \centering
    \includegraphics[width=\linewidth]{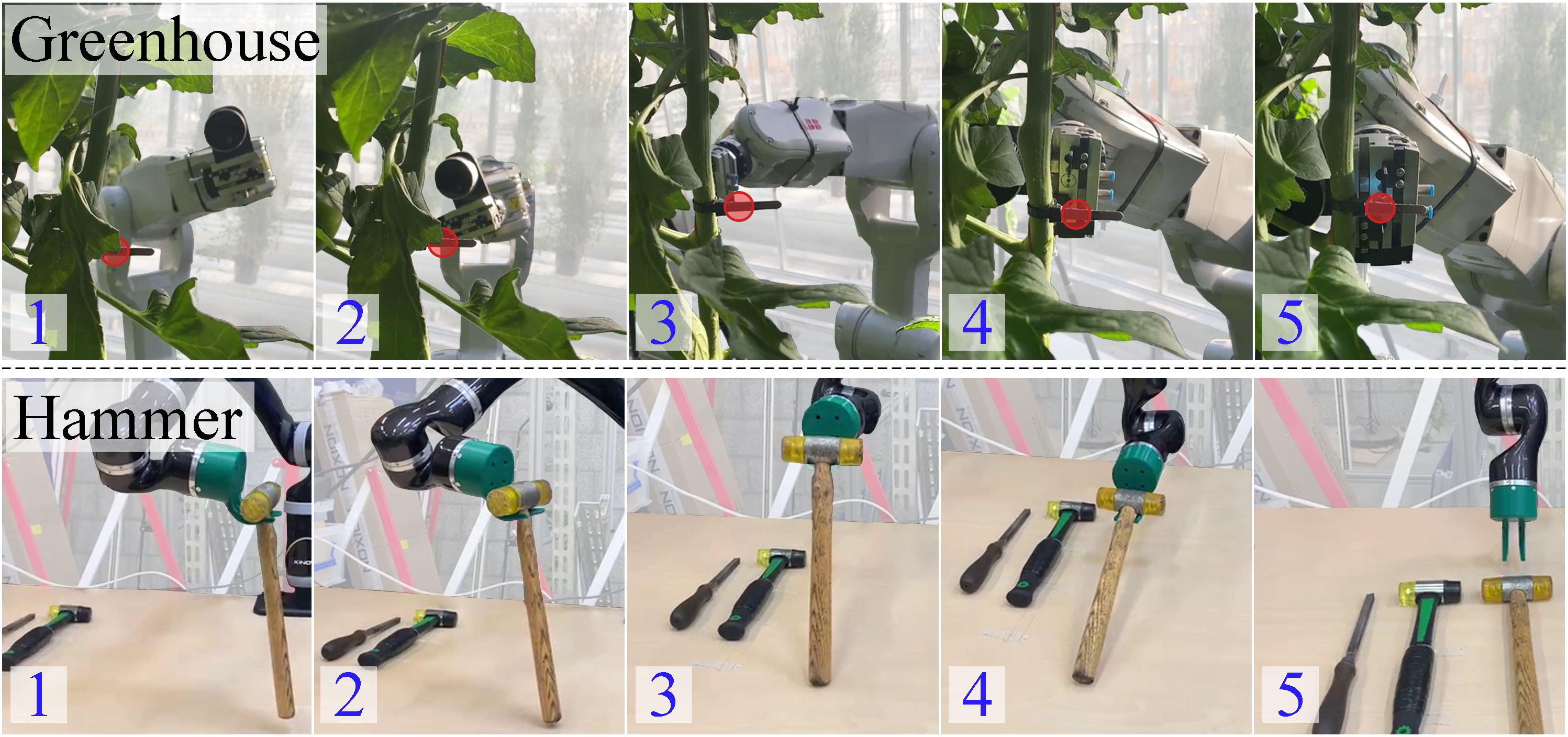}
    \caption{Two sequences of frames, each depicting a robot performing a task: 1) operating in a greenhouse, and 2) placing a hammer. The red circle is used to highlight a target marker in the greenhouse experiment.}
    \label{fig:sequence}
\end{figure*}

\begin{figure}[t]
    \centering
    \includegraphics[width=\linewidth]{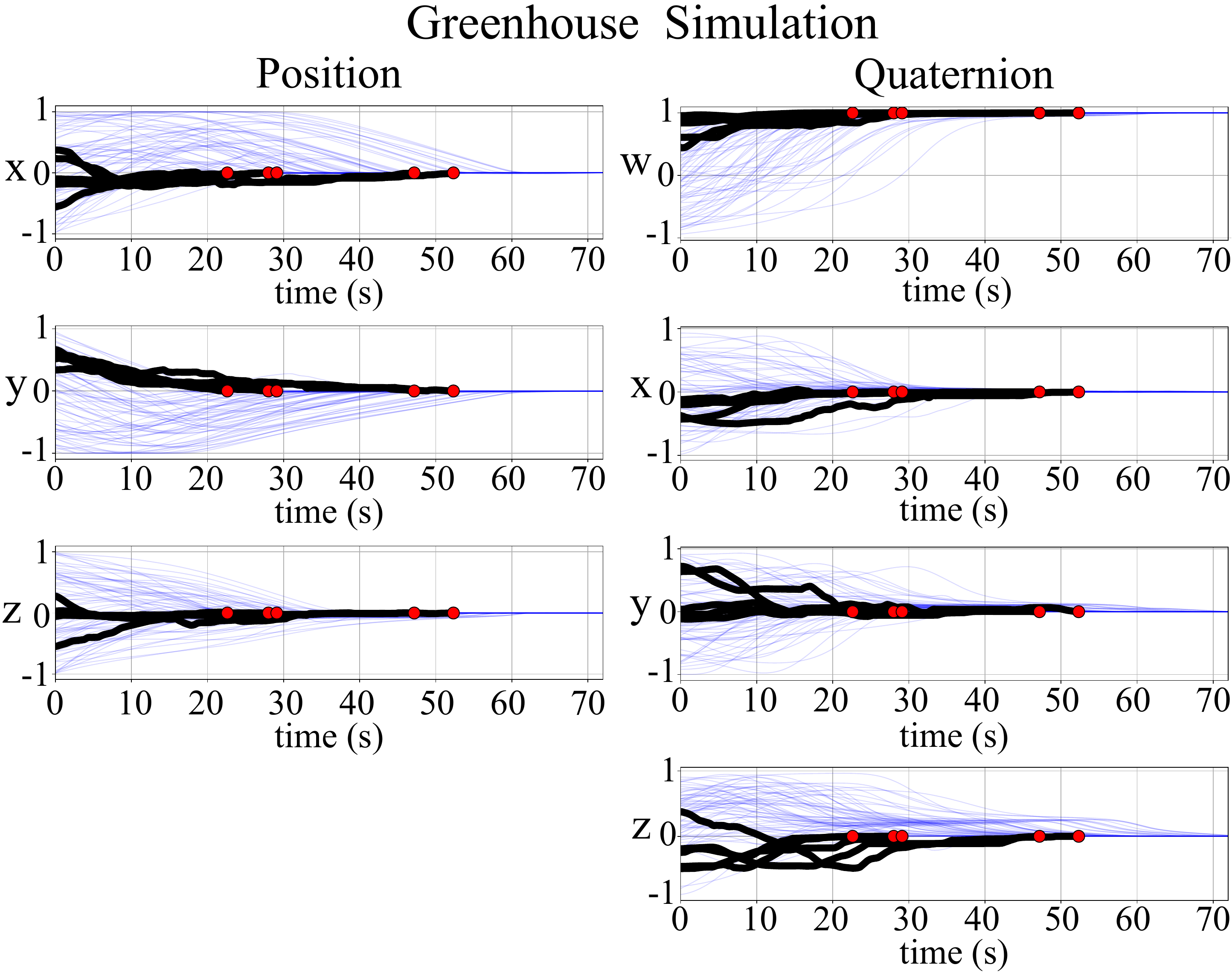}
    \caption{Simulated trajectories, as a function of time, of the greenhouse experiment. Blue trajectories correspond to evaluations of the model under different initial conditions and the black trajectory corresponds to the demonstration. The red point is the goal.}
    \label{fig:greenhouse_simulation}
\end{figure}

\subsubsection{Greenhouse}
This experiment was conducted in a greenhouse, where a robot was trained to reach a black marker on a tomato plant (see Fig. \ref{fig:sequence}). This marker, simulating a plant's peduncle, represented the target state the robot must reach to harvest the plant. Here, the marker allowed us to test the method without causing harm to the plant. This task presents significant challenges, as it requires the robot to perform precise position and orientation control while considering the plant's complex geometry, which is non-trivial to model. In a practical setting, a motion library should be developed to account for the variability across different plants and targets. This library could then be used to select or combine suitable motions to create an appropriate movement strategy considering the plant's characteristics. This experiment represents a preliminary step towards achieving this objective.

We utilized an \emph{ABB IRB 1200} robot equipped with the \emph{Externally Guided Motion}\footnote{Code: \url{https://github.com/ros-industrial/abb_robot_driver}.} module, allowing real-time joint velocity control. As the motion takes place in the end-effector space, the desired velocity at each time step was calculated using $f_{\theta}^{\mathcal{T}}$, and subsequently mapped to the joint space. This conversion was accomplished using the inverse kinematics module with joint limits from the \emph{Robotics Toolbox} \citep{robotics-toolbox}. Similarly, we translated the robot's estimated joint state into the end-effector space using the forward kinematics module from the same toolbox. The same approach was taken to collect the demonstrations of this task directly in the end-effector space, where a space mouse was employed to teleoperate the robot.

Fig. \ref{fig:greenhouse_simulation} presents the demonstrations and simulations of motions conducted in this experiment. It can be observed that five demonstrations were provided. Note that these demonstrations do not have the same end time; only the final state needs to be consistent. Over a span of 70 seconds, all simulated trajectories visibly converge to the goal. For more details, the reader is referred to the attached video.

\subsubsection{Hammer}
The second experiment involves a robot accurately positioning a hammer next to other tools on a table. This task requires precise control of the robot's position and orientation, as shown in Fig. \ref{fig:sequence}. We are interested in controlling the hammer's movement, so we assume that the state of the hammer directly relates to the state of the robot's end effector. This assumption is necessary because the employed low-level controller acts on the robot's end-effector state, and we ensure stability in this state space. Given that the hammer is attached to the robot through a double hook, this assumption is valid as long as the hammer's head is securely held. This assumption breaks down towards the end of the motion when the robot places the hammer on the table. However, since the final states of both the hammer and the robot are similar, we observed that ensuring stability in the end-effector's motion effectively guides the hammer's motion towards its goal state, as can be seen in Fig. \ref{fig:sequence} and in the attached video.

We used the \emph{Kinova Gen2 Ultra lightweight robot} arm with a double hook replacing its default gripper. The control commands obtained from $f_{\theta}^{\mathcal{T}}$ were directly sent to a Cartesian space controller from Kinova\footnote{Kinova's controllers: \url{https://github.com/Kinovarobotics/kinova-ros}}. The demonstrations were collected through kinesthetic teaching, i.e., physically moving the robot along desired paths. The simulated performance in this task was similar to that in the greenhouse, given that the state space was identical in both cases. We refer the reader to the attached video for examples of the robot executing this task from various initial conditions.
\section{Conclusions}
We introduced a novel approach for learning stable robotic motions in both Euclidean and non-Euclidean state spaces. To achieve this, we introduced a new loss function based on the triplet loss from the deep metric learning literature. We validated this loss both theoretically and experimentally.

Our approach, PUMA, showcased state-of-the-art performance in every experiment, both in terms of accuracy and stability. It was validated using datasets where the dynamical system evolution was simulated, as well as real robotic platforms where it was employed to provide control commands to low-level controllers. 

Compared to previous work, PUMA offers not only an improvement in addressing non-Euclidean state spaces but also increased flexibility by reducing restrictions in the latent space of the DNN, leading to generally better performance. More specifically, in previous work, the latent dynamical system is constrained to evolve along straight lines towards the goal. In contrast, PUMA allows the latent dynamical system to converge towards a broader range of stable dynamical systems, since its loss function only enforces the latent dynamical system to reduce the distance towards the goal. This feature expands the set of feasible solutions available to the DNN during optimization, thereby enhancing the model's adaptability.

Lastly, while this paper's findings are promising, there are also limitations that can be addressed in future works. First, further exploration of the scalability properties of PUMA is needed, as the largest DNN input employed in this paper had 7 dimensions. Second, we have so far focused on learning independent motion primitives for specific tasks. A relevant line of research would be integrating this model into a larger framework where multiple primitives are learned and combined together. Third, a topic that we did not address in this work is obstacle avoidance. Recent techniques, such as Geometric Fabrics \cite{van2022geometric}, exploit dynamical systems to represent motions that achieve full-body obstacle avoidance, which can be integrated with dynamical systems learned in the end-effector space of the robot. Consequently, an exciting avenue for future work is exploring the combination of these methods with PUMA. Finally, another interesting research direction is studying the integration of these approaches with the low-level control of the robots. Currently, it is assumed that the transitions requested by PUMA can be tracked by the controllers of the robot. However, this is not always the case, and it would be beneficial to incorporate this information into the learning framework.

\section*{\large Acknowledgements}
The authors would like to thank Xin Wang, Thomas Versmissen, Bastiaan Vroegindeweij, Rekha Raja, Gert Kootstra, and Eldert van Henten of the Agricultural Biosystems Engineering Group at Wageningen University and Research. Their assistance during the real-world experiments of this work, conducted at their facilities, was invaluable.

\section*{\large Funding}
This research is funded by the Netherlands Organization for Scientific Research project Cognitive Robots for Flexible Agro-Food Technology, grant P17-01, and by the National Growth Fund program NXTGEN Hightech.

\section*{Appendix}
\appendix
\section{Upper Bound $\beta$}
\label{app:upperbound}
In this section, we introduce and prove the proposition used in Sec.~\ref{sec:surrogate_stability} to demonstrate that the surrogate stability conditions ensure asymptotic stability in the dynamical system $f_{\theta}^{\mathcal{T}}(x_{t})$.

\begin{proposition}[Existence of class-$\mathcal{KL}$ function]
\label{prop:existence_KL}
Consider a dynamical system ${\dot{y}_{t} = f(y_{t})}$ with ${f:\mathcal{L} \subset \mathbb{R}^{n} \to \mathbb{R}^{n}}$ continuously differentiable, and ${\Phi^{y}_{\theta}(t,y_{0}):\mathbb{R}_{\geq 0} \times \mathcal{L} \to \mathcal{L}}$ as its evolution function. For a distance function $d_{t}$ in $\mathcal{L}$, we define its evolution, for a given $t$ and $y_{0}$, as $\delta:\mathcal{L} \times \mathbb{R}_{\geq 0} \to \mathbb{R}_{\geq 0}$, with $\delta(y_{0}, t)=||y_{\mathrm{g}} - \Phi^{y}_{\theta}(t,y_{0})||$.

Then, consider the function ${\beta:\mathbb{R}_{\geq 0} \times \mathbb{R}_{\geq 0} \to \mathbb{R}_{\geq 0}}$ defined as
\begin{equation}
\label{eq:beta_app}
    \beta(d_{0}, t) = z_{0} + \int_{0}^{t}\dot{z}_{s} ds,
\end{equation}
with derivative
\begin{equation}
\label{eq:dot_z_app}
    \dot{z}_{t} = \alpha \bigl(z_{t} - \delta^{\text{max}}(d_{0}, t)\bigr),
\end{equation}
where $\alpha \in \mathbb{R}_{<0}$ and 
\begin{equation}
\label{eq:delta_max}
    \delta^{\text{max}}(d_{0}, t) = \max_{y_{0} \in \mathcal{Y}_{0}} \left( \max_{s \in [t, t+\Delta t]} \delta(y_{0}, s) \right),
\end{equation}
with $\mathcal{Y}_{0}(d_{0}) = \{y_{0} \in \mathcal{L} : ||y_{\mathrm{g}} - y_{0}|| = d_{0}\}$ and ${\Delta t \in \mathbb{R}_{>0}}$. 
The initial condition $z_{0}$ is set as
\begin{equation}
\label{eq:z0_app}
    z_{0} =\delta^{\text{max}}_{0} +  d_{0},
\end{equation}
where $\delta^{\text{max}}_{0} = \delta^{\text{max}}(d_{0}, 0)$.

Then, under the conditions of Theorem~\ref{theo:puma_surrogate}, i.e., $\forall t \in \mathbb{R}_{\geq0}$,
\begin{enumerate}
    \item $d_{t} = d_{t + \Delta t}$, \hspace{0.6cm} for $y_{0} =y_{\mathrm{g}}$,
    \item $d_{t} > d_{t+\Delta t}$, \hspace{0.5cm} $\forall y_{0} \in \mathcal{L} \setminus \{y_{\mathrm{g}}\}$,
\end{enumerate}
there exists a class-$\mathcal{KL}$ function $\beta$ such that $\forall y_{t} \in \mathcal{L}$ and $\forall t \in \mathbb{R}_{\geq 0}$,
\begin{equation}
    ||y_{\mathrm{g}} - y_{t}|| \leq \beta(d_{0}, t),
\end{equation}
which can also be expressed as 
\begin{equation}
    \delta(y_{0}, t) \leq \beta(d_{0}, t).
\end{equation}
\end{proposition}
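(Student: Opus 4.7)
The plan is to verify, in three stages, that the proposed $\beta$ simultaneously dominates $\delta$ pointwise and possesses each of the five properties listed in Theorem~\ref{theo:puma_surrogate}'s proof (zero at the origin, monotonicity in $d_0$, continuity, monotonicity in $t$, and vanishing at infinity). The scaffolding is: (i) show that $\delta^{\text{max}}$ is a well-defined, continuous, weakly decreasing upper envelope of $\delta$; (ii) show by a comparison argument that the filtered trajectory $z_t = \beta(d_0,t)$ defined by \eqref{eq:dot_z_app}--\eqref{eq:z0_app} stays above $\delta^{\text{max}}$ for all $t$; and (iii) deduce the class-$\mathcal{KL}$ properties from this dominance and the explicit first-order linear ODE for $z_t$.

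For stage (i), I would first observe that $\mathcal{Y}_0(d_0)$ is a compact set (a metric sphere in $\mathcal{L}$) and that $(y_0,s)\mapsto \delta(y_0,s)$ is continuous by continuous differentiability of $f$; hence the double maximum in \eqref{eq:delta_max} is attained, so $\delta^{\text{max}}$ is well-defined and continuous jointly in $(d_0,t)$. Weak monotonicity in $t$ then follows by the argument sketched in the main text: surrogate condition 2 gives $\delta(y_0,s+\Delta t)<\delta(y_0,s)$ pointwise, which, applied slot-by-slot on shifted windows $[t+n\Delta t,\,t+(n+1)\Delta t]$, implies the inner sliding maximum is weakly decreasing in $t$, and taking the outer maximum over $\mathcal{Y}_0(d_0)$ preserves this.

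For stage (ii), I would introduce $w_t = z_t - \delta^{\text{max}}(d_0,t)$. At $t=0$ the construction \eqref{eq:z0_app} yields $w_0 = d_0 \geq 0$, with equality only when $d_0=0$ (i.e., $y_0=y_{\mathrm{g}}$, in which case $\delta\equiv 0$ and the bound is trivial). Using \eqref{eq:dot_z_app}, one has $\dot w_t = \alpha w_t - \tfrac{d}{dt}\delta^{\text{max}}$, and at any hypothetical first crossing $t^\star$ where $w_{t^\star}=0$ we get $\dot w_{t^\star} = -\tfrac{d}{dt}\delta^{\text{max}}(t^\star)\geq 0$ because $\delta^{\text{max}}$ is weakly decreasing. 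A standard Nagumo-type barrier argument then forbids the crossing, so $w_t\geq 0$ for all $t$, and combining with stage (i) gives $\beta(d_0,t)\geq \delta^{\text{max}}(d_0,t)\geq \delta(y_0,t)$ for every $y_0\in\mathcal{Y}_0(d_0)$.

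For stage (iii), most properties are cheap: solving \eqref{eq:dot_z_app} gives the closed form $\beta(d_0,t)=e^{\alpha t}z_0 - \alpha\int_0^t e^{\alpha(t-s)}\delta^{\text{max}}(d_0,s)\,ds$ with $\alpha<0$, from which continuity in $(d_0,t)$ is immediate, $\beta(0,t)=0$ follows from $\delta^{\text{max}}(0,\cdot)\equiv 0$ and $z_0=0$ at $d_0=0$, and strict monotonicity in $d_0$ comes from $z_0=\delta^{\text{max}}_0+d_0$ and the fact that $\delta^{\text{max}}(d_0,\cdot)$ is weakly increasing in $d_0$ as a supremum over a nested family of initial conditions in a continuously parameterized flow. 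Monotonicity in $t$ reduces to $\dot z_t = \alpha(z_t-\delta^{\text{max}})\leq 0$ since stage (ii) showed $z_t\geq \delta^{\text{max}}$ and $\alpha<0$. The hard step --- and the one I expect to be the real obstacle --- is proving $\beta(d_0,t)\to 0$ as $t\to\infty$. It suffices to show $\delta^{\text{max}}(d_0,t)\to 0$, since then the linear filter in \eqref{eq:dot_z_app} drives $z_t$ to the same limit. To establish this I would fix $d_0>0$, note that the sequence $\delta(y_0,n\Delta t)$ is strictly decreasing by surrogate condition 2 and bounded below by $0$, so it admits a limit $L(y_0)\geq 0$; then argue by contradiction that $L(y_0)=0$ uniformly for $y_0\in\mathcal{Y}_0(d_0)$ using compactness of $\mathcal{Y}_0$ and continuity of the flow, which upgrades the pointwise strict inequality in surrogate condition 2 to a uniform decrement $\delta(y,s+\Delta t)\leq \delta(y,s)-c$ for some $c>0$ on any closed annular region $\{L\leq \|y_{\mathrm{g}}-y\|\leq d_0\}$, contradicting boundedness from below. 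Once uniformity is in hand, the two nested maxima in \eqref{eq:delta_max} propagate the limit and close the argument.
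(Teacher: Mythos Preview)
Your three-stage decomposition mirrors the paper's proof closely: the paper also first establishes the continuity and weak monotonicity of $\delta^{\text{max}}$ (its Lemmas~4--7), then shows $\beta\geq\delta^{\text{max}}$ and $\beta$ weakly decreasing (its Lemma~3), and finally checks the remaining class-$\mathcal{KL}$ properties (its Lemmas~1,~8,~10,~11). Your use of the closed-form solution of the linear filter and a compactness/uniform-decrement argument for $\beta\to 0$ are cleaner than the paper's somewhat ad hoc surjectivity argument in Lemma~8, and achieve the same conclusion.

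There is, however, a genuine gap in your monotonicity-in-$d_0$ step. You write that $\delta^{\text{max}}(d_0,\cdot)$ is weakly increasing in $d_0$ ``as a supremum over a nested family of initial conditions''. But $\mathcal{Y}_0(d_0)=\{y_0:\|y_{\mathrm{g}}-y_0\|=d_0\}$ is a metric \emph{sphere}, not a ball, so the sets $\mathcal{Y}_0(d_0)$ for different $d_0$ are pairwise disjoint, not nested. The monotonicity of $\delta^{\text{max}}_0$ in $d_0$ is therefore not automatic; in fact it is the most delicate point of the whole construction. The paper handles it (Lemma~11) by a trajectory-crossing argument: if a trajectory starting at distance $d_0^{\mathrm{a}}$ attains a window-maximum $M$ with $d_0^{\mathrm{a}}<d_0^{\mathrm{b}}\leq M$, then by continuity of $t\mapsto\delta(y_0^{\mathrm{a}},t)$ it must pass through distance $d_0^{\mathrm{b}}$ at some time $t^{\mathrm{b}}\in[0,\Delta t]$, and the state at that instant lies in $\mathcal{Y}_0(d_0^{\mathrm{b}})$; restarting the window from there gives $\delta^{\text{max}}_0(d_0^{\mathrm{b}})\geq M$. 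This shows $\delta^{\text{max}}_0$ is non-decreasing in $d_0$, and then $z_0=\delta^{\text{max}}_0+d_0$ is strictly increasing. Note that this argument is specific to $t=0$; it does not obviously extend to show $\delta^{\text{max}}(d_0,s)$ is monotone in $d_0$ for every fixed $s>0$, so your closed-form route (which would need monotonicity of the integrand for all $s$) may not close even after the fix. The paper instead deduces strict monotonicity of $\beta$ in $d_0$ from strict monotonicity of $z_0$ together with the already-established strict decrease of $\beta$ in $t$ (Lemma~10).

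A smaller issue: your Nagumo barrier computation writes $\dot w_t=\alpha w_t-\tfrac{d}{dt}\delta^{\text{max}}$, but $\delta^{\text{max}}$ is only shown to be continuous, not differentiable (it is a pointwise maximum). This is easily repaired either via a Dini-derivative version of the barrier argument or, more simply, by the paper's direct observation that a stable first-order linear system tracking a non-increasing reference from above can never cross it.
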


\begin{proof}
To prove that a function $\beta$ is an upper bound of $\delta$ and is class-$\mathcal{KL}$, we need the following conditions to hold $\forall y_{t} \in \mathcal{L}$ and $\forall t \in \mathbb{R}_{\geq 0}$:
\begin{enumerate}[label=\roman*)]
    \item $\beta(0, t) = 0$,
    \item $\beta$ decreases with $t$,
    \item $\delta \leq \beta$,
    \item $\beta$ is continuous with respect to $d_{0}$ and $t$,
    \item $\beta \to 0$ as $t \to \infty$,
    \item $\beta$ strictly increases with $d_{0}$.
\end{enumerate}
Therefore, we proceed to prove all of these conditions. A summary of this proof is depicted in Fig. \ref{fig:flow_proof}.
\begin{figure}[t]
    \centering
    \includegraphics[width=\columnwidth]{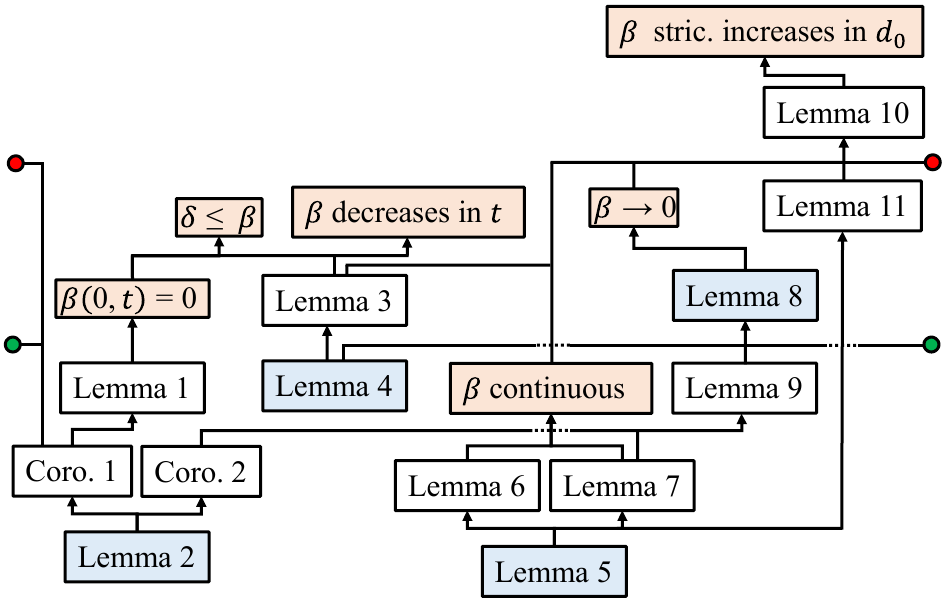}
    \caption{Summary of lemmas and corollaries related to the proof of Proposition~\ref{prop:existence_KL}. Bisque-colored boxes indicate the conditions necessary for the proposition to be true, while blue boxes represent lemmas relying directly on its assumptions and requirements.}
    \label{fig:flow_proof}
\end{figure}

\vspace{0.2cm}
\paragraph{$\beta(0, t) = 0$}
To demonstrate that this condition is valid, we introduce Lemma \ref{lemma:beta_at_0}. For this lemma to be applicable, a specific condition must be met. To validate this condition, we refer to Lemma \ref{lemma:constant_delta} and Corollary~\ref{coro:d_max_0}, both of which will be introduced subsequently. We also present Corollary~\ref{coro:mind_0}, which will prove beneficial in subsequent discussions.

\begin{lemma}[$\beta(0, t) = 0$ constant]
\label{lemma:beta_at_0}
$\beta(0, t) = 0$ if ${\delta^{\text{max}}(0, t)=0}$, $\forall t \in \mathbb{R}_{\geq0}$.
\end{lemma}
\begin{proof}
    If ${\delta^{\text{max}}(0, t)=0}$, $\forall t \in \mathbb{R}_{\geq0}$, for $d_{0} = 0$, from equations \eqref{eq:z0_app} and \eqref{eq:dot_z_app}, we can observe that $z_{0} = 0$ and $\dot{z}_{t}=0$, $\forall t \in \mathbb{R}_{\geq0}$. Replacing this in \eqref{eq:beta_app}, we conclude that $\beta(0, t) = 0$, $\forall t \in \mathbb{R}_{\geq 0}$. 
\end{proof}

\begin{lemma}[$\delta = 0$ constant]
\label{lemma:constant_delta}
${\delta(y_{\mathrm{g}},t)=0}$, $\forall t \in \mathbb{R}_{\geq0}$, if:
\begin{enumerate}[font=\itshape]
\item $d_{t} = d_{t + \Delta t}$, \hspace{0.2cm} for $y_{t+\Delta t}=y_{\mathrm{g}}$ (Theorem~\ref{theo:puma_surrogate}, 1),
\item $d_{t} > d_{t+\Delta t}$, \hspace{0.2cm} $\forall y_{t} \in \mathcal{L} \setminus \{y_{\mathrm{g}}\}$ (Theorem~\ref{theo:puma_surrogate}, 2).
\end{enumerate}
\end{lemma}
\begin{proof}
Let us introduce $t_{\mathrm{g}}$, which is the earliest $t \in \mathbb{R}_{\geq 0}$ where $y_{t} = y_{\mathrm{g}}$. Then, condition 1 gives rise to two scenarios for every state visited after $t_{\mathrm{g}}$:
\begin{enumerate}[label=\roman*)]
	\item \emph{$\delta(y_{0}, t)$ is constant:} A constant function satisfies $d_{t} = d_{t + \Delta t}$.
	\item \emph{$\delta(y_{0}, t)$ is periodic:} A set of functions could meet the condition $d_{t} = d_{t + \Delta t}$ by varying over time but always returning to the same value after $\Delta t$ seconds. However, in this case, the only possibility is that, for all $t > t_{\mathrm{g}}$, we have $\delta(y_{0}, t) = \delta(y_{0}, t + \Delta t)$, i.e., a periodic function. This is because the time derivative of $\delta$ is solely defined by $y_{t}$, since $\partial \delta / \partial t = \partial \delta / \partial y_{t} \cdot \dot{y}_{t}$, and both of these variables only depend on $y_{t}$. Therefore, $\partial \delta / \partial t$ at $y_{\mathrm{g}}$, and at any state visited afterwards, given the state, must always be the same. This implies that $\delta$ must evolve over time to the same states as those to which it evolved $\Delta t$ seconds ago, since for every state, the derivative is the same as it was $\Delta t$ seconds before.
\end{enumerate}

However, the periodic case for $\delta(y_{0}, t)$ contradicts condition 2, which requires that $\delta$ strictly decreases after $\Delta t$ seconds, and in the periodic scenario, it returns to the same value after $\Delta t$ seconds. Hence, the only remaining scenario is that $\delta$ is constant after $t_{\mathrm{g}}$, with a value of 0 (by definition). Lastly, by noting that $y_{0} = y_{\mathrm{g}}$ implies that $t_{\mathrm{g}}=0$, we get that ${\delta(y_{\mathrm{g}},t)=0}$, $\forall t \in \mathbb{R}_{\geq0}$. 
\end{proof}

\begin{corollary}[$\delta^{\text{max}}(0, t)=0$]
\label{coro:d_max_0}
${\delta^{\text{max}}(0, t)=0}$ if ${\delta(y_{\mathrm{g}},t)=0}$, $\forall t \in \mathbb{R}_{\geq0}$.
\end{corollary}
\begin{proof}
If $\delta(y_{\mathrm{g}}, t) = 0$, $\forall t \in \mathbb{R}_{\geq0}$, and noting $d_{0}=0$ implies that every $y_{0} \in \mathcal{Y}_{0}$ in \eqref{eq:delta_max} is equal to $y_{\mathrm{g}}$, from \eqref{eq:delta_max}, it follows that $\delta^{\text{max}}(0, t) = 0$, $\forall t \in \mathbb{R}_{\geq0}$. 
\end{proof}

\begin{corollary}[$\min(\delta^{\text{max}})$]
\label{coro:mind_0}
$\min(\delta^{\text{max}}) = 0$ if ${\delta^{\text{max}}(0, t)=0}$, $\forall t \in \mathbb{R}_{\geq0}$.
\end{corollary}
\begin{proof}
Recall that $\delta$ is positive definite and that $\delta^{\text{max}}$ takes the value of some $\delta$. Then, ${\delta^{\text{max}}(0, t)=0}$, $\forall t \in \mathbb{R}_{\geq0}$, indicates that this is the minimum value $\delta^{\text{max}}$ can achieve. Hence, ${\min(\delta^{\text{max}})=0}$. 
\end{proof}

Then, provided that the conditions of Proposition~\ref{prop:existence_KL} are met, the conditions of Lemma \ref{lemma:constant_delta} are fulfilled. Therefore, $\delta(y_{\mathrm{g}}, t) = 0$ for every $t \in \mathbb{R}_{\geq 0}$, and, hence, ${\delta^{\text{max}}(0, t)=0}$ (from Corollary~\ref{coro:d_max_0}). Consequently, we can employ Lemma \ref{lemma:beta_at_0} to demonstrate that $\beta(0, t) = 0$ for all $t \in \mathbb{R}_{\geq 0}$.

\vspace{0.2cm}
\paragraph{$\beta$ decreases with $t$, and $\delta \leq \beta$}
We introduce Lemma \ref{lemma:upper_decreasing}, which presents one condition that, if satisfied, implies that $\beta > \delta$ and $\beta$ strictly decreases over time, $\forall d_{0} \in \mathbb{R}_{\geq0} \setminus \{ 0\}$. After proving this lemma, we introduce Lemma \ref{lemma:maxd_decreases}, which is then used to show that this condition is satisfied in our case. Lastly, we combine this with the fact that $\beta(0, t) = 0$ to conclude that $\beta$ decreases with $t$ (i.e., is a non-increasing function), and $\delta \leq \beta$, $\forall d_{0} \in \mathbb{R}_{\geq0}$.

\begin{lemma}[$\beta > \delta$  and strictly decreasing]
    \label{lemma:upper_decreasing}
        $\beta(d_{0}, t) > \delta(y_{0}, t)$ and $\beta(d_{0}, t)$ strictly decreases with respect to $t$, $\forall d_{0} \in {\mathbb{R}_{\geq 0} \setminus \{0\}}$, $\forall t \in \mathbb{R}_{\geq0}$, if:
    \begin{enumerate}
        \item $\delta^{\text{max}}$ decreases with respect to $t$, $\forall d_{0} \in \mathbb{R}_{\geq 0} \setminus \{0\}$.
    \end{enumerate}
\end{lemma}
\begin{proof}
Although $\delta^{\text{max}}$ evolves as a function of time, we can infer from \eqref{eq:dot_z_app} that, locally, $\beta$ behaves as a linear first-order dynamical system, with the origin at $\delta^{\text{max}}$. Given that $\alpha < 0$, it follows that $\beta$ will converge towards $\delta^{\text{max}}$ over time.

Moreover, \eqref{eq:z0_app} indicates that $z_{0} > \delta^{\text{max}}_{0}$ for every $d_{0} > 0$. Given the properties of linear first-order systems, $\beta$ will strictly decrease towards $\delta^{\text{max}}_{0}$, without ever reaching or overshooting this value. Combined with condition 1, which indicates that $\delta^{\text{max}}$ decreases with respect to $t$ for all $d_{0} \in \mathbb{R}_{\geq 0} \setminus \{0\}$, we can conclude that $\beta$ will always be greater than $\delta^{\text{max}}$ and will continue to strictly decrease towards this value as a function of time. Therefore, $\beta > \delta$ and  $\beta$ strictly decreases with respect to $t$, $\forall d_{0} \in \mathbb{R}_{\geq 0} \setminus \{0\}$. 

\end{proof}

\begin{lemma}[$\delta^{\text{max}}$ decreases over time]
\label{lemma:maxd_decreases}
    $\delta^{\text{max}}(d_{0}, t)$ decreases with respect to $t$, $\forall d_{0} \in \mathbb{R}_{\geq 0} \setminus \{0\}$, $\forall t \in \mathbb{R}_{\geq0}$, if:
    \begin{enumerate}
        \item $d_{t}>d_{t+\Delta t}$, $\forall t \in \mathbb{R}_{\geq0}$, $\forall y_{0} \in \mathcal{L} \setminus \{y_{\mathrm{g}}\}$, (Theorem~\ref{theo:puma_surrogate}, 2).
    \end{enumerate}
\end{lemma}
\begin{proof}
Let us define
\begin{equation}
\label{eq:delta_y0}
    \delta_{t + \Delta t}^{\text{max}}(y_{0}, t)=\max_{s \in [t, t+\Delta t]} \delta(y_{0}, s),
\end{equation}
which allows us to write $\delta^{\text{max}} = \max_{y_{0} \in \mathcal{Y}_{0}}\left(\delta_{t + \Delta t}^{\text{max}}\right)$. We will first prove that $\delta_{t + \Delta t}^{\text{max}}$ decreases with $t$. 

To do so, observe that condition 1 can be rewritten as $\delta(y_{0}, t + \Delta t) < \delta(y_{0}, t)$. Given that this condition holds true for every $s$ in the interval $[t, t + \Delta t]$, none of the values of $\delta(y_{0}, s)$ computed in this interval can exceed the maximum of this interval, i.e., $\delta_{t + \Delta t}^{\text{max}}(y_{0}, t)$, after $\Delta t$ seconds. If there were such a value $\delta(y_{0}, s)$ that is greater than $\delta_{t + \Delta t}^{\text{max}}(y_{0}, t)$ after $\Delta t$ seconds, it would imply that $\delta(y_{0}, s)$ had increased, for some $s$, after $\Delta t$, contradicting condition 1. Therefore, $\delta_{t + \Delta t}^{\text{max}}$ can only decrease as time progresses. Since this holds for all instances of $t$, we conclude that $\delta_{t + \Delta t}^{\text{max}}$ decreases with respect to $t$, $\forall y_{0} \in \mathcal{L} \setminus \{y_{\mathrm{g}}\}$.

Now, the function $\delta^{\text{max}}$ computes the maximum over a set of functions $\delta_{t + \Delta t}^{\text{max}}$ with different initial conditions $y_{0} \in \mathcal{Y}_{0}(d_{0})$. However, for any given $d_{0}$, $\forall y_{0} \in \mathcal{L} \setminus \{y_{\mathrm{g}}\}$, each one of these functions decreases with respect to time. Then, for any given interval of time, we have two possible scenarios:
\begin{enumerate}[label=\roman*)]
    \item $\delta^{\text{max}}$ is equal to one function $\delta_{t + \Delta t}^{\text{max}}$ (the current maximum). In this case, $\delta^{\text{max}}$ decreases with time, since $\delta_{t + \Delta t}^{\text{max}}$ decreases with time.
    \item The function that is currently the maximum over the set of functions $\delta_{t + \Delta t}^{\text{max}}$ reaches a point in time $t'$ where it changes. At the point where the change occurs, the function will be lower than or equal to its previous values for $t<t'$, since the previous maximum is decreasing. Furthermore, it will be greater than or equal to the values for $t>t'$, since the new maximum also decreases.
\end{enumerate}
Therefore, we can conclude that $\forall d_{0}$ where $y_{0} \in \mathcal{Y}_{0}(d_{0})$ is not $y_{\mathrm{g}}$, $\delta^{\text{max}}$ decreases with respect to time. Moreover, by noting that $\forall y_{0} \in \mathcal{Y}_{0}(d_{0})$, $d_{0} \neq 0$ implies that $y_{0} \neq y_{\mathrm{g}}$, it follows that $\delta^{\text{max}}$ decreases with respect to time, $\forall d_{0} \in \mathbb{R}_{\geq 0} \setminus \{0\}$, $\forall t \in \mathbb{R}_{\geq0}$. 
\end{proof}

As a consequence, the assumptions and conditions outlined in Proposition~\ref{prop:existence_KL} enable us to apply Lemma \ref{lemma:maxd_decreases} to demonstrate that the conditions of Lemma \ref{lemma:upper_decreasing} are satisfied. This, in turn, leads us to the conclusion that $\beta > \delta$ and $\beta(d_{0}, t)$ strictly decreases with respect to $t$, $\forall d_{0} \in {\mathbb{R}_{\geq 0} \setminus \{0\}}$, $\forall t \in \mathbb{R}_{\geq0}$. Lastly, before we showed that, for $d_{0}=0$, $\beta(0, t) = 0$, $\forall t \in \mathbb{R}_{\geq0}$. Therefore, in general, we have that $\beta \geq \delta$ and $\beta$ decreases with respect to $t$, $\forall d_{0} \in {\mathbb{R}_{\geq 0}}$, $\forall t \in \mathbb{R}_{\geq0}$.

\vspace{0.2cm}
\paragraph{Continuity of $\beta$}
We require $\beta$ to be continuous with respect to $t$, for each fixed $d_{0}$, and with respect to $d_{0}$, for each fixed $t$. To achieve this, we will introduce Lemmas \ref{lemma:d_continuous}, \ref{lemma:dmax_continuous_t} and \ref{lemma:dmax_continuous_d}.

\begin{lemma}
\label{lemma:d_continuous}[Continuity of $\delta$]
If $\dot{y}_{t}=f(y_{t})$ is continuously differentiable, then $\delta(y_{0}, t)$ is continuous with respect to $y_{0}$ and $t$.
\end{lemma}
\begin{proof}
Given that $\dot{y}_{t} = f(y_{t})$ is continuously differentiable, it follows that $\Phi^{y}_{\theta}(t, y_{0})$ is continuously differentiable with respect to both $t$ and $y_{0}$ \citep{arnold1992ordinary}. Moreover, since $\delta$ is a metric on $\mathcal{L}$ defined by $\delta = ||y_{\mathrm{g}} - \Phi^{y}_{\theta}(t,y_{0})||$, it is continuous with respect to $\Phi^{y}_{\theta}(t,y_{0})$ \citep{munkres2000topology}. Since $\Phi^{y}_{\theta}(t,y_{0})$ is continuously differentiable and thus continuous, and the composition of two continuous functions is continuous, it follows that $\delta(y_{0}, t)$ is continuous with respect to both $y_{0}$ and $t$. 
\end{proof}

\begin{lemma}[$\delta^{\text{max}}$ continuous with respect to $t$]
\label{lemma:dmax_continuous_t}
$\delta^{\text{max}}$ is a continuous function with respect to $t$, if $\delta$ is continuous with respect to $t$.
\end{lemma}
\begin{proof}
We will first prove this for $\delta_{t + \Delta t}^{\text{max}}$ (as defined in \eqref{eq:delta_y0}). Given that $\delta_{t + \Delta t}^{\text{max}}$ is the maximum value of $\delta(y_{0}, s)$ over the interval $[t, t + \Delta t]$, any change in $\delta_{t + \Delta t}^{\text{max}}$ must be due to a change in $\delta$ for some $s$ in the boundaries of $[t, t + \Delta t]$. Then, since $\delta$ changes continuously, $\delta_{t + \Delta t}^{\text{max}}$ can only change continuously as well. Thus, $\delta_{t + \Delta t}^{\text{max}}$ is continuous with respect to $t$.

However, we need to show that $\delta^{\text{max}} = \max_{y_{0} \in \mathcal{Y}_{0}}\left(\delta_{t + \Delta t}^{\text{max}}\right)$ is continuous with respect to $t$. This follows from the fact that $\delta^{\text{max}}$ computes the point-wise maximum along $t$ over a set of continuous functions $\delta_{t + \Delta t}^{\text{max}}$, which results in a continuous function \citep{pugh2015real}. 
\end{proof}

\begin{lemma}[$\delta^{\text{max}}$ continuous with respect to $d_{0}$]
\label{lemma:dmax_continuous_d}
$\delta^{\text{max}}$ is continuous with respect to $d_{0}$, if $\delta$ is continuous with respect to $y_{0}$.
\end{lemma}
\begin{proof}
Given the continuity of the function $\delta(y_{0}, t)$ with respect to $y_{0}$, the function $\delta_{t + \Delta t}^{\text{max}}$ computes the point-wise maximum along $y_{0}$ over a set of continuous functions, specifically $\{\delta(y_{0}, s): s \in [t, t + \Delta t]\}$. Consequently, $\delta_{t + \Delta t}^{\text{max}}$ is also continuous in $y_{0}$ \citep{pugh2015real}.

Building on this, we seek to demonstrate that $\delta^{\text{max}}=\max_{y_{0} \in \mathcal{Y}_{0}}\left(\delta_{t + \Delta t}^{\text{max}}\right)$ is continuous with respect to $d_{0}$. To accomplish this, we will use \emph{the maximum theorem} \citep{border1985fixed}. This theorem states that if $\mathcal{Y}_{0}(d_{0})$ is \emph{continuous} with respect to $d_{0}$ and \emph{compact-valued}\footnote{The concepts of \emph{compact-valued} and \emph{continuous} refer to those employed in the literature of set-valued functions/correspondences. For further details, we refer the reader to \citep{border1985fixed}.}, and $\delta_{t + \Delta t}^{\text{max}}(y_{0}, t)$ is continuous in $y_{0}$, then $\delta^{\text{max}}(d_{0}, t)$ is continuous in $d_{0}$.

$\mathcal{Y}_{0}(d_{0})$ is continuous because the relationship between each $y_{0}$ and $d_{0}$, i.e., the metric on $\mathcal{L}$, is continuous \citep{munkres2000topology}. $\mathcal{Y}_{0}(d_{0})$ is compact-valued if, for each $d_{0}$, $\mathcal{Y}_{0}$ constitutes a compact set. Given that $\mathcal{Y}_{0} \subset \mathbb{R}^{n}$, the \emph{Heine-Borel theorem} \citep{pugh2015real} indicates that $\mathcal{Y}_{0}$ is compact if it is both closed and bounded. Every $\mathcal{Y}_{0}$ is closed as it fulfills an algebraic equation, and can be contained within any ball of radius larger than $d_{0}$, making it bounded. Consequently, $\mathcal{Y}_{0}(d_{0})$ is compact-valued.

Therefore, since $\delta_{t + \Delta t}^{\text{max}}(y_{0}, t)$ is continuous with respect to $y_{0}$, and $\mathcal{Y}_{0}(d_{0})$ is both continuous and compact-valued, the maximum theorem states that $\delta^{\text{max}}(d_{0}, t)$ is continuous with respect to $d_{0}$. 
\end{proof}

Now, we can proceed to conclude about the continuity of $\beta$ for both $t$ and $d_{0}$.
\begin{enumerate}[leftmargin=*]
    \item \emph{Continuity in $t$}: For any given $d_{0}$, since $\beta$ evolves as a linear first-order system (as per \eqref{eq:beta_app}), its solution will exist provided that $\delta^{\text{max}}$ is continuous with respect to $t$ \citep{nagle2018fundamentals}. Then, under the assumptions of Proposition~\ref{prop:existence_KL}, Lemma \ref{lemma:dmax_continuous_t} confirms that $\delta^{\text{max}}$ is indeed continuous, provided that $\delta$ is continuous. From Lemma \ref{lemma:d_continuous} we infer that $\delta$ is continuous with respect to $t$. Hence, $\beta$ is well-defined for any given $d_{0}$,  indicating that it is differentiable with respect to $t$, and, therefore, continuous.
    \item \emph{Continuity in $d_{0}$}: For any given $t$, from \eqref{eq:beta_app}, we know that $\beta$ will be continuous with respect to $d_{0}$ as long as both of its terms, $z_{0}$ and the intregral of $\dot{z}_{t}$, are continuous. The continuity of both terms depends on the continuity of $\delta^{\text{max}}$. Then, since Lemma \ref{lemma:dmax_continuous_d} indicates that $\delta^{\text{max}}$ is continuous with respect to $d_{0}$ provided that $\delta$ is continuous with respect to $y_{0}$, and Lemma \ref{lemma:d_continuous} confirms that this is indeed the case (given the assumptions of Proposition~\ref{prop:existence_KL}), we conclude that $\beta$ is continuous with respect to $d_{0}$.
\end{enumerate}

\vspace{0.2cm}
\paragraph{$\beta\to 0$ as $t \to \infty$}
To prove that this condition holds, we will utilize Lemma \ref{lemma:beta_to_zero}. The application of this lemma necessitates the use of Lemma \ref{lemma:maxd_decreases} and Corollary~\ref{coro:d_max_0},  which are already introduced previously, and Lemma \ref{lemma:dmax_d0}, which is introduced afterwards.

\begin{lemma}[$\beta$ converges to zero]
\label{lemma:beta_to_zero}
$\beta\to 0$ as $t \to \infty$ if:
\begin{enumerate}
\item $\delta^{\text{max}} \in [0, \max(\delta)]$,
\item $\delta^{\text{max}}$ decreases with respect to $t$, $\forall d_{0} \in \mathbb{R}_{\geq0}$, $\forall t \in \mathbb{R}_{\geq 0}$,
\item $d_{t}<d_{t-\Delta t}$, $\forall y_{0} \in \mathcal{L} \setminus \{y_{\mathrm{g}}\}$, $\forall t \in \mathbb{R}_{\geq 0}$, (Theorem~\ref{theo:puma_surrogate}, 2).
\item $\delta^{\text{max}}$ surjective in $d_{0}$,
\item $\delta^{\text{max}}(0, t)=0$, $\forall t \in \mathbb{R}_{\geq0}$.
\end{enumerate}
\end{lemma}
\begin{proof}
According to \eqref{eq:dot_z_app}, $\beta$ approaches $\delta^{\text{max}}$ as $t \to \infty$. Thus, demonstrating that $\delta^{\text{max}} \to 0$ as $t \to \infty$ would imply that $\beta\to 0$ as $t \to \infty$.

Note that $\delta^{\text{max}} \in [0, \max(\delta)]$ (condition 1) and $\delta^{\text{max}}$ decreases for all $t$ (condition 2). This indicates that as $t$ goes to infinity, $\delta^{\text{max}}$ must approach a limit $a \in [0, \max(\delta)]$. We will proceed to show that this limit can only be zero. 

Let us define $y^{*}_{0} \in \mathcal{Y}_{0}$ and $t^{*} \in \mathbb{R}_{\geq0}$ as the variables in \eqref{eq:delta_max} where the maxima are achieved for a given $d_{0}$ and $t$. Thus, $\delta^{\text{max}} = \delta(y^{*}_{0}, t^{*})$. From condition 3, we know that ${\delta(y^{*}_{0}, t^{*} + \Delta t)} < \delta(y^{*}_{0}, t^{*})$, $\forall y^{*}_{0} \in \mathcal{L} \setminus \{y_{\mathrm{g}}\}$, $\forall t^{*} \in \mathbb{R}_{\geq 0}$. This leads to two scenarios:
\begin{enumerate}[label=\roman*)]
    \item \emph{$y^{*}_{0}$ remains constant:} If $y^{*}_{0}$ does not change in the interval $[t, t^{*} + \Delta t]$, then $\delta(y^{*}_{0}, t^{*})$, and therefore $\delta^{\text{max}}$, must strictly decrease at some point within this interval. Otherwise, $\delta(y^{*}_{0}, t^{*} + \Delta t) < \delta(y^{*}_{0}, t^{*})$ would not hold true.
    \item \emph{$y^{*}_{0}$ changes:} If $y^{*}_{0}$ changes during the interval $[t, t^{*} + \Delta t]$, given that $\delta^{\text{max}}$ decreases with $t$ (condition 2), this change can only occur if $\delta(y^{*}_{0}, t^{*})$, and hence $\delta^{\text{max}}$, strictly decreases at some point within the interval.
\end{enumerate}

In either case, $\delta^{\text{max}}$ strictly decreases at some point in the interval $[t, t^{*} + \Delta t]$, $\forall d_{0} \in \mathbb{R}_{\geq0} \setminus \{0\}$, $\forall t \in \mathbb{R}_{\geq 0}$. 

Taking this into account, if the limit was some value $a \neq 0$, a contradiction would arise. If $a \neq 0$, we would always have a $d_{0} \neq 0$ such that $\delta^{\text{max}}(d_{0}, 0) = a$ (conditions 4 and 5). This implies that $\delta^{\text{max}}$ must become lower than $a$ before $t^{*} + \Delta t$, and, since it is decreasing (condition 2), it will remain lower as time progresses.  Therefore, the only feasible value for the limit is $a = 0$, which confirms $\delta^{\text{max}} \to 0$ as $t \to \infty$. Consequently, $\beta\to 0$ as $t \to \infty$. 
\end{proof}

\begin{lemma}[$\delta^{\text{max}}$ with $d_{0}$]
\label{lemma:dmax_d0}
$\delta^{\text{max}} \in [0, \max(\delta)]$, and $\delta^{\text{max}}$ surjective with respect to $d_{0}$, if:
\begin{enumerate}[font=\itshape]
\item $\min(\delta^{\text{max}}) = 0$,
\item $\delta^{\text{max}}$ continuous with respect to $d_{0}$.
\end{enumerate}
\end{lemma}
\begin{proof}
Given that $\delta^{\text{max}}$ computes a maximum over values of $\delta$, we get $\max(\delta^{\text{max}}) = \max(\delta)$. Then, we know that  $\min(\delta^{\text{max}}) = 0$ (condition 1), and that $\delta^{\text{max}}$ is continuous with respect to $d_{0}$ (condition 2). Therefore, as a consequence of the \emph{intermediate value theorem}, $\delta^{\text{max}}$ can take any value between $\min(\delta^{\text{max}})$ and $\max(\delta^{\text{max}})$, i.e., $\delta^{\text{max}} \in [0, \max(\delta)]$. Moreover, each value of $\delta^{\text{max}} \in [0, \max(\delta)]$ must have at least one corresponding value of $d_{0}$; hence, in this interval, $\delta^{\text{max}}(d_{0}, t)$ is surjective with respect to $d_{0}$. 
\end{proof}

To summarize, given the conditions and assumptions of Proposition~\ref{prop:existence_KL}, Lemmas \ref{lemma:maxd_decreases}, \ref{lemma:dmax_d0} and Corollary~\ref{coro:d_max_0}, indicate that the conditions of Lemma \ref{lemma:beta_to_zero} are fulfilled, implying that $\beta\to 0$ as $t \to \infty$.

\vspace{0.2cm}
\paragraph{$\beta$ strictly increases with $d_{0}$}
To prove this, we will introduce Lemma \ref{lemma:beta_increases}, which needs a condition supported by Lemma \ref{lemma:z_0_increases}, introduced afterwards.

\begin{lemma}[$\beta$ strictly increases with $d_{0}$]
\label{lemma:beta_increases}
$\beta$ strictly increases with $d_{0}$ if:
\begin{enumerate}[font=\itshape]
\item $z_{0}$ strictly increases with $d_{0}$,
\item $\delta^{\text{max}}(0, t)=0$, $\forall t \in \mathbb{R}_{\geq0}$,
\item $\beta$ strictly decreases with $t$, $\forall d_{0} \in \mathbb{R}_{\geq0}$,
\item $\beta$ continuous with  $t$,
\item $\beta\to 0$ as $t \to \infty$.
\end{enumerate}
\end{lemma}
\begin{proof}
For a fixed $t$, the $\beta$ strictly increases in $d_{0}$ if for any two numbers $d^{\text{a}}_0$ and $d^{\text{b}}_0$ (where $d^{\text{a}}_0 < d^{\text{b}}_0$) within its domain, the condition $d^{\text{a}}_0 < d^{\text{b}}_0$ implies $\beta^{\text{a}}(d^{\text{a}}_0, t) < \beta^{\text{b}}(d^{\text{b}}_0, t)$. 

By condition 1, $z_{0}$ strictly increases with $d_{0}$, so we have $z^{\text{a}}_{0}(d^{\text{a}}_{0}) < z^{\text{b}}_{0}(d^{\text{b}}_{0})$. Moreover, since (any) $z_{0}$ is non-negative (see \eqref{eq:z0_app}), it follows that $z^{\text{b}}_{0} > z^{\text{a}}_{0} \geq 0$. Given condition 2, this indicates that $d^{\text{b}}_{0} > 0$. 

Recalling \eqref{eq:beta_app}, we have 
\begin{equation}
\label{eq:beta_b}
    \beta^{\text{b}} = z^{\text{b}}_{0} + \int_0^{t}\dot{z}_{s}ds.
\end{equation}
Since $d^{\text{b}}_{0} > 0$, it follows from condition 3 that $\beta(d^{\text{b}}_{0}, t)$ strictly decreases with respect to $t$ (i.e., $\dot{z}_t < 0$). Furthermore, $\beta$ is continuous with $t$ and approaches zero as time goes to infinity (conditions 4 and 5). This, combined with $z^{\text{b}}_{0} > z^{\text{a}}_{0}$, implies that there must exist a time $t^{\text{a}}>0$ such that
\begin{equation}
\label{eq:split_int}
    \beta^{\text{b}} = \underbrace{z^{\text{b}}_{0} + \int_{0}^{t^{\text{a}}}\dot{z}_{s}ds}_{z_{0}^{\text{a}}} + \int_{t^{\text{a}}}^{t}\dot{z}_{s}ds.
\end{equation}
Since $\dot{z}_{t} < 0$, starting from the same initial condition, a larger integration interval results in a smaller value of $\beta$. Therefore, we deduce that $\beta^{\text{b}} > \beta^{\text{a}}$, as $\beta^{\text{a}}$ follows the same format as equation \eqref{eq:split_int} (when replacing the terms equivalent to $z_{0}^{\text{a}}$, with $z_{0}^{\text{a}}$) but with an integral that starts at $0$ instead of $t^{\text{a}}$, and ${0 < t^{\text{a}}}$. Thus, under the given conditions, $\beta$ strictly increases with respect to $d_{0}$. 
\end{proof}

\begin{lemma}[$z_{0}$ strictly increases with $d_{0}$]
\label{lemma:z_0_increases}
$z_{0}$ strictly increases with $d_{0}$ if:
\begin{enumerate}[font=\itshape]
\item $\delta(y_{0}, t)$ continuous with respect to $t$. 
\end{enumerate}
\end{lemma}
\begin{proof}
For $z_{0}$ to strictly increase with $d_{0}$, for any two numbers $d^{\text{a}}_0$ and $d^{\text{b}}_0$ within its domain, the inequality $d^{\text{a}}_0 < d^{\text{b}}_0$ must imply $z_{0}^{\text{a}}(d^{\text{a}}_0) < z_{0}^{\text{b}}(d^{\text{b}}_0)$. From \eqref{eq:z0_app}, we have $z_{0} = \delta^{\text{max}}_{0} + d_{0}$. Given that $d_{0}$ strictly increases with itself, it suffices to analyze the behavior of $\delta^{\text{max}}_{0}$. 

Recall that $\delta^{\text{max}}_{0} = \max_{y_{0} \in \mathcal{Y}_{0}}\left(\delta_{t + \Delta t}^{\text{max}}(y_{0}, 0)\right)$ and that \lword{$\delta_{t + \Delta t}^{\text{max}}(y_{0}, 0)$} computes the maximum over the set $\{\delta(y_{0}, s): s \in [0, \Delta t]\}$. We will refer to this set as $\mathcal{W}(y_{0})$. Importantly, $\mathcal{W}(y_{0})$ contains $\delta(y_{0}, 0) = d_{0}$ for all $y_{0}$. Then, for any $d^{\text{a}}_0$ and $d^{\text{b}}_0$ satisfying $d^{\text{a}}_0 < d^{\text{b}}_0$, the behavior of $\delta^{\text{max}}_{0}$ can be studied in three different scenarios. Let $y_{0}^{\text{a}} \in \mathcal{Y}_{0}(d^{\text{a}}_{0})$ be the state that maximizes $\delta^{\text{max}}_{t + \Delta t}$ for $d^{\text{a}}_{0}$, then:

\begin{enumerate}[label=\roman*)]
    \item $\boldsymbol{d^{\text{a}}_{0} =\delta^{\text{max}}_{t + \Delta t}(y^{\text{a}}_{0}, 0)}$: This scenario occurs when $d^{\text{a}}_{0}$ is the maximum of $\mathcal{W}(y^{\text{a}}_{0})$. Then, if $y^{\text{b}}_{0} \in \mathcal{Y}_{0}(d^{\text{b}}_{0})$ is the state that maximizes $\delta^{\text{max}}_{t + \Delta t}$ for $d^{\text{b}}_{0}$, and, hence,  $d^{\text{b}}_0 \in \mathcal{W}(y^{\text{b}}_{0})$, the maximum of $\mathcal{W}(y^{\text{b}}_{0})$ cannot be lower than $d^{\text{b}}_0$. Consequently, since $d^{\text{a}}_0 < d^{\text{b}}_0$, we get $\delta_{t + \Delta t}^{\text{max}}(y^{\text{a}}_{0}, 0) < \delta_{t + \Delta t}^{\text{max}}(y^{\text{b}}_{0}, 0)$. As both $y^{\text{a}}_{0}$ and $y^{\text{b}}_{0}$ are those that maximize $\delta^{\text{max}}_{t + \Delta t}$ over $y_{0}$, for $d^{\text{a}}_{0}$ and $d^{\text{b}}_{0}$, respectively, we conclude that $\delta_{0}^{\text{max}}(d^{\text{a}}_{0}) < \delta_{0}^{\text{max}}(d^{\text{b}}_{0})$.
    
    \item $\boldsymbol{d^{\text{a}}_{0} < \delta^{\text{max}}_{t + \Delta t}(y^{\text{a}}_{0}, 0) \leq d^{\text{b}}_{0}}$: This scenario occurs when the maximum of $\mathcal{W}(y^{\text{a}}_{0})$ is not $d^{\text{a}}_{0}$, and $d^{\text{b}}_{0}$ is greater than or equal to this maximum. Following a similar reasoning to the above, we can conclude that $\delta_{0}^{\text{max}}(d^{\text{a}}_{0}) \leq \delta_{0}^{\text{max}}(d^{\text{b}}_{0})$.

    \item $\boldsymbol{d^{\text{a}}_{0} < d^{\text{b}}_{0} < \delta^{\text{max}}_{t + \Delta t}(y^{\text{a}}_{0}, 0)}$: This scenario occurs when the maximum of $\mathcal{W}(y^{\text{a}}_{0})$ is not $d^{\text{a}}_{0}$, and $d^{\text{b}}_{0}$ is lower than this maximum. Provided that $\delta$ is continuous with $t$ (condition 1), in the set $\mathcal{W}(y^{\text{a}}_{0})$, since, for $t=0$, $\delta=d^{\text{a}}_{0}$, and for some $t^{*} > 0$, ${\delta = \delta^{\text{max}}_{t + \Delta t}(y^{\text{a}}_{0}, 0)}$, then we know there must exist a $t^{\text{b}}$, with ${0 < t^{\text{b}} < t^{*} \leq \Delta t}$, where $\delta=d^{\text{b}}_{0}$. 
    
    Now, observe that $t^{*} \in [t^{\text{b}}, t^{\text{b}} + \Delta t]$. Therefore, $\delta^{\text{max}}_{t + \Delta t}(y^{\text{a}}_{0},$ $t^{\text{b}})$, which computes the maximum over $\{\delta(y^{\text{a}}_{0}, s): s \in [t^{\text{b}}, t^{\text{b}} + \Delta t]\}$, cannot be lower than $\delta^{\text{max}}_{t + \Delta t}(y^{\text{a}}_{0}, 0)$. Moreover, we can select $y^{\text{a}}_{t}$ at time $t^{\text{b}}$, i.e., $y^{\text{a}}_{t^{\text{b}}}$, as a new initial condition such that $\delta^{\text{max}}_{t + \Delta t}(y^{\text{a}}_{t^{\text{b}}}, 0) = \delta^{\text{max}}_{t + \Delta t}(y^{\text{a}}_{0}, t^{\text{b}})$. Since $\delta(y^{\text{a}}_{t^{\text{b}}}, 0) = d_{0}^{\text{b}}$, we get that $y^{\text{a}}_{t^{\text{b}}} \in \mathcal{Y}_{0}(d^{\text{b}}_{0})$. Hence, even though $y^{\text{a}}_{t^{\text{b}}}$ might not maximize $\delta^{\text{max}}_{t + \Delta t}$ for $\mathcal{Y}_{0}(d^{\text{b}}_{0})$, we know that the maximum, achived at $y^{\text{b}}_{0}$, must be at least greater than or equal to $\delta^{\text{max}}_{t + \Delta t}(y^{\text{a}}_{t^{\text{b}}}, 0)$. 
    
    Consequently, we have that $\delta^{\text{max}}_{t + \Delta t}(y^{\text{b}}_{0}, 0) \geq \delta^{\text{max}}_{t + \Delta t}(y^{\text{a}}_{t^{\text{b}}}, 0)$, and $\delta^{\text{max}}_{t + \Delta t}(y^{\text{a}}_{t^{\text{b}}}, 0) = \delta^{\text{max}}_{t + \Delta t}(y^{\text{a}}_{0}, t^{\text{b}}) \geq \delta^{\text{max}}_{t + \Delta t}(y^{\text{a}}_{0}, 0)$. Hence, ${\delta^{\text{max}}_{t + \Delta t}(y^{\text{a}}_{0}, 0) \leq \delta^{\text{max}}_{t + \Delta t}(y^{\text{b}}_{0}, 0)}$, and, therefore, $\delta_{0}^{\text{max}}(d^{\text{a}}_{0}) \leq \delta_{0}^{\text{max}}(d^{\text{b}}_{0})$.
\end{enumerate}

In summary, for any $d^{\text{a}}_{0}$ and $d^{\text{b}}_{0}$ with $d^{\text{a}}_{0} < d^{\text{b}}_{0}$, it is always true that $\delta_{0}^{\text{max}}(d^{\text{a}}_{0}) \leq \delta_{0}^{\text{max}}(d^{\text{b}}_{0})$, meaning $\delta^{\text{max}}_{0}$ is non-decreasing with respect to $d_{0}$. As the sum of a non-decreasing function ($\delta^{\text{max}}_{0}$) and a strictly increasing function ($d_{0}$) results in a strictly increasing function, we conclude that $z_{0}$ strictly increases with $d_{0}$.
\end{proof}

From Lemma \ref{lemma:beta_increases}, we can conclude that $\beta$ strictly increases with respect to $d_{0}$, since assuming the conditions of Proposition~\ref{prop:existence_KL}, and that we have proven that $\beta$ is continuous in $t$, Lemmas \ref{lemma:z_0_increases}, \ref{lemma:upper_decreasing}, and \ref{lemma:beta_to_zero} indicate that the conditions of Lemma \ref{lemma:beta_increases} are fulfilled.

\vspace{0.3cm}
\noindent\fbox{%
    \parbox{\columnwidth}{%
       To summarize, we have demonstrated that all the requirements for $\beta(d_{0}, t)$ to be a class-$\mathcal{KL}$ function, and to serve as an upper bound for $\delta(y_{0}, t)$, are met for every $y_{t} \in \mathcal{L}$ and for all $t \in \mathbb{R}_{\geq 0}$. With this, our proof of Proposition~\ref{prop:existence_KL} is complete. \hfill\qedsymbol
 
    }%
}
\renewcommand{\qedsymbol}{} 
\end{proof}
\renewcommand{\qedsymbol}{\openbox} 
\section{Learning on spherical manifolds}
\label{learning_spheres}

In this work, we employ unit quaternions to control orientation; therefore, we consider two distance functions that are relevant when learning spherical geometries: 1) great-circle distance, and 2) chordal distance.

\begin{itemize}
    \item \textbf{Great-circle distance:} This is the distance along a \emph{great circle}. A great circle is the largest circle that can be drawn on any given sphere, defining the shortest distance between two points. In the context of unit quaternions, which have a unitary norm, this distance is equivalent to the central angle $\alpha$ subtended by two points on the sphere. Hence, we can define it as
    \begin{equation}
        d_{\text{g.c.}} = \alpha.
    \end{equation}
    \item \textbf{Chordal distance:} This is a distance that can be computed when an n-sphere is embedded in a higher-dimensional Euclidean space, i.e., $\mathcal{S}^{n} \subset \mathbb{R}^{n+1}$. Then, by computing the Euclidean distance in $\mathbb{R}^{n+1}$ between two points in $\mathcal{S}^{n}$, we \emph{induce} a distance in $\mathcal{S}^{n}$, corresponding to the chordal distance \citep{lee2006riemannian,berg2008complex,jeong2017spherical}. As the name suggests, this distance is the length of the chord connecting two points in an n-sphere. Hence, it is defined as 
    \begin{equation}
        \label{eq:d_chord}
        d_{\text{chord}}=2r\sin(\alpha/2),
    \end{equation}
    where $r$ is the radius of the n-sphere. 
\end{itemize}
These distances are depicted in Fig. \ref{fig:chords}. Since both define the same topology $\mathcal{S}^{n}$ they are considered to be equivalent, and can be utilized in PUMA to enforce stability when states are represented as unit quaternions. The great circle distance is especially suited to spherical spaces as it defines the \emph{shortest path}, or the \emph{geodesic}, between two points. Conversely, the chordal distance is straightforward to apply because it naturally arises when calculating the Euclidean distance at the output of $\psi_{\theta}$. This is due to $\mathcal{L} \subset \mathbb{R}^{m}$, where $m>n$ represents the output size of $\psi_{\theta}$.

\subsection{Comments on local stability}
Considering a one-dimensional sphere, it is notable that at $\alpha = \pi$ (with respect to the goal), both the great-circle distance and the chordal distance can decrease in two possible ways: by evolving either to the right or to the left (see Fig. \ref{fig:chords}). Consequently, to ensure these distances decrease in the region around $\alpha = \pi$, one of these options should be selected. However, this would require the dynamical system to instantly change its direction at this point, rendering $ f^{\mathcal{T}}_{\theta} $ discontinuous. Yet, in Theorem~\ref{theo:puma_surrogate}, we assume this function is continuous, as continuity is a prerequisite for the class-$ \mathcal{KL} $ upper bound $ \beta $ to also be continuous, which is necessary to prove stability.

Therefore, by extending this idea to higher-dimensional spheres, this implies that when employing these metrics, we can only enforce local asymptotic stability for $\mathcal{S}^{n} \setminus \{ p \}$, where $p$ is the point at $\alpha = \pi$. Moreover, due to the continuity of $f^{\mathcal{T}}_{\theta}$, this point must correspond to a zero, and, therefore, represents an unstable equilibrium. This property is not a flaw in the great-circle distance or the chordal distance. Rather, it is a result of the topology of $\mathcal{S}^{n}$, which does not allow for the existence of a single stable equilibrium. This is a consequence of the Poincar\'{e}-Hopf theorem \citep{guillemin2010differential}.
\begin{figure}[t]
    \centering
    \includegraphics[width=0.8\columnwidth]{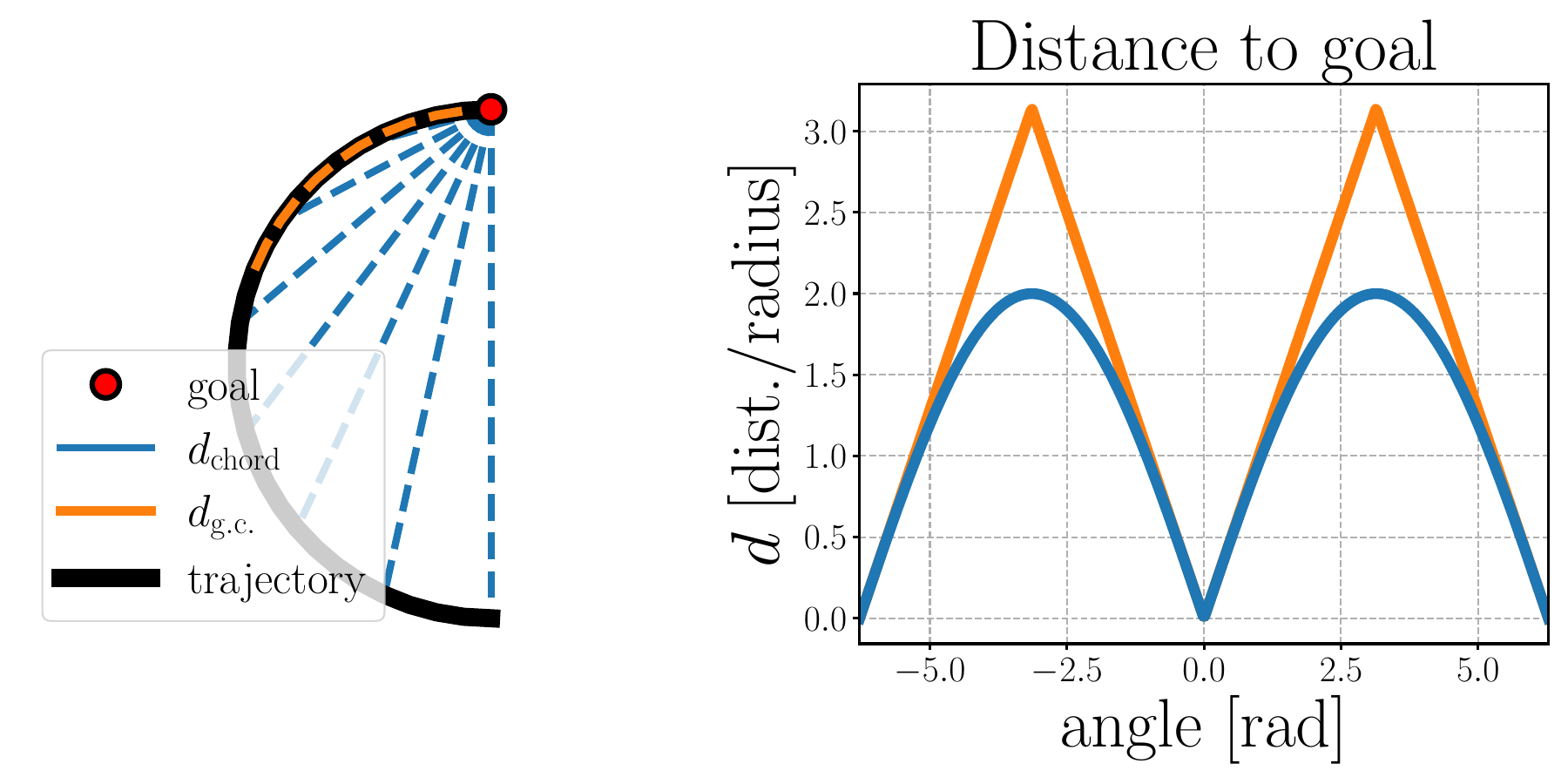}
    \caption{Left: A spherical trajectory towards a goal at the north pole. The chordal and great-circle distances at specific points are indicated as $d_{\text{chord}}$ and $d_{\text{g.c.}}$, respectively. Right: Distances as a function of the central angle between points.}
    \label{fig:chords}
\end{figure}
\begin{table*}[t]
\scriptsize
\centering
\caption{Hyperparameter optimization results of the different variations of PUMA.}
\label{tab:hyperparams}
\resizebox{\textwidth}{!}{
\begin{tabular}{lclccccccccc}
\hline
\textbf{Hyperparameter}                        & \textbf{Opt.?}                &  & \multicolumn{4}{c}{\textbf{\begin{tabular}[c]{@{}c@{}}Hand-tuned value / \\ initial opt. guess\end{tabular}}}                                                                           & \textbf{}                     & \multicolumn{4}{c}{\textbf{Optimized value}}                                                                                                                                            \\ \cline{4-7} \cline{9-12} 
\textbf{}                                      & \multicolumn{1}{l}{\textbf{}} &  & \textbf{Euc.}        & \textbf{Sph.}        & \textbf{\begin{tabular}[c]{@{}c@{}}Euc.\\ (2nd order)\end{tabular}} & \textbf{\begin{tabular}[c]{@{}c@{}}Sph.\\ (2nd order)\end{tabular}} & \multicolumn{1}{l}{\textbf{}} & \textbf{Euc.}        & \textbf{Sph.}        & \textbf{\begin{tabular}[c]{@{}c@{}}Euc.\\ (2nd order)\end{tabular}} & \textbf{\begin{tabular}[c]{@{}c@{}}Sph.\\ (2nd order)\end{tabular}} \\
\textbf{CONDOR}                                & \multicolumn{1}{l}{}          &  & \multicolumn{1}{l}{} & \multicolumn{1}{l}{} & \multicolumn{1}{l}{}                                                & \multicolumn{1}{l}{}                                                & \multicolumn{1}{l}{}          & \multicolumn{1}{l}{} & \multicolumn{1}{l}{} & \multicolumn{1}{l}{}                                                & \multicolumn{1}{l}{}                                                \\ \hline
Stability loss margin ($m$)                    & \cmark                        &  & 1.250e-8             & 1.250e-4             & 1.250e-4                                                            & 1.250e-4                                                            &                               & 5.921e-3             & 3.012e-05            & 2.424e-08                                                           & 2.919e-7                                                            \\
Triplet imitation loss weight ($\lambda$)      & \cmark                        &  & 1                    & 1                    & 1                                                                   & 1                                                                   &                               & 1.315e-1             & 3.496                & 1.022e-1                                                            & 4.473e-1                                                            \\
Window size imitation ($\mathcal{H}^{i}$)      & \cmark                        &  & 14                   & 14                   & 14                                                                  & 14                                                                  &                               & 13                   & 13                   & 14                                                                  & 14                                                                  \\
Window size stability ($\mathcal{H}^{s}$)      & \cmark                        &  & 4                    & 1                    & 1                                                                   & 1                                                                   &                               & 11                   & 13                   & 11                                                                  & 11                                                                  \\
Batch size imitation ($\mathcal{B}^{i}$)       & \xmark                        &  & 250                  & 250                  & 250                                                                 & 250                                                                 &                               & -                    & -                    & -                                                                   & -                                                                   \\
Batch size stability ($\mathcal{B}^{s}$)       & \xmark                        &  & 250                  & 250                  & 250                                                                 & 250                                                                 &                               & -                    & -                    & -                                                                   & -                                                                   \\
                                               & \multicolumn{1}{l}{}          &  & \multicolumn{1}{l}{} & \multicolumn{1}{l}{} & \multicolumn{1}{l}{}                                                & \multicolumn{1}{l}{}                                                & \multicolumn{1}{l}{}          & \multicolumn{1}{l}{} & \multicolumn{1}{l}{} & \multicolumn{1}{l}{}                                                & \multicolumn{1}{l}{}                                                \\
\textbf{Neural Network}                        & \multicolumn{1}{l}{}          &  & \multicolumn{1}{l}{} & \multicolumn{1}{l}{} & \multicolumn{1}{l}{}                                                & \multicolumn{1}{l}{}                                                & \multicolumn{1}{l}{}          & \multicolumn{1}{l}{} & \multicolumn{1}{l}{} & \multicolumn{1}{l}{}                                                & \multicolumn{1}{l}{}                                                \\ \hline
Optimizer                                      & \xmark                        &  & Adam                 & Adam                 & Adam                                                                & Adam                                                                &                               & -                    & -                    & -                                                                   & -                                                                   \\
Number of iterations                           & \xmark                        &  & 40000                & 40000                & 40000                                                               & 40000                                                               &                               & -                    & -                    & -                                                                   & -                                                                   \\
Learning rate                                  & \cmark                        &  & 1e-4                 & 1e-4                 & 1e-4                                                                & 1e-4                                                                &                               & 9.784e-5             & 8.574e-4             & 1.670e-4                                                            & 1.245e-4                                                            \\
Activation function                            & \xmark                        &  & GELU                 & GELU                 & GELU                                                                & GELU                                                                &                               & -                    & -                    & -                                                                   & -                                                                   \\
Num. layers ($\psi_{\theta}$, $\phi_{\theta}$) & \xmark                        &  & (3, 3)               & (3, 3)               & (3, 3)                                                              & (3, 3)                                                              &                               & -                    & -                    & -                                                                   & -                                                                   \\
Neurons/hidden layer                           & \xmark                        &  & 300                  & 300                  & 300                                                                 & 300                                                                 &                               & -                    & -                    & -                                                                   & -                                                                   \\
Layer normalization                            & \xmark                        &  & yes                  & yes                  & yes                                                                 & yes                                                                 &                               & -                    & -                    & -                                                                   & -                                                                   \\ \hline
\end{tabular}
}
\end{table*}
\subsection{Pose Control}
Until now, our discussion has centered on the applicability of our method for unit quaternions. However, practical control of a robot's pose requires simultaneous control of both its position (Euclidean) and orientation (non-Euclidean). This can be achieved by using a \emph{product metric}, i.e., a metric resulting from the Cartesian product of spaces. In this case, we are looking at the product $\mathbb{R}^{3} \times \mathcal{S}^{3}$.

A simple product metric is the sum of the metrics from each space in the product \citep{deza2009encyclopedia}, e.g., $d_{\mathbb{R}^{n}} + d_{\mathcal{S}^{n}}$ for $\mathbb{R}^{n} \times \mathcal{S}^{n}$, where $d_{\mathbb{R}^{n}}$ is a distance in $\mathbb{R}^{n}$ and $d_{\mathcal{S}^{n}}$ is a distance in $\mathcal{S}^{n}$.  However, it is not straightforward to do this in $\mathcal{L}$ without modifying the DNN structure. This is because the latent states $y_{t}$ are an entangled representation of the robot states $x_{t}$, making it impossible to simply add together the distances of the Euclidean and non-Euclidean parts in the latent space.

Interestingly, when the product metric is computed using manifolds of identical topology, such as $\mathbb{R} \times \mathbb{R}$, the resulting metric is \emph{equivalent} to the one obtained by directly computing the metric in the higher-dimensional space ($\mathbb{R}^{2}$ in this example) \citep{deza2009encyclopedia}. Hence, for such scenarios, there is no need to explicitly disentangle the states in $\mathcal{L}$; instead, we can compute the metric directly in the complete latent space. Our case, nevertheless, is more complex since the topologies of $\mathcal{S}^{3}$ and $\mathbb{R}^{3}$ are different. Fortunately, we found two ways of easily overcoming this limitation.

\subsubsection{Pose in Euclidean space}
We have observed that an Euclidean metric, the chordal distance, can generate spherical metrics in lower-dimensional manifolds. This becomes particularly relevant when a Euclidean metric is employed in $\mathbb{R}^{m}$, where $m$ exceeds the dimensionality of our state space (6 in our case). In this scenario, the metric is equivalent to $d_{\mathbb{R}^{3}} + d_{\mathbb{R}^{m'}}$ for $m' > 3$ and $3 + m'=m$. Given that $\mathcal{S}^{3}$ can be induced inside $\mathbb{R}^{m'}$, this suggests that $\mathbb{R}^{3} \times \mathcal{S}^{3}$ can be induced in $\mathbb{R}^{m}$ when using the Euclidean distance in this space.

\subsubsection{Pose in spherical space}
Finally, we also note that the great-circle distance can be employed to achieve the same objective. Suppose we compute this distance in $\mathcal{S}^{6}$. Then, it would be equivalent to $d_{\mathcal{S}^{3}} + d_{\mathcal{S}^{3}}$. Interestingly, a diffeomorphism can be found between $\mathbb{R}^{n}$ and a subset of $\mathcal{S}^{n}$, e.g., the \emph{stereographic projection} \citep{oprea2007differential}. This implies that it is feasible to use the metric $d_{\mathcal{S}^{3}}$ in $\mathcal{L}$, and find a valid representation for $\mathbb{R}^{3}$ within a subspace of $\mathcal{S}^{3}$. Consequently, the product space $\mathbb{R}^{3} \times \mathcal{S}^{3}$ can be represented within a subset of $\mathcal{S}^{6}$.
\section{Hyperparameter Optimization}
\label{appendix:hyperparam}
To optimize the hyperparameters of the different variations of PUMA employed in the LASA and LAIR datasets, we utilized the Tree Parzen Estimator \citep{bergstra2011algorithms}. This optimization method builds a probability model that facilitates the selection of the most promising set of hyperparameters in each optimization round. For the implementation of the Tree Parzen Estimator, we used the Optuna API \citep{akiba2019optuna}. 

To evaluate the selected sets of hyperparameters, we employ a loss function $\mathcal{L}_{\text{hyper}}$ composed of two components. The first component, $\mathcal{L}_{\text{acc}}$, assesses the accuracy of the trained model. This is done by computing the RMSE between the demonstrations and the trajectories simulated by the learned model, in a manner similar to the approach used in the experiments section. The second component, $\mathcal{L}_{\text{goal}}$, quantifies the average distance between the final points of trajectories, simulated using the learned model, and the target goal. Thus, we have:
\begin{equation}
\mathcal{L}_{\text{hyper}} = \mathcal{L}_{\text{acc}} + \gamma \mathcal{L}_{\text{goal}}
\end{equation}
where $\gamma$ is a weighting factor. After initial tests, we settled on $\gamma=3.5$.

To make the computationally intensive process of optimizing hyperparameters more feasible, we employed six strategies: 1) reducing the overhead of the objective function, 2) limiting the size of the evaluation set, 3) utilizing Bayesian optimization, 4) applying pruning techniques, 5) selecting a subset of hyperparameters for optimization, and 6) employing an optimization range. For details on the first five strategies, the reader is referred to \citet{perez2023stable}. The sixth strategy consists of restricting the hyperparameter search to a predefined range.

Table \ref{tab:hyperparams} presents the results of this optimization process. We can observe the hyperparameters that were optimized, their initial values (chosen based on preliminary tests), and their final optimized values. The ranges for hyperparameter optimization are as follows: 1) $m: [1e-9, 1e-1]$, 2) $\lambda: [1e-1, 10]$, 3) $\mathcal{H}^{i}: [1, 14]$, 4) $\mathcal{H}^{s}: [1, 14]$, and 5) learning rate: $[1e-5, 1e-3]$. Note that the variations using the boundary loss are not included in the table, since in those cases the same hyperparameters were employed and the boundary loss was added with a weight of $0.001$. The same holds for the behavioral cloning case. Lastly, regarding Sec. \ref{sec:boundary_non_euc} it is important to note that in every experiment involving spherical state spaces, the dynamical system was kept within the manifold by normalizing the forward Euler integration output.

\medskip

%
\bibliographystyle{IEEEtran}  
\bibliography{IEEEexample}  

\end{document}